\crefname{assumption}{assumption}{assumptions}
\crefname{observation}{observation}{observations}
\newtheorem{theorem}{Theorem}
\newtheorem{lemma}[theorem]{Lemma}
\newtheorem{corollary}[theorem]{Corollary}
\newtheorem{definition}[theorem]{Definition}
\newcommand{\eqnref}[1]{(\ref{#1})}
\newcommand\numberthis{\addtocounter{equation}{1}\tag{\theequation}}
\newcommand{\seqopt}{{\normalfont \textsc{SEQOPT}}\xspace}
\newcommand{\cliffjump}{{\normalfont \textsc{CliffJump}}\xspace}
\newcommand{\esp}{\mathrm{E}}
\DeclareMathOperator{\Hamming}{H}
\newcommand{\OI}{{\normalfont \textsc{OI}}\xspace}
\newcommand{\AM}{{\normalfont \textsc{AM}}\xspace}
\newcommand{\OW}{{\normalfont \textsc{OW}}\xspace}
\newcommand{\layer}{\mathcal{L}}%
\renewcommand*{\o}{\mathop{o}\limits}
\renewcommand*{\O}{\mathop{O}\limits}
\newcommand{\MAHH}{\textsc{MAHH}\xspace}
\newcommand{\MMAHH}{\textsc{MMAHH}\xspace}
\newcommand{\onemax}{{\normalfont \textsc{OneMax}}\xspace}
\newcommand{\cliff}{\textsc{Cliff}\xspace}
\newcommand{\jump}{\textsc{Jump}\xspace}
\newcommand{\trap}{\textsc{Trap}\xspace}
\newcommand{\R}{\ensuremath{\mathbb{R}}}
\newcommand{\N}{\ensuremath{\mathbb{N}}} 
\let\originalleft\left
\let\originalright\right
\renewcommand{\left}{\mathopen{}\mathclose\bgroup\originalleft}
\renewcommand{\right}{\aftergroup\egroup\originalright}
\title{Speeding Up Hyper-Heuristics With Markov-Chain Operator Selection and the Only-Worsening Acceptance Operator}
\author{
Abderrahim Bendahi$^1$\and
Benjamin Doerr$^2$\and
Adrien Fradin$^1$\And
Johannes F. Lutzeyer$^2$\\
\affiliations
$^1$École Polytechnique, Institut Polytechnique de Paris\\
$^2$Laboratoire d'Informatique (LIX), CNRS, École Polytechnique, Institut Polytechnique de Paris\\
\emails
\{firstname.lastname\}@polytechnique.edu
}
\begin{document}
{\sloppy

\maketitle

\begin{abstract}
The move-acceptance hyper-heuristic was recently shown to be able to leave local optima with astonishing efficiency  (Lissovoi et al., Artificial Intelligence (2023)). In this work, we propose two modifications to this algorithm that demonstrate impressive performances on a large class of benchmarks including the classic $\cliff_d$ and $\jump_m$ function classes. (i)~Instead of randomly choosing between the only-improving and any-move acceptance operator, we take this choice via a simple two-state Markov chain. This modification alone reduces the runtime on $\jump_m$ functions with gap parameter~$m$ from $\Omega(n^{2m-1})$ to $O(n^{m+1})$. (ii)~We then replace the all-moves acceptance operator with the operator that only accepts worsenings. Such a, counter-intuitive, operator has not been used before in the literature. However, our proofs show that our only-worsening operator can greatly help in leaving local optima, reducing, e.g., the runtime on Jump functions to $O(n^3 \log n)$ independent of the gap size. 
In general, we prove a remarkably good runtime of $O(n^{k+1} \log n)$ for our Markov move-acceptance hyper-heuristic on all members of a new benchmark class $\seqopt_k$, which contains a large number of functions having $k$ successive local optima, and which contains the commonly studied $\jump_m$ and $\cliff_d$ functions for $k=2$.

\end{abstract}

\textbf{Keywords:} Evolutionary algorithms, runtime analysis.

\section{Introduction}
\label{sec:introduction}

Selection hyper-heuristics are black-box optimization heuristics that function by combining different low-level heuristics. They were first used to solve difficult scheduling problems~\cite{CowlingKS00}, but then quickly found numerous other applications~\cite{BurkeGHKOOQ13,DrakeKOB20}.

The mathematical runtime analysis of hyper-heuristics has started around ten years ago~\cite{LehreO13}, predominantly discussing the impact of selecting different variation operators~\cite{DoerrLOW18,LissovoiOW20aaai}. More recently, hyper-heuristics having the choice between different acceptance operators were studied (although a first result can already be found in~\cite{LehreO13}). In particular, \cite{LissovoiOW19,LissovoiOW23} have shown that switching between an elitist selection (the \emph{only-improving operator \OI}) and accepting any new solution (the \emph{all-moves operator \AM}) can give excellent results. Specifically, they show that  the \emph{move-acceptance hyper-heuristic (\MAHH)} can optimize the $\cliff_d$ benchmark defined on bit-strings of length~$n$ in time $O(n^3)$, whereas comparably simple elitist evolutionary algorithm need time $\Omega(n^d)$; here $d$ is a difficulty parameter of the benchmark that can range from $2$ to $n$. A similar lower bound was shown for the Metropolis algorithm~\cite{DoerrERW23}.

However, such performance gains seem to heavily depend on the particular problem to be optimized. For the jump benchmark with difficulty parameter $m$, simple evolutionary algorithms find the optimum in expected time $O(n^m)$ \cite{DrosteJW02}, but the \MAHH needs $\Omega(n^{2m-1})$ \cite{DoerrDLS23} (for constant $m$). 

In this work, we propose two new ideas that help hyper-heuristics to leave local optima, and greatly improve their performance. We also observe that the proposed modifications resolve the difficulties detected in \cite{DoerrDLS23}. (i) From studying the proofs in \cite{DoerrDLS23}, we observe that the use of the random mixing strategy, that is, using the all-moves operator in each iteration independently with some probability~$p$, is problematic. The probability $p$ has to be small to allow for a sufficiently strong drift towards the optimum, but leaving a local optimum with radius $m$ requires $m-1$ successive uses of the AM operator, which contributes a factor of $p^{m-1}$ to the probability of successfully leaving the local optimum. To mitigate the influence of the required small rate of AM operator uses, we design a simple two-state Markov chain governing the selection of the operators. In other words, for each of the two acceptance operators, we have a switching probability. In each iteration, we use this value to decide whether we should switch to the other operator or continue with the current operator. By taking a value such as $1/2$ for the probability of switching away from the AM operator, longer stretches of using this operator become more likely, which eases the leaving of local optima with larger basins of attraction. We call the resulting algorithm \emph{Markov move-acceptance hyper-heuristic (\MMAHH)}. As an example of the usefulness of this approach, we show that the \MMAHH choosing the two operators \OI and \AM (with the long-term rates of the operators as in previous works) optimizes \jump functions in time $O(n^{m+1})$, a considerable speed-up from $\Omega(n^{2m-1})$. We note that in general this way of choosing between operators has been used before, but to the best of our knowledge no mathematical runtime result has been shown, and it has not been used in conjunction with acceptance operators until now. 

We then propose the only-worsening (\OW) acceptance operator. It accepts the new solution only if it is strictly worse than the parent. While this operator contradicts the idea of incremental optimization, in our context it is less counter-intuitive than what appears at first. We recall that the main working principle of the \AM operator exploited in previous works is that it allows the algorithm to leave local optima. For this aim, searching for inferior solutions is, in fact, a logical approach. This intuitive consideration is supported by our mathematical runtime analysis, which in particular shows that the \MMAHH with the two acceptance operators \OI and \OW optimizes any \jump and \cliff function in expected time $O(n^3 \log n)$. We extend this result to a new benchmark called $\seqopt_k$, which consists of a broad class of pseudo-Boolean functions having $k$ local optima that in particular includes the classic \onemax, \jump, \cliff, and \trap benchmarks. We show that our \MMAHH optimizes any function in $\seqopt_k$ in expected time $O(n^{k+1} \log n)$.

With this work, we introduce two new ideas for the design of effective move-acceptance hyper-heuristics. We believe that in particular, the only-worsening operator to leave local optima, could give rise to further theoretical study and possibly the design of new benchmarks designed to test its limits. We furthermore hope that future consideration of our $\seqopt_k$ benchmark enables a joint observation of the effectiveness of hyper-heuristics that subsumes results on the different commonly studied \jump and \cliff benchmarks.

\section{Preliminaries}

In this section, we briefly recall the classic benchmark problems relevant for this work, define our new benchmark $\seqopt_k$, define our Markov move-acceptance hyper-heuristic, the only-worsening acceptance operator, and provide the tools to analyze our hyper-heuristics.

\subsection{Benchmarks}\label{subsec:benchmarks}

As standard in the theory of randomized search heuristics~\cite{NeumannW10,AugerD11,Jansen13,ZhouYQ19,DoerrN20}, we regard pseudo-Boolean optimization problems, that is, we aim at maximizing functions $f$ that map bit-strings $x \in \{0, 1\}^n$ with a fixed positive length $n \in \N_{>0} = \{1, 2, 3, \ldots\}$ to a numerical value $f(x) \in \R$. When using asymptotic notation, this shall always be for $n \to \infty$.

Well-known examples of such functions in the theory literature include the following \onemax, \trap, $\cliff_d$ and $\jump_m$ benchmarks.



For a bit-string $x = (x_1, \ldots, x_n) \in \{0, 1\}^n$, let $\| x \|_1 = x_1 + x_2 + \cdots + x_n$ denote the number of ones in $x$. The following functions are standard benchmarks in the theory of randomized search heuristics.
    \begin{align*}
        & \onemax \colon  x \mapsto \| x \|_1 
        ; \\
        & \cliff_d \colon  x \mapsto \begin{cases}
            \| x \|_1, & \text{if $\| x \|_1 \leq n - d$;} \\ 
            \| x \|_1 - d + \frac{1}{2}, & \text{otherwise;}
        \end{cases} \\ 
        & \jump_m \colon  x \mapsto \begin{cases}
            m + \| x \|_1, & \text{if $\| x \|_1 \in [0..n-m] \cup\{n\}$;} \\ 
            n - \| x \|_1, & \text{otherwise.}
        \end{cases}
    \end{align*}
Here $d \in [1..n-1]$ and $m \in [1..n]$ are difficulty parameters of the \cliff and \jump benchmark. The completely deceptive function $\jump_n$ is also called $\trap$. All these functions have $x^* = (1, \dots, 1)$ as unique global optimum (maximum), and all are \emph{functions of unitation}, that is, the objective value of a solution depends only on the number of ones. This motivates the definition of the \emph{$k$-th layer} as
    \[ \layer_k := \{ x \in \{0, 1\}^n \mid \Hamming(x, x^*) = n - \|x\|_1 = k\} \]
for $k \in [0..n]$, where we used $\Hamming(\cdot, \cdot)$ to denote the Hamming distance of two bit-strings. Note that $\layer_k$ is the set of all bit-strings at distance $k$ from the global maximum $x^*$, that is, the numbering starts at the global optimum.

All benchmarks above also have the property that they are composed of intervals of layers in which the function is only increasing or only decreasing; for $\jump_m$ and $\cliff_d$ these intervals have lengths $2$, $m-1$, and $n-m+1$. In the following two definitions, we extend this property to arbitrary interval numbers and lengths and obtain the very general benchmark $\seqopt_k$, having $k$ successive local minima and maxima.




\begin{definition}[Monotonicity across layers]\label{def:monotonicity-layers}
    Let $h \in [0..n-1]$ and $f \colon \{0, 1\}^n \to \R$. 
    We say that $f$ is increasing (resp.\ decreasing) between layers $\layer_{h + 1}$ and $\layer_h$ 
    if for any $x \in \layer_h$ and $y \in \layer_{h + 1}$ we have
        \[ f(x) > f(y) \text{ (resp. $f(x) < f(y)$)}.\]
    We denote this by $\layer_h \overset{f}{\succ} \layer_{h + 1}$ (resp. $\layer_h \overset{f}{\prec} \layer_{h + 1}$).
\end{definition}




\begin{definition}[The \seqopt benchmark]\label{def:seqopt}
    Let $n \geq 2$, $k \in [ 0 .. n - 2 ]$ and $d_0 = n  > d_1 > d_2 > \dots > d_k > d_{k+1} = 0$ be integers. We define {\normalfont $\seqopt_k(d_1, \ldots, d_k)$} to be the set of all functions $f \colon \{0, 1\}^n \to \R$ such that
    \begin{enumerate}
        \item $x^* = (1, \dots, 1)$ is the unique global maximum of $f$,
        \item for any $\ell \in [0 .. k]$, if $k - \ell$ is even then
            \[ \layer_{d_{\ell}} \overset{f}{\prec} \cdots \overset{f}{\prec} \layer_{d_{\ell + 1}}, \]
        and if $k - \ell$ is odd, $f$ satisfies
            \[ \layer_{d_{\ell}} \overset{f}{\succ} \cdots \overset{f}{\succ} \layer_{d_{\ell + 1}}. \]

        The union of these classes of functions, for fixed $k$, will be denoted by
            \[ \seqopt_k = \bigcup_{n > d_1 > \cdots > d_k > 0} \seqopt_k(d_1, \ldots, d_k).\]
    \end{enumerate}
\end{definition}



Note 
that we have $\onemax \in \seqopt_0$, $\trap \in \seqopt_1(1)$, $\cliff_d \in \seqopt_2(d, d - 1)$ and, for $m < n$, $\jump_m \in \seqopt_2(m, 1)$.

\subsection{The Markov Move-Acceptance Hyper-Heuristic}\label{subsec:mmahh}

We now introduce a novel algorithm called the \emph{Markov Move-Acceptance Hyper-Heuristic} algorithm (\MMAHH). We recall that the Move-Acceptance Hyper-Heuristic algorithm (\MAHH) first proposed in \cite{LehreO13} and then intensively studied in \cite{LissovoiOW23} is a simple randomized local search heuristic which randomly mixes between the \OI and \AM operators, that is, in each iteration independently is chooses between the \AM operator (with some, usually small, probability~$p$) and the \OI operator (with probability $1-p$).  

From studying the proof of the unsatisfactory $\Omega(n^{2m-1})$ runtime bound for this algorithm on the $\jump_m$ benchmark, we learn that the main reason for this negative performance is the low probability of having $m-1$ consecutive uses of the \AM operator, which stems from the independent choice of the operators. To allow for longer phases of using the same operator, our \MMAHH leverages a simple $2$-state Markov chain to govern the selection of the acceptance operators. In other words, if the current operator is \OI, this operator is kept for the next iteration with probability $1-p$, but changed to \AM with probability $p$. The switching probability from \AM to \OI is denoted by~$q$. See Algorithm~\ref{alg:mmahh} for the pseudocode of the \MMAHH, where the $\textsc{Markov}$ operator refers to sampling the Markov chain illustrated in Figure~\ref{fig:markovchain}.

We also introduce a new acceptance operator, only-worsening (\OW), to substitute the \textsc{AllMoves} (\AM) operator. This new operator works in the same fashion as the well-known \textsc{Only\-Improving} (\OI) acceptance operator except that \OW only accepts worsening moves, i.e., moves decreasing the function value. The idea of this, counter-intuitive, operator is to speed-up leaving local optima. When studying the previous runtime analyses for the \MAHH, we see that they profit from the \AM operator in that it allows the algorithm to leave local optima. If this is the target, then the \OW operator should be even better suited, and this is what we will observe in this work.


\begin{figure}[t]
    \hypertarget{fig1}{}
    \centering
    \begin{tikzpicture}[->, >=stealth', auto, node distance = 4cm]
        \node[circle, minimum size = 1.2cm, draw](0) at (0, 0) {\OI};

        \node[circle, minimum size = 1.2cm, draw](1) at (3, 0) {\OW};

        \path (0) edge [loop left] node [left] {$1 - p$} (0);
        \path (1) edge [loop right] node [right] {$1 - q$} (1);

        \path (0) edge [bend right] node [below] {$p$} (1);
        \path (1) edge [bend right] node [above] {$q$} (0);

    \end{tikzpicture}
    \caption{\emph{Transition probabilities between the two operators of the \MMAHH, here \OI and \OW}.}
    \label{fig:markovchain}
\end{figure}
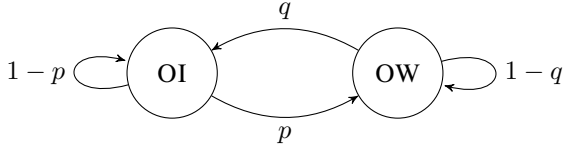


\subsection{Notation for the Analysis of the \MMAHH}

Throughout this work, we denote by $x_t$, $y_t$ and $s_t$, respectively, the current solution at time~$t$, its Hamming distance to $x^*$ (i.e., $y_t = \Hamming(x_t, x^*)$), and the move-acceptance operator used at time~$t$. We initialize with a random solution $x_0 \sim \mathcal{U}(\{0,1\}^n)$ and $s_0 = \OI$. We always denote by $f \colon \{0,1\}^n \to \R$ the function to be maximized.
Moreover, the transition probabilities from \OI to \OW and \OW to \OI will be denoted respectively by $p$ and $q$ with $0 < p, q < 1$.
 
We call $T := \inf \{ t \geq 0 \mid x_t = x^* \}$ the runtime, that is,  the first time we reach the global maximum. Let $T_s^{(k)}$ denotes the $k$-th switching time between the operators. Hence  $T_s^{(0)} = 0$ and for any $k \geq 0$, we have $T_s^{(k + 1)} = \inf \{ t \geq T_s^{(k)} \mid s_t \neq s_{T_s^{(k)}} \}$. We also define $Z_k  = T_s^{(k + 1)} - T_s^{(k)}$ to be the length of the $k$-th \emph{phase}, where a \emph{phase} is a (maximum) time interval in which the same operator is used.


This notation is summarized in \Cref{tab:notation} and illustrated in \Cref{fig:drawing}.





\begin{table}
  \caption{Table of Notation.}
  \label{tab:notation}
  \begin{tabular}{cl}
    \toprule
    \textbf{Symbol} & \textbf{Meaning} \\
    \midrule
    $T$ & The first hitting time of $x^*$ \\
    $T_s^{(k)}$ & The starting time of the $k$-th phase \\
    $p, q$ & Transition probabilities of the Markov chain \\
    $Z_k$ & The length of the $k$-th phase \\
    $f$ & A function with a unique global maximum \\ 
    $x^*$ & The global maximum of $f$, $x^* = \{1\}^n$ \\
    $x_t$ & The bit-string during the $t$-th iteration \\
    $y_t$ & The Hamming distance to $x^*$ at time $t$ \\
    $s_t$ & The move-acceptance operator at time $t$\\
    $x_0$ &  The initial bit-string, $x_0 \sim \mathcal{U}(\{0,1\}^n)$ \\
    $s_0$ & The initial move-acceptance operator, $s_0 = \OI$\\
  \bottomrule
\end{tabular}
\end{table}

\begin{figure}
        \hypertarget{fig2}{}
        \centering
        \resizebox{\columnwidth}{!}{%
            \begin{tikzpicture}
                \def\gap{0}; \def\void{3};
                \def\x{0};
                \def\la{2.5}; \def\lb{4.5}; \def\lc{1.5}; \def\ld{3.5};
        
                \node (A1) at (\x, 0) {}; \node (A2) at (\x+\la, 0) {};
                \node (B1) at (\x+\la+\gap/2, 0) {}; \node (B2) at (\x+\la+\lb+\gap/2, 0) {};
                \node (C1) at (\x+\la+\lb+2*\gap/2, 0) {}; \node (C2) at (\x+\la+\lb+\lc+2*\gap/2, 0) {};
                \node (D1) at (\x+\la+\lb+3*\gap/2+\void, 0) {}; \node (D2) at (\x+\la+\lb+\lc+\ld+3*\gap/2+\void, 0) {};
        
                \node[above right = 1cm and -0.35cm of A1, red] (T0) {$T_s^{(0)}$};
                \node[above right = 1cm and -0.35cm of B1, blue] (T0) {$T_s^{(1)}$};
                \node[above right = 1cm and -0.35cm of C1, red] (T0) {$T_s^{(2)}$};
                \node[above right = 1cm and -0.35cm of C2, blue] (T0) {$T_s^{(3)}$};
                \node[above right = 1cm and -0.35cm of D1] (T0) {$T_s^{(k^* - 1)}$};
                \node[above right = 1cm and -0.35cm of D2] (T0) {$T_s^{(k^*)}$};
    
                \node[above left = 0.1cm and 0.6cm of D2, purple] (T) {$T$};
                \draw[thick, purple] (T.center) ++(0, -0.35) -- ++(0, -0.4);
                
                \draw[<->, red] (A1) -- (A2) node[midway, below = 0.2cm] {$Z_0$} node[midway, above] {$s_0$};
                \draw[<->, blue] (B1) -- (B2) node[midway, below = 0.2cm] {$Z_1$} node[midway, above] {$s_1$};
                \draw[<->, red] (C1) -- (C2) node[midway, below = 0.2cm] {$Z_2$} node[midway, above] {$s_2$};
                \path (C2) -- node[auto=false]{\ldots} (D1);
                \path (D2) -- node[auto=false]{\ldots} ++(1, 0);
                \draw[<->] (D1) -- (D2) node[midway, below = 0.2cm] {$Z_{k^* - 1}$} node[midway, above] {$s_{k^* - 1}$};
        
                \draw[thick, dashed, red] (A1.center) ++(0, 1) -- ++(0, -2);
                \draw[thick, dashed, blue] (B1.center) ++(0, 1) -- ++(0, -2);
                \draw[thick, dashed, red] (C1.center) ++(0, 1) -- ++(0, -2);
                \draw[thick, dashed, blue] (C2.center) ++(0, 1) -- ++(0, -2);
                \draw[thick, dashed, gray] (D1.center) ++(0, 1) -- ++(0, -2);
                \draw[thick, dashed, gray] (D2.center) ++(0, 1) -- ++(0, -2);
            \end{tikzpicture}
        }%
        \caption{\emph{A generic setting, depicting the phases (red or blue), their length $(Z_k)_{k \in \N}$, the switching times $( T_s^{(k)})_{k \in \N}$ and the stopping time $T$ occurring during the phase $k^*$}.}
        \label{fig:drawing}
    \end{figure}
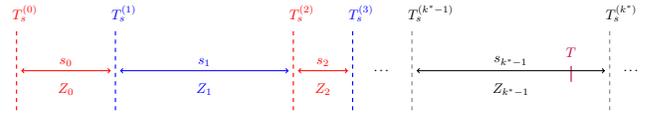

\begin{algorithm}
\DontPrintSemicolon
\caption{{The Markov move-acceptance hyper-heuristic with the acceptance operators \OI and \OW.}}\label{alg:mmahh}
\SetKwProg{Init}{Initialization}{:}{}
\Init{}{
    $t \gets 0$\;
    $x_0 \sim \mathcal{U}(\{0, 1\}^n)$, a uniformly sampled bit-string\;
    $s_0 \gets \OI$\;
}

\vspace{.5\baselineskip}
\While{true}{
    $x' \gets \textsc{RamdomOneBitFlip}(x_t)$\;
    \uIf{$s_t = \OW$ \textbf{and} $f(x') < f(x_t)$}{
        $x_{t + 1} \gets x'$\;
    }\ElseIf{$s_t = \OI$ \textbf{and} $f(x_t) < f(x')$}{
        $x_{t + 1} \gets x'$\;
    }
    $s_{t + 1} \gets \textsc{Markov}(s_t)$\; \label{alg:mmahh-markov}
    $t \gets t + 1$\;
}
\end{algorithm}

\subsection{Tools for the Analysis of the \MMAHH
}\label{subsec:technical-lem}

We now develop the tools we need to analyze the \MMAHH. 
Unfortunately, the generality of the Markov chain of our setup disallows the use of the methods previously employed in the analysis of the \MAHH, which were mostly drift arguments based on the fitness of the current solution. Nonetheless, with some mathematical effort, we manage to obtain very precise estimates of the quantities of interest.





\paragraph{Probability of improvement in one phase.} 
We start by computing the probabilities that a single phase of \OI usages leads to a certain fitness improvement. From the way we constructed the Markov chain, the length $Z_k$ of each phase is independent from the solution $x_{T_s^{(k)}}$ it starts with (and hence its fitness $y_{T_s^{(k)}}$), and follows a geometric law.

For $0 \leq h, k \leq n$, let $p_k^h = \Pr[y_{T_s^{(1)}} \leq h \mid y_0 = k, s_0 = \OI]$ denote the probability to reach $\layer_h$ in one \OI phase when starting in state \OI in $\layer_k$, assuming that we optimize the \onemax benchmark. The following computation of these probabilities is a cornerstone of our analysis.
\begin{lemma}\label{lem:oi-prob}
    For all $0 < p < 1$ and  $0 \leq h, k \leq n$, we have
    \begin{align*}
        & p_k^h = \begin{cases}1, & \text{if $k \leq h$;} \\ \frac{1}{1 - p} \frac{\Gamma(k + 1) \Gamma\left( \frac{n p}{1 - p} + h + 1 \right)}{\Gamma(h + 1) \Gamma\left( \frac{n p}{1 - p} + k + 1 \right)}, & \text{otherwise.} \end{cases} 
    \end{align*}
\end{lemma}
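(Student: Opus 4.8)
The plan is to read $p_k^h$ as a hitting-time probability for the one-sided random walk induced by the \OI operator on \onemax, and then to evaluate it by a generating-function computation. The case $k \le h$ is immediate: under \OI the only accepted one-bit flips on \onemax are those turning a $0$ into a $1$, so the distance $y_t$ is non-increasing throughout the first phase, whence $y_{T_s^{(1)}} \le y_0 = k \le h$ surely and $p_k^h = 1$. So assume $k > h$. By the same monotonicity the walk $y_t$ can only decrease, and only by one layer at a time, so it necessarily visits $\layer_h$ before ever going below it. Writing $\tau_h$ for the number of \OI iterations until $\layer_h$ is first reached when starting from a point of $\layer_k$, and $Z_0 = T_s^{(1)}$ for the length of the first phase, we have $y_{T_s^{(1)}} \le h$ exactly when $\tau_h \le Z_0$, i.e. $p_k^h = \Pr[\tau_h \le Z_0]$.

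The key structural point — and what makes this clean, in contrast to the random-mixing \MAHH — is that the operator sequence $(s_t)$ is generated by a Markov chain that never consults the current solution. Hence $Z_0$ is independent of the \OI trajectory (and the law of $\tau_h$ depends on the start only through $k$, as \onemax is a function of unitation), with $Z_0 \sim \Geom(p)$, so $\Pr[Z_0 \ge i] = (1-p)^{i-1}$ for $i \ge 1$. Conditioning on the value of $\tau_h$ then gives
\[
 p_k^h \;=\; \sum_{i \ge 1} \Pr[\tau_h = i]\,(1-p)^{i-1} \;=\; \frac{1}{1-p}\,\expectation\big[(1-p)^{\tau_h}\big],
\]
so it remains to evaluate the probability generating function of $\tau_h$ at the point $1-p$.

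For that I would split the descent layer by layer: by the strong Markov property of the \OI walk, $\tau_h = \sum_{j=h+1}^{k} G_j$, where $G_j$ is the number of \OI iterations spent in $\layer_j$ before moving down to $\layer_{j-1}$; since from $\layer_j$ a flip is accepted precisely when it hits one of the $j$ zero-bits, the $G_j$ are independent with $G_j \sim \Geom(j/n)$. Using the standard identity $\expectation[z^G] = rz/(1-(1-r)z)$ for $G \sim \Geom(r)$, a one-line simplification with $r = j/n$ and $z = 1-p$ yields
\[
 \expectation\big[(1-p)^{G_j}\big] \;=\; \frac{(j/n)(1-p)}{1-(1-j/n)(1-p)} \;=\; \frac{j(1-p)}{np + j(1-p)} \;=\; \frac{j}{c+j}, \qquad c := \frac{np}{1-p}.
\]
Multiplying over $j = h+1,\dots,k$ and writing the two resulting products as $\prod_{j=h+1}^{k} j = \Gamma(k+1)/\Gamma(h+1)$ and $\prod_{j=h+1}^{k}(c+j) = \Gamma(c+k+1)/\Gamma(c+h+1)$ gives $\expectation[(1-p)^{\tau_h}] = \Gamma(k+1)\Gamma(c+h+1)/\big(\Gamma(h+1)\Gamma(c+k+1)\big)$; substituting into the identity above produces exactly the claimed expression. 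I do not anticipate a genuine obstacle: once the reduction $p_k^h = \Pr[\tau_h \le Z_0]$ is in place everything is a routine computation, and the only real care is needed in justifying the independence of $Z_0$ from the \OI walk and in keeping the geometric conventions and off-by-one indices consistent (matching $\Pr[Z_0 \ge i] = (1-p)^{i-1}$ with the support $\{1,2,\dots\}$ of the $G_j$).
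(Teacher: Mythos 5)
Your proof is correct, and it reaches the same telescoping product $\prod_{j=h+1}^{k} j/(c+j)$ as the paper, but by a genuinely different route. The paper conditions on the phase length $\ell$, sets up the two-variable recurrence $p^h_{k,\ell} = \frac{n-k}{n}p^h_{k,\ell-1} + \frac{k}{n}p^h_{k-1,\ell-1}$ by first-step analysis, sums it against the geometric law of $Z_0$ to obtain a one-variable fixed-point equation $p^h_k = p^h_{k,1}p + \frac{n-k}{n}(1-p)p^h_k + \frac{k}{n}(1-p)p^h_{k-1}$, and solves that recurrence down to the base case $p^h_{h+1}$. You instead recast $p^h_k$ as $\Pr[\tau_h \le Z_0] = \frac{1}{1-p}\expectation[(1-p)^{\tau_h}]$, decompose the hitting time $\tau_h$ into independent layer-exit times $G_j \sim \Geom(j/n)$, and multiply their probability generating functions at $z = 1-p$; your per-layer factor $j/(c+j)$ is exactly the ratio $p^h_j/p^h_{j-1}$ that the paper extracts from its fixed-point equation, so the two computations are algebraically equivalent, but your framing makes the product structure transparent from the outset and avoids solving any recurrence. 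What the paper's version buys is that the intermediate objects $p^h_{k,\ell}$ and the explicit recurrences are reused elsewhere in its drift computations, whereas your version buys a cleaner conceptual identity (the answer is $\frac{1}{1-p}$ times a PGF) at the cost of leaning on a coupling you leave implicit: $\tau_h$ is the hitting time of the \emph{virtual} walk driven by the bit-flip proposals alone, and you need that this walk agrees with the actual trajectory for all $t \le Z_0$ and is independent of $Z_0$. You do give the right reason (the operator chain never consults the current solution, which is the content of the paper's Lemma~\ref{lem:independance}), so this is a matter of writing one more sentence rather than a gap.
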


The next key lemma shows that $p = \Theta(\frac{1}{n \log(n)})$, that is, a quasi-linear length of the \OI phase, suffices to have a constant probability $p^0_n$ to optimize \onemax in one \OI phase even when starting in the all-zero string.

\begin{lemma}\label{lem:limit}
    Let $c > 0$ be a constant and let $p = \frac{1}{c n \log(n)}$. Then $p^0_n = \left( 1 + o(1) \right)e^{-1 / c}$. 
\end{lemma}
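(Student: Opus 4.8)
The plan is to start from the closed form in \Cref{lem:oi-prob}, specialised to $k = n$, $h = 0$. Writing $a := \frac{np}{1-p}$, that lemma gives
\[
p_n^0 \;=\; \frac{1}{1-p}\cdot\frac{\Gamma(n+1)\,\Gamma(a+1)}{\Gamma(a+n+1)}.
\]
The first step is to convert the ratio of $\Gamma$-values into a finite product. Since $\Gamma(a+n+1) = \Gamma(a+1)\prod_{j=1}^{n}(j+a)$ and $\Gamma(n+1) = \prod_{j=1}^{n} j$, everything collapses to
\[
p_n^0 \;=\; \frac{1}{1-p}\prod_{j=1}^{n}\frac{j}{j+a}
      \;=\; \frac{1}{1-p}\prod_{j=1}^{n}\left(1+\frac{a}{j}\right)^{-1}.
\]
(Heuristically $\prod_{j\le n}(1+a/j)^{-1}\approx n^{-a}\approx e^{-1/c}$, which is exactly the claimed value; the work is in making this rigorous with $a$ depending on $n$.)

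Next I would insert $p = \frac{1}{cn\log n}$, which yields $a = \frac{1}{c\log n}\bigl(1+O(\tfrac{1}{n\log n})\bigr) = \Theta(1/\log n)$; in particular $a\to 0$ and $\frac{1}{1-p} = 1+o(1)$. Taking the logarithm of the product and using the expansion $\log(1+a/j) = \frac{a}{j} + O(a^2/j^2)$, which holds uniformly in $j$ because $a/j \le a \to 0$, gives
\[
-\log\prod_{j=1}^{n}\left(1+\frac{a}{j}\right)
 \;=\; -a\,H_n + O\!\left(a^2\sum_{j=1}^{n}\frac{1}{j^2}\right)
 \;=\; -a\,H_n + O(a^2),
\]
where $H_n = \sum_{j=1}^{n} 1/j$ is the $n$-th harmonic number. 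Substituting $H_n = \log n + \gamma + O(1/n)$ (natural logarithm) and the value of $a$, we get $a\,H_n = \bigl(\frac{1}{c\log n}+O(\tfrac{1}{n\log^2 n})\bigr)\bigl(\log n + \gamma + O(1/n)\bigr) = \frac{1}{c} + o(1)$, while $a^2 = O(1/\log^2 n) = o(1)$. Hence $\prod_{j=1}^{n}(1+a/j)^{-1} = e^{-1/c + o(1)} = (1+o(1))e^{-1/c}$, and multiplying by $\frac{1}{1-p} = 1+o(1)$ closes the argument.

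The computations here are routine; the only point that needs a little care is precisely that $a$ is not constant but shrinks like $1/\log n$, so one must confirm that the second-order term of $\log(1+a/j)$ is genuinely negligible after summation. Because $\sum_{j\ge 1} j^{-2}$ converges, this contributes only $O(a^2) = O(1/\log^2 n)$, which is harmless, so I do not anticipate any deeper obstacle.
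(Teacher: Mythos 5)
Your proof is correct, but it takes a genuinely different route from the paper's. The paper keeps the Gamma-function form from \Cref{lem:oi-prob}, observes $\frac{np}{1-p}+n = \frac{n}{1-p}$, and applies Stirling's approximation to $n!$ and $\Gamma\left(\frac{n}{1-p}+1\right)$, then carefully expands the resulting exponent $n\log n - n + \frac{n}{1-p}\left(\log(1-p)+1-\log n\right)$ to extract $-1/c + o(1)$. You instead unwind the Gamma ratio back into the finite product $\prod_{j=1}^{n}\frac{j}{j+a}$ with $a=\frac{np}{1-p}$ (which is in fact the intermediate form already present in the paper's proof of \Cref{lem:oi-prob}), take logarithms, and use $\log(1+a/j)=\frac{a}{j}+O(a^2/j^2)$ together with $H_n=\log n+\gamma+O(1/n)$ and the convergence of $\sum_j j^{-2}$. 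Your error control is sound: the uniformity of the expansion for $a/j\le a\to 0$ is legitimate, the aggregate second-order error is $O(a^2)=O(1/\log^2 n)=o(1)$, and $aH_n=\frac{1}{c}+o(1)$ follows from the stated asymptotics. Your approach is the more elementary of the two — it needs no Stirling formula, only harmonic-number asymptotics — and it makes the dependence of $a$ on $n$ completely transparent; the paper's Stirling route is arguably more mechanical but requires a somewhat delicate cancellation in the exponent. Both are valid proofs of the same statement from the same starting point.
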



From a simple domination argument, exploiting that our hyper-heuristics have to visit all intermediate levels, we immediately obtain the following minimality statement.
\begin{lemma}[Minimality of $p^0_n$]\label{lem:minimality-p^0_n}
For all $0 \leq h, k \leq n$, we have $p_k^h \geq p_n^0$.
\end{lemma}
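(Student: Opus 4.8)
The plan is to deduce the claim from two monotonicity properties of the quantities $p_k^h$. If $k \le h$ then $p_k^h = 1$ and there is nothing to prove, so assume $h < k$ (in particular $k \ge 1$); I would then show $p_k^h \ge p_k^0 \ge p_n^0$. The first inequality is immediate from the structure of an \OI phase on \onemax: a random one-bit flip changes the Hamming distance $y_t$ to $x^*$ by exactly $\pm 1$, and \OI accepts the offspring only when $\onemax$ strictly increases, i.e.\ when $y$ decreases by $1$; hence $(y_t)$ is non-increasing with unit downward steps throughout the phase, so $y_{T_s^{(1)}} = \min_{0 \le t \le T_s^{(1)}} y_t$. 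Therefore the event $\{y_{T_s^{(1)}} = 0\}$ is contained in $\{y_{T_s^{(1)}} \le h\}$, which gives $p_k^0 \le p_k^h$.

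The core of the argument is $p_k^0 \ge p_n^0$, and here I would use exactly the domination idea hinted at: the process has to visit all intermediate layers. Since, as noted above, the trajectory of $y$ is non-increasing with unit steps, a phase started in $\layer_n$ can reach $\layer_0$ only after first passing through $\layer_k$. Let $\tau$ be the first time the current solution lies in $\layer_k$. On the event $\{\tau < T_s^{(1)}\}$, the continuation of the run is distributed exactly like a fresh \OI phase started in $\layer_k$: the phase length $T_s^{(1)} = Z_0$ is geometric and independent of the trajectory of $y$ (the operator is \OI throughout the phase regardless of the solutions seen), so by memorylessness the residual length $T_s^{(1)} - \tau$ is again geometric with parameter $p$; and because \onemax is a function of unitation and a random one-bit flip plus \OI acceptance is invariant under permuting bit positions, the conditional law of the continuation depends on $x_\tau$ only through its layer $k$. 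Consequently $p_n^0 = \Pr[\tau < T_s^{(1)}] \cdot p_k^0 \le p_k^0$. Chaining the two inequalities yields $p_k^h \ge p_n^0$ in all cases.

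I expect the only delicate point to be making this ``restart'' step fully rigorous. Cleanly, one decouples the pure-\OI Markov chain $(\tilde y_t)_{t\ge 0}$ on $\{0,\dots,n\}$ (which coincides with $(y_t)$ up to time $T_s^{(1)}$) from the geometric variable $Z_0 \sim \Geom(p)$, applies the strong Markov property to $(\tilde y_t)$ at the stopping time $\tau$, and combines it with the memorylessness of $\Geom(p)$; after that, writing everything in terms of three independent random variables (the hitting time of $\layer_k$ from $\layer_n$, an independent copy of the hitting time of $\layer_0$ from $\layer_k$, and $Z_0$) makes the decomposition $p_n^0 = \Pr[\tau < Z_0]\, p_k^0$ transparent. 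As a consistency check, and an alternative route avoiding any probabilistic bookkeeping, both monotonicities can be read directly off the closed form in \Cref{lem:oi-prob}: with $a := \tfrac{np}{1-p} > 0$ one gets $p_k^0 / p_k^h = \prod_{j=1}^{h} \tfrac{j}{a+j} \le 1$ and $p_n^0 / p_k^0 = \prod_{j=k+1}^{n} \tfrac{j}{a+j} \le 1$; however, the domination argument is shorter and does not rely on the explicit Gamma-function expression.
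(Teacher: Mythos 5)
Your proof is correct, but it takes a genuinely different route from the paper's. The appendix proof works entirely algebraically from the closed form in \Cref{lem:oi-prob}: it shows that $p_k^h$ is non-increasing in $k$ and non-decreasing in $h$ by computing the ratio of consecutive terms (each ratio is a factor of the form $\tfrac{j}{\,np/(1-p)+j\,}<1$), exactly as in the ``consistency check'' you relegate to your last sentence. Your main argument is instead the purely probabilistic one: $p_k^h \ge p_k^0$ by event inclusion (indeed $\{y_{T_s^{(1)}}\le 0\}\subseteq\{y_{T_s^{(1)}}\le h\}$ is immediate, without even invoking the unit-step structure), and $p_k^0 \ge p_n^0$ by the restart decomposition $p_n^0=\Pr[\tau\le Z_0]\,p_k^0$ at the hitting time $\tau$ of $\layer_k$, using the strong Markov property of the pure-\OI chain together with the memorylessness of the geometric phase length $Z_0$ and the fact that \onemax is a function of unitation. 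This is in fact the ``simple domination argument, exploiting that our hyper-heuristics have to visit all intermediate levels'' that the main text announces before the lemma, even though the formal proof given in the appendix is the algebraic one. Your route has the advantage of not depending on the Gamma-function formula at all (so it would survive a change of mutation operator that preserves the unit-step, layer-homogeneous structure), at the cost of the strong-Markov/memorylessness bookkeeping you correctly flag as the delicate step; the paper's route is mechanical and unambiguous once \Cref{lem:oi-prob} is available, and both yield the same two monotonicity statements.
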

A formal proof of this lemma is deferred to \Cref{app:tools-analysis-mmahh}.


As \OI and \OW operators work in a symmetric fashion, similar results hold on \OW and decreasing moves. More specifically, given $0 \leq h, k \leq n$, we have
    \begin{align*} 
        \Pr[y_{T_s^{(1)}} &\, \geq n - h \mid y_0 = n - k, s_0 = \OW] \\ 
        & = \begin{cases} 1, & \text{if $k \leq h$;} \\ \frac{1}{1 - q} \frac{\Gamma(k + 1)\Gamma\left( \frac{n q}{1 - q} + h + 1 \right)}{\Gamma(h + 1)\Gamma\left( \frac{n q}{1 - q} + k + 1 \right)}, & \text{otherwise.} \end{cases}
    \end{align*}
In particular, when $q = \Theta(\frac{1}{n \log(n)})$, the probability to climb down \onemax entirely in one phase of \OW is $\Omega(1)$.

We can further extend \Cref{lem:oi-prob} to the \seqopt benchmark. Note that for any $k$ and $f \in \seqopt_k$, for each $\ell$ the transition probability between layer $\layer_\ell$ and $\layer_{\ell - 1}$ (resp. $\layer_\ell$ and $\layer_{\ell + 1}$, when defined), are the same as for \onemax, namely, $\frac{\ell}{n}$ (resp. $\frac{n - \ell}{n}$). Thus, the computations done in \Cref{lem:oi-prob} still hold for $f$ in regions where $f$ is monotonic across the layers, as defined in \Cref{def:monotonicity-layers}.

We further note that, for \onemax again, the average number of pairs of phases of \OI followed by \OW needed to reach the optimum is bounded from above by $\frac{1}{p_n^0}$, since in the worst scenario, every phase of \OW bring us back to the all-zero string. Extending this insight again to \seqopt, we see that in the quasi-linear regime, reaching a neighboring local optimum starting from a local optimum of any $\seqopt$ function only takes $O(1)$ pairs of phases. This particular fact will be referred as to the \emph{one-phase approximation}, stated more formally in the next lemma.
\begin{lemma}[One-phase approximation]\label{lem:one-phase-approx}
    Let $p = \Theta(\frac{1}{n\log(n)})$ and $q = \Theta(\frac{1}{n\log(n)})$. Let $f \in \seqopt_k(d_1, \ldots, d_k)$, $\ell \in [0..k]$ and $x_0 \in \layer_{d_{\ell}}$. Then the {\normalfont \MMAHH} algorithm reaches some $x \in \layer_{d_{\ell - 1}} \cup \layer_{d_{\ell + 1}}$ in $O(1)$ phases on average.
\end{lemma}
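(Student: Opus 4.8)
The plan is to follow the current solution phase by phase. Write $B_\ell := \{x \in \{0,1\}^n \mid \Hamming(x,x^*) \in [d_{\ell+1}..d_{\ell-1}]\}$ for the ``basin'' of the local optimum $\layer_{d_\ell}$, with the conventions $d_{-1} := n$ and $d_{k+1} := 0$, so that $B_\ell$ is always well defined (for $\ell = 0$ only the boundary $\layer_{d_1}$ is relevant, for $\ell = k$ the boundary $\layer_{d_{k+1}} = \{x^*\}$). By \Cref{def:seqopt}, $f$ is strictly monotone in the layer index on $[d_{\ell+1}..d_\ell]$ and on $[d_\ell..d_{\ell-1}]$, with opposite monotonicities on the two halves; hence $\layer_{d_\ell}$ is a \emph{strict} local optimum of $f$ and the two boundary layers $\layer_{d_{\ell-1}},\layer_{d_{\ell+1}}$ are strict local optima of the opposite type. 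Since every $f \in \seqopt_k$ totally orders whole adjacent layers, the layer-index process behaves, inside any monotone region, exactly as for \onemax up to the reflection $y \mapsto n-y$; in particular the probability that one \OI phase (resp.\ one \OW phase) started at a layer from which that operator drifts towards a boundary actually reaches the boundary within the phase is, by \Cref{lem:oi-prob}, its reflected and \OW\ variants, \Cref{lem:minimality-p^0_n} and \Cref{lem:limit}, at least $p_n^0 = \Omega(1)$ (resp.\ its \OW-analogue, also $\Omega(1)$), because $p = \Theta(\tfrac{1}{n\log n})$ and $q = \Theta(\tfrac{1}{n\log n})$.

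Next I would treat the two symmetric cases. If $k-\ell$ is even, $\layer_{d_\ell}$ is a local \emph{minimum}, and \OI is the productive operator: from $\layer_{d_\ell}$ the first one-bit flip is always accepted by \OI, moving to $\layer_{d_\ell-1}$ with probability $d_\ell/n$ or to $\layer_{d_\ell+1}$ with probability $(n-d_\ell)/n$, after which \OI climbs monotonically to the corresponding boundary; since $\max\{d_\ell/n,(n-d_\ell)/n\}\ge\tfrac12$ and the single iteration spent on the first flip only costs a factor $1-p = 1-o(1)$ (memorylessness of the geometric phase length), this \OI phase reaches a boundary with probability $\Omega(1)$, and from any non-optimal layer of $B_\ell$ one \OI phase succeeds with probability $\ge p_n^0 = \Omega(1)$ as well. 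Meanwhile an \OW phase started anywhere in $B_\ell$ accepts only $f$-worsening moves, hence drifts towards $\layer_{d_\ell}$, halts there, and can neither reach a boundary nor leave $B_\ell$. If $k-\ell$ is odd the picture is the mirror image: $\layer_{d_\ell}$ is a local \emph{maximum}, \OW is productive, an \OI phase in $B_\ell$ drifts back to $\layer_{d_\ell}$ without leaving $B_\ell$, and each \OW phase reaches a boundary with probability $\Omega(1)$.

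To conclude, in both cases productive and unproductive phases alternate, and an unproductive phase always leaves the walk somewhere in $B_\ell$; hence, conditioned on no boundary having been reached yet, every productive phase succeeds with probability $\Omega(1)$, so the number of productive phases until success is stochastically dominated by a geometric random variable of constant parameter. Its expectation is $O(1)$, and the total number of phases is at most roughly twice that (plus the single unproductive first phase when $\layer_{d_\ell}$ is a local maximum, since $s_0 = \OI$), which is $O(1)$, as claimed. I expect the main difficulty to lie precisely in the bookkeeping underlying ``an unproductive phase leaves the walk somewhere in $B_\ell$'': one must verify, uniformly over the starting layer, that a phase of the non-productive operator can neither accidentally land on a boundary layer nor escape $B_\ell$, so that the $\Omega(1)$ success chance of the next productive phase is genuinely restored after every failure. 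Both points hinge on $\layer_{d_\ell}$ being a \emph{strict} local optimum (which forces the non-productive operator to halt exactly at $\layer_{d_\ell}$, so it cannot cross to the other half of $B_\ell$) and on the minimality bound $p_k^h \ge p_n^0$ of \Cref{lem:minimality-p^0_n}, which spares us from tracking the exact layer at which a failed phase terminates.
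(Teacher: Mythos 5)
Your proposal is correct and follows essentially the same route as the paper, which justifies this lemma only by the informal discussion preceding it: each productive phase reaches the target layer with probability at least $p_n^0=\Omega(1)$ by \Cref{lem:oi-prob}, its \OW analogue, \Cref{lem:minimality-p^0_n} and \Cref{lem:limit}, while in the worst case every unproductive phase merely resets the walk towards $\layer_{d_\ell}$, so a geometric number of phase pairs suffices. Your write-up in fact supplies more detail than the paper does (the basin decomposition, the parity case distinction, and the confinement argument), and I see no gap in it.
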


\section{\MMAHH With OI+AM on Jump}\label{subsec:mmahh-oi-am-jump}
The core objective of this section is to prove that the sole use of the Markov chain improves significantly the performance of the \MAHH on $\jump_m$, reducing its average runtime from $\Omega\left(\frac{n^{2 m - 1}}{(2 m - 1)!}\right)$ to $O(n^{m + 1})$ as outlined in the next theorem.

To clearly point out the contribution of the Markov chain, we consider the \MMAHH on $\jump_m$ where $1 < m < \frac{n}{2}$ using the \OI and \AM operators, with probability $p$ and $q$ to choose them respectively (we just replace \OW in \Cref{fig:markovchain} by \AM).
\begin{theorem}[Runtime analysis of the \MMAHH on $\jump_m$ with \OI-\AM]\label{thm:mmahh-time-jump-oi-am}
   The time $T$ taken by the {\normalfont \MMAHH} on {\normalfont $\jump_m$} with $1 < m < \frac{n}{2}$ to reach $x^*$, using \OI and \AM, satisfies
        \begin{align*}
            \esp[T] = \O\left( (1 + pn) \left( \frac{1}{p} + \frac{1}{q} \right) N_{n, m, q} \right),
        \end{align*}
    where $N_{n, m, q} = n + \frac{n^m}{(m - 1)! (1 - q)^{m - 2}}$, $n \geq 3$ and $p$, $q$ are such that $\frac{m}{2 (n - 2 m)}\left( q + \frac{4}{n} \right) \geq p$.
    
    Notably, it suffices to have $\frac{m}{2 n} q \geq p$ from where we have:
    \begin{enumerate}
        \item When $p = \frac{m}{c n}q$ for some $c \geq 2$ and $q = \frac{1}{2}$ then
        \[ \esp[T] = \O\left( n^2 + \frac{2^{m - 2} n^{m + 1}}{(m - 1)!} \right) = O(n^{m + 1}). \]

        \item When $p = \frac{m}{c n}q$ and $q = \frac{1}{d m}$ where $c \geq 2$ and $d \geq 1$ then
        \[ \esp[T] = \O\left( n^2 + \frac{e^{1/d}\, n^{m + 1}}{(m - 1)!} \right) = O\left( \frac{n^{m + 1}}{(m - 1)!} \right). \]
    \end{enumerate}

\end{theorem}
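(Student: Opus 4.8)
The plan is to analyze the runtime of the \MMAHH on $\jump_m$ by tracking how the algorithm progresses through the three regions of the jump function: the "slope" region (layers $\layer_n$ down to $\layer_m$ where fitness increases as we approach $x^*$), the "valley" (layers $\layer_{m-1}$ down to $\layer_1$, where $\jump_m$ is the reversed \onemax and decreasing the distance to $x^*$ actually decreases fitness), and the single step from $\layer_1$ to $x^*$. Since $\jump_m \in \seqopt_2(m,1)$, the algorithm faces one genuine local optimum at $\layer_m$ (all bit-strings with exactly $m$ zeros), and the hard part is traversing the valley to reach $x^*$. The \OI operator climbs the slope toward $\layer_m$ efficiently, but then to cross the valley we need a single \AM phase that (a) starts before we fall back too far, (b) lasts at least $m-1$ consecutive iterations, and (c) during those iterations makes $m$ one-bit flips that each reduce the number of zeros — this is exactly the $p^{m-1}$-type bottleneck, but now with the Markov chain the "$m-1$ consecutive \AM uses" costs only $(1-q)^{m-2}$ rather than $p^{m-2}$.

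**First I would** set up the phase structure using the notation $T_s^{(k)}, Z_k$ from the preliminaries and split a "super-phase" into a pair (an \OI phase followed by an \AM phase). I would argue that an \OI phase starting anywhere in the slope reaches $\layer_m$ with probability $\Omega(1)$ when $p = \Theta(1/(n\log n))$ — but here $p$ is potentially much larger (e.g. $p = \frac{m}{cn}q$ with $q = 1/2$, giving $p = \Theta(m/n)$), so instead I would directly estimate the probability that an \OI phase starting in $\layer_k$ with $k \geq m$ reaches $\layer_m$; using \Cref{lem:oi-prob}-style Gamma-ratio computations (valid in the monotone slope region), this probability is $\Theta(1)$ or at least bounded below by something manageable, and the expected number of \OI iterations to get from the random start down to $\layer_m$ contributes the additive $O((1+pn)(1/p + 1/q)\, n)$ term (the "$n$" in $N_{n,m,q}$). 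Then, conditioned on being at the local optimum $\layer_m$ at the start of an \AM phase, I would compute the probability of the "lucky \AM phase": the phase has geometric length with parameter $q$, so it lasts $\geq m-1$ steps with probability $(1-q)^{m-2}$ (for the steps after the first), and in each of the first $m-1$ steps the one-bit flip must flip one of the (at least $n-m+1 \geq n/2$) correct zero-bits... wait, actually in the valley $\jump_m$ rewards increasing the number of zeros, so \AM accepts essentially everything; the key is the *unconditional* random walk on layers must drift correctly — I would use that a single one-bit flip from $\layer_j$ (for $1 \le j \le m-1$) goes to $\layer_{j-1}$ with probability $j/n$, so reaching $\layer_1$ then $x^*$ from $\layer_m$ within one \AM phase of sufficient length succeeds with probability $\Omega\!\big(\frac{m!\,(1-q)^{m-2}}{n^{m}}\big)$ roughly — more carefully $\prod_{j=1}^{m} \frac{j}{n} \cdot (1-q)^{m-2} = \frac{m!}{n^m}(1-q)^{m-2}$, matching the $\frac{n^m}{(m-1)!(1-q)^{m-2}}$ in $N_{n,m,q}$ up to the factor-of-$m$ bookkeeping.

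**The technical heart** is controlling the interplay between $p$ and $q$: we need $p$ small enough (relative to $q$) that the \OI phases are long enough to reliably re-climb the slope after a failed \AM excursion drops us back down, yet we want $q$ large (like $1/2$) so that \AM phases are short — but the \emph{conditional} probability we care about (lucky phase of length $\geq m-1$) degrades like $(1-q)^{m-2}$, which for $q=1/2$ is $2^{-(m-2)}$, giving the $2^{m-2}$ factor. The condition $\frac{m}{2(n-2m)}(q + 4/n) \geq p$ is exactly what is needed so that the probability of a successful re-climb during an \OI phase dominates the probability that the preceding \AM phase carried us "too far" back, ensuring the expected number of super-phases before success is $O(1/p^0_n \cdot \text{something}) = O((1+pn)(1/p+1/q))$ — I would derive this by a renewal/geometric-trials argument: each super-phase independently succeeds at reaching $x^*$ with probability $\Omega(1/N_{n,m,q})$, and each super-phase costs $O((1+pn)(1/p+1/q))$ iterations in expectation (the \OI phase has expected length $1/p$, the \AM phase $1/q$, and the factor $(1+pn)$ accounts for the expected number of \emph{iterations} needed within a phase to actually make progress, since most one-bit flips are rejected/unhelpful). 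Multiplying expected cost per super-phase by expected number of super-phases gives the claimed bound.

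**The main obstacle I anticipate** is making the "failed \AM excursion doesn't hurt too much" argument rigorous: after a non-lucky \AM phase the current point can be anywhere (possibly back near $\layer_n$), and I must show the subsequent \OI phase still reaches $\layer_m$ with probability bounded below by a constant (or at least by something that doesn't kill the budget). This is where \Cref{lem:minimality-p^0_n} and the monotonicity structure of \seqopt come in — the worst case is starting from $\layer_n$, and \Cref{lem:oi-prob} with the given constraint on $p$ guarantees $p^m_n = \Pr[\text{reach } \layer_m \text{ in one \OI phase} \mid \text{start in } \layer_n]$ is at least a constant. Assembling these pieces — bounding expected super-phases by $O(N_{n,m,q})$, bounding per-super-phase cost by $O((1+pn)(1/p+1/q))$, and then specializing to items (i) and (ii) by plugging in $q = 1/2$ (so $1/q = 2$, $(1-q)^{m-2} = 2^{-(m-2)}$, $1+pn = 1 + m/(2c) = O(1)$ for constant $m$ or $O(m)$ absorbed) and $q = 1/(dm)$ (so $(1-q)^{m-2} \geq e^{-1/d}(1+o(1))$ by the standard $(1-1/(dm))^{m} \to e^{-1/d}$ limit, and $1/q = dm$) — is then routine arithmetic. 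I would present item (i) first since $q=1/2$ is the cleanest, then note item (ii) trades the $2^{m-2}$ for $e^{1/d}$ at the cost of a $dm$ factor that is dominated by the $n^{m+1}$ term.
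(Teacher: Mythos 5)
Your overall architecture matches the paper's: climb the slope in $O\left(n(1+pn)\left(\frac1p+\frac1q\right)\right)$ time, then repeatedly attempt to cross the valley from $\layer_m$, where a single \AM phase succeeds with probability at least $(1-q)^{m-2}\frac{m!}{n^m}$ (this is exactly the paper's \Cref{lem:avg-phases-AM}, and you derive it correctly), so that $O\left(\frac{n^m}{m!(1-q)^{m-2}}\right)$ attempts suffice. However, there is a genuine gap in how you account for the recovery after a failed attempt. You propose a renewal argument in which each ``super-phase'' (one \OI phase followed by one \AM phase) independently succeeds with probability $\Omega(1/N_{n,m,q})$, and you justify this by claiming that \Cref{lem:oi-prob} together with the constraint on $p$ guarantees that a \emph{single} \OI phase returns to $\layer_m$ from anywhere on the slope with constant probability. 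This is false in the parameter regime that matters: with $q=\frac12$ and $p=\frac{m}{2cn}$ we have $\frac{np}{1-p}=\Theta(m)$, and the Gamma-ratio in \Cref{lem:oi-prob} gives $p_n^m=\Theta\left(n^{-m/(2c)}\right)$, which is polynomially small for constant $m$ and $c$. Plugging this into your renewal argument inflates the number of super-phases by a factor $n^{m/(2c)}$ and breaks the claimed $O(n^{m+1})$ bound. The saving fact — which your argument never establishes — is that a failed \AM excursion typically falls only $O(1)$ layers below $\layer_m$ (the \AM drift from $\layer_m$ is only $\frac{n-2m}{2+q(n-2)}=O(1/q)$ layers per phase), so the recovery does not have to happen within one \OI phase at all.

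The paper closes this gap with different machinery: it computes the exact expected drift of $y$ over one \AM phase and one \OI phase (\Cref{lem:driftAM,lem:driftOI,lem:driftAM-OI}), shows that under the hypothesis $\frac{m}{2(n-2m)}\left(q+\frac4n\right)\ge p$ the drift of an \AM{}+\OI pair on the left slope is at least $\frac{i}{1+pn}$ toward $\layer_m$ (this non-negativity requirement is where the hypothesis actually comes from — not, as you suggest, from comparing a re-climb probability against an overshoot probability), and then applies the additive drift theorem with overshooting to bound the expected \emph{time} of each left-excursion by $O\left(m(1+pn)\left(\frac1p+\frac1q\right)\right)$, allowing the recovery to span arbitrarily many phases. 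Failed right-excursions into the gap are handled separately by multiplicative drift ($O(n)$ expected return time), and Wald's equation assembles the pieces. Your final parameter substitutions for items (i) and (ii) are fine, but the core of the proof — replacing your single-phase recovery claim with an expected-drift-per-pair-of-phases argument — is missing.
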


The proof of the theorem relies mainly on drift theorems. Hence we start by studying the drift in the following lemmas. Throughout this part, the drift over a phase of \AM (resp. \OI) starting in $i \in [0..n]$ (denoting the number of zero bits) will be $\Delta^{\AM}_i$ (resp. $\Delta^{\OI}_i$) and defined as
    \[ \Delta^{\AM}_i = \esp\left[ y_{T_s^{(2k)}} - y_{T_s^{(2 k + 1)}} \mid \, y_{T_s^{(2k)}} = i, s_{T_s^{(2k)}} = \AM \right]. \]

Lemmas \ref{lem:driftAM}, \ref{lem:driftOI} and \ref{lem:driftAM-OI} give the drift over a phase of \AM, a phase of \OI and a pair \AM\!+\OI of phases.
\begin{lemma}[Drift over a phase of \AM]\label{lem:driftAM}
Let $i \in [0..n]$ and an integer $k \geq 0$, the drift over a phase of \AM on \onemax is
\begin{align*}
    \esp[y_{T_s^{(2k)}} - y_{T_s^{(2 k + 1)}} & \mid \, y_{T_s^{(2k)}} = i, s_{T_s^{(2k)}} = \AM ] \\ &= \frac{2 i - n}{2 + q(n-2)}.
\end{align*}

\end{lemma}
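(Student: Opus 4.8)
The plan is to exploit that, throughout a phase in which the \AM operator is used, the only relevant quantity is the number of zero bits of the current solution, and on \onemax this quantity performs an explicitly solvable random walk; the requested drift then follows by averaging the position of that walk over the phase length, which is geometrically distributed and independent of the search.

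\emph{Describing the in-phase walk.} Conditioned on the event in the statement, let $W_0 = i, W_1, W_2, \dots$ denote the numbers of zero bits at the successive iterations of this \AM phase. Since \AM accepts every one-bit flip and a uniform bit flip turns a zero into a one with probability $(\text{number of zeros})/n$ and a one into a zero otherwise, for every state $j \in [0..n]$ (endpoints included, with no special casing needed) we have $W_{t+1} = W_t - 1$ with probability $W_t/n$ and $W_{t+1} = W_t + 1$ with probability $(n - W_t)/n$. Hence $\esp[W_{t+1} \mid W_t] = W_t\left(1 - \frac{2}{n}\right) + 1$, so with $a := 1 - \frac{2}{n}$ the recursion $\esp[W_{\ell+1}] = a\,\esp[W_\ell] + 1$ unrolls (using $1 - a = \frac{2}{n}$) to
\[ \esp[W_\ell] = a^\ell i + \frac{1 - a^\ell}{1 - a} = \frac{n}{2} + a^\ell\left(i - \frac{n}{2}\right). \]

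\emph{Averaging over the phase length.} By the definition of the \textsc{Markov} step (Figure~\ref{fig:markovchain} with \OW replaced by \AM), the phase ends at each iteration with probability $q$, independently of the search history; hence its length $Z$ satisfies $\Pr[Z = \ell] = (1-q)^{\ell - 1} q$ for $\ell \geq 1$ and is independent of $(W_t)_t$. The drift in question equals $i - \esp[W_Z]$, and
\[ \esp[W_Z] = \sum_{\ell \geq 1} (1-q)^{\ell - 1} q\, \esp[W_\ell] = \frac{n}{2} + \left(i - \frac{n}{2}\right) \frac{q a}{1 - (1-q) a}. \]
Then $1 - (1-q)a = \frac{2 + q(n-2)}{n}$ and $qa = \frac{q(n-2)}{n}$, so $\frac{qa}{1 - (1-q)a} = \frac{q(n-2)}{2 + q(n-2)}$, and therefore
\[ i - \esp[W_Z] = \left(i - \frac{n}{2}\right)\left(1 - \frac{q(n-2)}{2 + q(n-2)}\right) = \left(i - \frac{n}{2}\right)\frac{2}{2 + q(n-2)} = \frac{2i - n}{2 + q(n-2)}, \]
as claimed.

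The arithmetic above is routine, so the only points I would make sure are airtight are the two facts feeding it: that the in-phase process is exactly this birth--death walk on $[0..n]$ --- immediate since \AM accepts unconditionally and a one-bit flip is symmetric, with no case distinction at $j=0$ or $j=n$ --- and that $Z$ is geometric with parameter $q$ and independent of the current solution, which is precisely the defining property of the two-state Markov chain. I would also note that $Z \geq 1$ (a phase has at least one iteration), which is exactly what makes the geometric factor come out as $qa/(1-(1-q)a)$ rather than an off-by-one variant, and that I am computing the idealized drift of the walk that disregards absorption at $x^*$, since this is the quantity the surrounding drift analysis of $\jump_m$ relies on. I do not expect a genuine obstacle in this lemma.
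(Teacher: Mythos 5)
Your proposal is correct and follows essentially the same route as the paper: both exploit that under \AM the number of zeros is a birth--death walk with an affine one-step expectation, iterate this to get a formula geometric in the number of steps $\ell$ (the paper does this by inducting on the per-step drifts $\Delta^{\AM}_{i,j,\ell}$, which is just the telescoped form of your recursion for $\esp[W_\ell]$), and then average over the geometrically distributed phase length using its independence from the current solution. The arithmetic checks out and matches the paper's $\frac{2i-n}{2+q(n-2)}$.
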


\begin{lemma}[Drift over a phase of \OI]\label{lem:driftOI}
Let $i \in [0..n]$ and an integer $k \geq 0$, the drift over a phase of \OI on \onemax is
\begin{align*}
    \esp[y_{T_s^{(2k)}} - y_{T_s^{(2 k + 1)}} & \mid \, y_{T_s^{(2k)}} = i, s_{T_s^{(2k)}} = \OI ] \\
    &= \frac{i}{1 + p (n - 1)}.
\end{align*}
\end{lemma}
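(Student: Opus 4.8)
The plan is to reduce the drift over an \OI phase to a simple per-bit survival argument, exploiting that while \OI is active the algorithm behaves like elitist randomized local search on \onemax. First I would record the structural fact that, during an \OI phase, a one-bit flip is accepted exactly when it turns a $0$-bit into a $1$-bit: on \onemax this is the only move that strictly increases the fitness, whereas flipping a $1$-bit strictly decreases it and is rejected. Hence, throughout the phase no $1$-bit ever reverts to $0$, and a bit that is $0$ at the start of the phase becomes $1$ at the first iteration (if any) in which \textsc{RandomOneBitFlip} selects it, and stays $1$ afterwards. Writing $Z := Z_{2k} = T_s^{(2k+1)} - T_s^{(2k)}$ for the length of the phase, it follows that the end-of-phase distance $y_{T_s^{(2k+1)}}$ equals the number of the $i$ bits that are $0$ in $x_{T_s^{(2k)}}$ and are never selected during the $Z$ iterations of the phase.

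Next I would use that, by the way the Markov chain is sampled in Algorithm~\ref{alg:mmahh} (independently of the current solution and of the bit choices, leaving the \OI state with probability $p$ in each iteration, and the phase lasting at least one iteration), $Z$ is geometrically distributed on $\{1,2,\dots\}$ with parameter $p$ and independent of the sequence of bit selections. Conditioning on $Z$, a fixed bit is missed in a single iteration with probability $\frac{n-1}{n}$, independently over iterations, hence survives the whole phase as a $0$-bit with probability $\left(\frac{n-1}{n}\right)^{Z}$. By linearity of expectation over the $i$ initially-$0$ bits,
\[
\esp\!\left[\, y_{T_s^{(2k+1)}} \;\middle|\; Z,\; y_{T_s^{(2k)}} = i \,\right] = i\left(\frac{n-1}{n}\right)^{Z}.
\]
Finally I would average over $Z$. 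For $Z \sim \Geom(p)$ one has the elementary identity $\esp[a^{Z}] = \frac{pa}{1-(1-p)a}$, and substituting $a = \frac{n-1}{n}$ a one-line simplification gives $\esp\!\left[\left(\frac{n-1}{n}\right)^{Z}\right] = \frac{p(n-1)}{1+p(n-1)}$, so that
\[
\Delta^{\OI}_i = i - \esp\!\left[\, y_{T_s^{(2k+1)}} \;\middle|\; y_{T_s^{(2k)}} = i \,\right] = i\left(1 - \frac{p(n-1)}{1+p(n-1)}\right) = \frac{i}{1+p(n-1)}.
\]

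I do not anticipate a genuine obstacle here: the only points requiring a careful sentence are the per-bit decomposition (valid precisely because \OI acts monotonically on \onemax, so $1$-bits are frozen and $0$-bits are absorbed on first selection) and the identification of the law of $Z$ together with its independence from the bit flips (both immediate from the algorithm and from the Markov-chain construction in Figure~\ref{fig:markovchain}). I note that the companion Lemma~\ref{lem:driftAM} follows the same template, except that under \AM every flip is accepted, so instead of freezing bits one iterates the affine recursion $\esp[y_{t+1}\mid y_t] = (1-\tfrac{2}{n})\,y_t + 1$ (which also holds at the boundary states) over the geometric phase length $Z \sim \Geom(q)$ and simplifies.
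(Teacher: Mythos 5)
Your proof is correct. The overall skeleton coincides with the paper's: both condition on the phase length $Z_{2k}$, use its independence from the current solution (the paper's \Cref{lem:independance}) and its geometric law (\Cref{lem:geometric-law}) to reduce the problem to computing $\esp[y_{T_s^{(2k+1)}} \mid Z_{2k}=\ell,\, y_{T_s^{(2k)}}=i]= i\left(1-\tfrac1n\right)^\ell$, and then evaluate the same geometric average $\esp\bigl[\bigl(\tfrac{n-1}{n}\bigr)^{Z}\bigr]=\tfrac{p(n-1)}{1+p(n-1)}$. Where you genuinely differ is in how that conditional expectation is obtained. The paper proves it by induction on $\ell$ via a first-step analysis of the Markov chain $((y_t,s_t))_{t\in\N}$, i.e., by iterating the one-step recursion $\mathcal{Y}^{\OI}_{i,\ell+1}=\tfrac{n-i}{n}\,\mathcal{Y}^{\OI}_{i,\ell}+\tfrac{i}{n}\,\mathcal{Y}^{\OI}_{i-1,\ell}$. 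You instead observe that under \OI on \onemax each initially-zero bit is absorbed into state $1$ upon first selection while one-bits are frozen, so the terminal distance is exactly the number of the $i$ zero-bits never selected in $\ell$ independent uniform draws; linearity of expectation then gives $i\left(1-\tfrac1n\right)^\ell$ with no induction. Your argument is shorter and more transparent, but it leans on the monotone, per-bit-decomposable structure of \OI on \onemax, whereas the paper's recursion-based template transfers directly to the companion \Cref{lem:driftAM}, where bits are not frozen and, as you correctly note, one must instead iterate an affine recursion for $\esp[y_{t+1}\mid y_t]$. Both routes are valid; yours buys brevity and probabilistic insight for this lemma, the paper's buys a uniform treatment of the two drift lemmas.
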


\begin{lemma}[Drift over a pair \AM\!+\OI of phases]\label{lem:driftAM-OI}
For any $i \in [0..n]$ and $k \geq 0$ an integer, the drift over a phase of \AM followed by a phase of \OI on \onemax is
    \begin{align*}
       \esp[y_{T_s^{(2k)}} - y_{T_s^{(2 k + 2)}} & \mid y_{T_s^{(2k)}} = i, s_{T_s^{(2k)}} = \AM] \\
       &= \Delta^{\AM}_i + \Delta^{\OI}_{\left( i - \Delta^{\AM}_i \right)}, 
    \end{align*}
i.e., we can split the drift across two phases and plug the average position after a phase of \AM directly in the drift of a phase of \OI.
\end{lemma}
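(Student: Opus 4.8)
The plan is to telescope the two-phase drift into the drift over the \AM phase plus the drift over the ensuing \OI phase, and then to exploit that the single-phase \OI drift computed in \Cref{lem:driftOI} is a \emph{linear} function of the distance at which the phase starts, so that ``plugging in the expected position'' is exact rather than merely heuristic.

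Concretely, I would first write, by linearity of expectation together with the identity $y_{T_s^{(2k)}} - y_{T_s^{(2k+2)}} = \bigl(y_{T_s^{(2k)}} - y_{T_s^{(2k+1)}}\bigr) + \bigl(y_{T_s^{(2k+1)}} - y_{T_s^{(2k+2)}}\bigr)$,
\begin{align*}
& \esp[y_{T_s^{(2k)}} - y_{T_s^{(2k+2)}} \mid y_{T_s^{(2k)}} = i,\ s_{T_s^{(2k)}} = \AM] \\
& \quad = \esp[y_{T_s^{(2k)}} - y_{T_s^{(2k+1)}} \mid \cdots] + \esp[y_{T_s^{(2k+1)}} - y_{T_s^{(2k+2)}} \mid \cdots],
\end{align*}
keeping the same conditioning event ``$y_{T_s^{(2k)}} = i,\ s_{T_s^{(2k)}} = \AM$'' throughout. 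The first summand is $\Delta^{\AM}_i$ by definition (and equals $\frac{2i-n}{2+q(n-2)}$ by \Cref{lem:driftAM}, although the closed form is not needed here). For the second summand I would condition additionally on the solution $x_{T_s^{(2k+1)}}$ reached at the end of the \AM phase; at that time the operator has just switched to \OI, so by the strong Markov property of the joint chain $(x_t, s_t)_t$ and its time-homogeneity the expected decrease of $y$ over the following \OI phase is exactly the single-phase \OI drift of \Cref{lem:driftOI}, namely $\Delta^{\OI}_{y_{T_s^{(2k+1)}}} = \tfrac{1}{1+p(n-1)}\, y_{T_s^{(2k+1)}}$, the \OI phase length being an independent $\Geom(p)$ variable that does not see the current solution (as recorded before \Cref{lem:oi-prob}). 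Taking the expectation over $y_{T_s^{(2k+1)}}$ under the original conditioning and using that $j \mapsto \Delta^{\OI}_j$ is homogeneous linear, I can move the expectation inside the function:
\[
\esp[y_{T_s^{(2k+1)}} - y_{T_s^{(2k+2)}} \mid \cdots] = \Delta^{\OI}_{\,\esp[\, y_{T_s^{(2k+1)}} \mid \cdots\,]}.
\]
Finally, $\esp[y_{T_s^{(2k+1)}} \mid y_{T_s^{(2k)}} = i,\ s_{T_s^{(2k)}} = \AM] = i - \Delta^{\AM}_i$ is simply the definition of $\Delta^{\AM}_i$ rearranged (since $y_{T_s^{(2k)}} = i$ is deterministic under the conditioning), so the second summand equals $\Delta^{\OI}_{\,i - \Delta^{\AM}_i}$, and adding the two summands gives the claimed identity.

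The only step I expect to require genuine care is the invocation of the strong Markov property at the random switching time $T_s^{(2k+1)}$: one must argue that, given the solution and operator at that moment, the evolution of the next \OI phase is conditionally independent of the entire past, which is precisely where the independence of phase lengths from the current solution and the time-homogeneity of $(x_t,s_t)$ enter. Everything else is bookkeeping; in particular the ``plug in the mean'' move is valid for free because $\Delta^{\OI}_\cdot$ is (affine) linear — any affine function $g$ satisfies $\esp[g(X)] = g(\esp[X])$ — so no concentration of $y_{T_s^{(2k+1)}}$ around its mean is needed.
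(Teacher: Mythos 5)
Your proposal is correct and follows essentially the same route as the paper's proof: the same telescoping of the two-phase drift, the same tower-property/Markov-property argument to identify the second summand as $\Delta^{\OI}_{y_{T_s^{(2k+1)}}}$, and the same observation that the (affine) linearity of $j \mapsto \Delta^{\OI}_j$ lets one pass the expectation inside to obtain $\Delta^{\OI}_{(i-\Delta^{\AM}_i)}$. The point you flag as needing care --- forgetting the past at the random switching time via the Markov property of $(x_t,s_t)$ and the independence of the phase length from the current solution --- is exactly the step the paper justifies by appealing to its independence lemma.
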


Now, the following lemma provides an upper bound on the average number of phases of \AM needed to reach the global maximum at $x^* = \{1\}^n$ from a local maximum in layer $\layer_m$.
\begin{lemma}\label{lem:avg-phases-AM}
    The probability to reach the global maximum $x^*$ during a single phase of \AM starting from a local maximum in layer $\layer_m$ is
        \[ \Pr[x_{T_s^{(1)}} = x^* \mid x_0 \in \layer_m, s_0 = \AM] \geq (1 - q)^{m - 2} \frac{m!}{n^m}. \]
\end{lemma}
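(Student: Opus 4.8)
The plan is to lower-bound the probability of one explicit, favourable trajectory: the direct descent $\layer_m \to \layer_{m-1} \to \cdots \to \layer_1 \to x^*=\layer_0$. Starting from $x_0 \in \layer_m$, I consider the event that, at each iteration $t = 0, 1, \dots, m-1$, the one-bit flip turns one of the currently $m-t$ zero bits into a one; provided every proposed move along the way is accepted, the current solution then visits $\layer_{m-1}, \layer_{m-2}, \dots, \layer_1$ and reaches $x^*=\layer_0$ at iteration $m-1$. Since at the $t$-th of these iterations there are $m-t$ zero bits among the $n$ bits, the probability of this bit-flip pattern is $\prod_{t=0}^{m-1} \frac{m-t}{n} = \frac{m!}{n^m}$, and it is independent of the operator sequence produced by the Markov chain.

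The second step is to decide which of these $m$ proposed moves are actually accepted. On $\jump_m$ one has $\jump_m(\layer_j) = j$ for $1 \le j \le m-1$, together with $\jump_m(\layer_m) = n$ and $\jump_m(x^*) = n+m$. Hence the first $m-1$ moves $\layer_j \to \layer_{j-1}$, for $j = m, m-1, \dots, 2$, each strictly decrease the $\jump_m$-value (from $n$ to $m-1$ when $j=m$, and from $j$ to $j-1$ when $j \le m-1$), so they are accepted only when the operator in use is \AM; the last move $\layer_1 \to \layer_0$ strictly increases the value (from $1$ to $n+m$) and is therefore accepted by \OI as well as by \AM. So the favourable trajectory is realised once (a) the prescribed bits are flipped, which has probability $\frac{m!}{n^m}$, and (b) the operator used in iterations $0, 1, \dots, m-2$ is \AM.

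For (b), note that iteration $0$ already uses \AM by the conditioning $s_0 = \AM$, so only the $m-2$ further events $s_1 = \AM, \dots, s_{m-2} = \AM$ are required; equivalently, the first (\AM) phase must have length $Z_0 \ge m-1$. Since the phase length is independent of the current solution and geometrically distributed with $\Pr[Z_0 \ge z] = (1-q)^{z-1}$, this probability is exactly $(1-q)^{m-2}$. Multiplying the two independent contributions (the bit-flip choices being independent of the operator sequence) gives
\[ \Pr\!\left[x_{T_s^{(1)}} = x^* \mid x_0 \in \layer_m,\ s_0 = \AM\right] \;\ge\; (1-q)^{m-2}\,\frac{m!}{n^m}, \]
as claimed.

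The one point I would treat with care is the bookkeeping that produces the exponent $m-2$ rather than $m-1$: it is essential that exactly $m-1$ of the $m$ moves are worsenings that genuinely need the \AM operator, while the final move into $x^*$ is an improvement accepted regardless of the operator — so it does not require the \AM phase to persist, and in particular it is irrelevant whether the last iteration still lies in the \AM phase, since reaching $x^*$ ends the run. The remaining ingredients — the product $\prod_{t=0}^{m-1}\frac{m-t}{n}=\frac{m!}{n^m}$, the geometric tail of $Z_0$, and the independence of the flip choices from the Markov chain — are routine.
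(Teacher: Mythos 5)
Your proposal is correct and follows essentially the same route as the paper: both lower-bound the probability by the single direct-descent trajectory $\layer_m \to \layer_{m-1} \to \cdots \to \layer_1 \to x^*$, factor the bound as $\frac{m!}{n^m}$ for the prescribed bit flips times $(1-q)^{m-2}$ for the first \AM phase lasting at least $m-1$ iterations, and invoke the independence of the phase length from the current solution. Your explicit bookkeeping of which of the $m$ moves are worsenings (the first $m-1$) versus the final improving step accepted by either operator matches the paper's remark that the last move may use \AM or \OI.
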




Let 
    \[ X^* = \{ x \in \{0, 1\}^n \mid \| x\|_1 \in \{n - m, n\} \}, \]
the set of (local and global) maxima of $\jump_m$ and we consider the potential
    \[ d \colon x \mapsto \begin{cases}
        \lvert n - m - \| x \|_1\rvert, & \text{if $x \neq x^*$;} \\ 
        0, & \text{otherwise.}
    \end{cases} \]

In the next two lemmas, we work out lower bounds on the drift of the bit-string sequence, distinguishing two cases based on the landscape of the $\jump_m$ function. In \Cref{lem:drift-gap-region}, we lower bound the drift in the gap region, i.e., at positions $x \in \{0,1\}^n$ such that $n - m < \| x \|_1 < n$, directly on the sequence $(x_t)_{t \geq 0}$ using the potential $d$ defined earlier. This result will provide an upper bound on the average time needed to climb towards $x^*$ from a point $x_0$ in the gap region. On the other hand, \Cref{lem:drift-left-slope} deals with the drift in the region with the \onemax-like slope, i.e., bit-strings $x \in \{0,1\}^n$ for which $0 \leq \| x \|_1 < n - m$. This time, we work at the scale of a pair of phases \AM\!+\OI since locally, depending on the current operator \OI or \AM, the drift might have opposite signs, notably in the region $[\frac{n}{2}.. (n - m)]$. To this aim, we will use \Cref{lem:driftAM-OI} along with the additive drift theorem with overshooting \cite{KotzingK19} to upper bound the expected time to climb this left slope from an initial point $x_0$.


\begin{lemma}[Drift in the gap region]\label{lem:drift-gap-region}
    Let $x \in \{0,1\}^n$ with $n - m < \| x \|_1 < n$. Then
        \[ \esp[d(x_t) - d(x_{t + 1}) \mid x_t = x] \geq \frac{2 d(x)}{n}. \]
\end{lemma}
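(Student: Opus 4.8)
The plan is to bound the one-step drift of the potential $d$ from a fixed current solution $x$ in the gap region, carrying out the computation separately for each of the two acceptance operators $\OI$ and $\AM$. Since the claimed inequality will come out true for both possible values of $s_t$, it then holds unconditionally on the operator used at time~$t$.

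First I would set up notation and record the relevant shape of $\jump_m$. Write $j := \| x \|_1$, so that $n - m < j < n$ and $d(x) = j - (n - m) \in [1 .. m - 1]$. A single random one-bit flip produces $x'$ with $\| x' \|_1 = j - 1$ with probability $j/n$ (a one-bit is flipped) and $\| x' \|_1 = j + 1$ with probability $(n - j)/n$ (a zero-bit is flipped). On this range, $\jump_m$ is strictly decreasing in $\| x \|_1$ throughout the gap, the layer $\layer_m$ (i.e.\ $\| x \|_1 = n - m$) is a local maximum of value $n$, and $\layer_0 = \{ x^* \}$ (i.e.\ $\| x \|_1 = n$) is the global maximum of value $n + m$. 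Hence a one-bit flip is always a strict improvement, accepted by both operators, whereas a zero-bit flip is a strict worsening (accepted only by $\AM$) except when $j = n - 1$, in which case it reaches $x^*$ and is a strict improvement accepted by both.

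Then I would compute the drift. For $\AM$ with $j < n - 1$: both moves are accepted, $d$ decreases by $1$ on the one-bit flip and increases by $1$ on the zero-bit flip (when $j - 1 = n - m$ the ``decrease by $1$'' reads $d(x_{t+1}) = 0 = d(x) - 1$, which is consistent), giving drift $\frac{j}{n} - \frac{n - j}{n} = \frac{2j - n}{n}$. For $\AM$ with $j = n - 1$ the zero-bit flip instead sends $d$ from $m - 1$ to $0$, so the drift is $\frac{n - 1}{n} \cdot 1 + \frac{1}{n}(m - 1) = \frac{n + m - 2}{n} \ge \frac{n - 2}{n} = \frac{2j - n}{n}$. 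For $\OI$ the zero-bit flip is rejected unless $j = n - 1$ (where it only helps), so the drift is at least $\frac{j}{n} \ge \frac{2j - n}{n}$ using $j \le n$. In every case the one-step drift is thus at least $\frac{2j - n}{n}$, and substituting $j = (n - m) + d(x)$ gives $\frac{2j - n}{n} = \frac{2 d(x)}{n} + \frac{n - 2m}{n} \ge \frac{2 d(x)}{n}$, the final step using the standing assumption $m < n/2$ of this section. The only slightly delicate points are the two boundary layers $\| x \|_1 \in \{ n - m + 1, \, n - 1 \}$, where $d$ jumps to $0$ rather than changing by $\pm 1$; but those jumps only increase the drift, so this is not a genuine obstacle---the whole argument is a short case distinction followed by one substitution.
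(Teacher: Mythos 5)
Your proof is correct and follows essentially the same route as the paper's: compute the one-step drift of $d$ separately for \OI and \AM, treat the boundary layer $\| x \|_1 = n-1$ as a special case, observe that both drifts are at least $\frac{2\|x\|_1 - n}{n}$, and finish with the substitution $d(x) = \|x\|_1 - (n-m)$ together with $m < n/2$. No gaps.
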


\begin{lemma}[Average time spent in the slope towards a local maximum]\label{lem:drift-left-slope}
    Let $x_0 \in \{0, 1\}^n$ such that $\| x_0 \|_1 < n - m$ and $T_0 = \inf\{ t \geq 0 \mid \| x_t \|_1 = n - m \}$ be the time taken by the {\normalfont \MMAHH} to reach a local maximum of {\normalfont $\jump_m$}, starting in $(x_0, s)$ with $s \in \{\OI, \AM\}$. Then
        \[ \esp[T_0] = O \left( (n - \| x_0 \|_1 ) (1 + p n) \left( \frac{1}{p} + \frac{1}{q} \right) \right), \]
    provided that $\frac{m}{2 (n - 2 m)}\left( q + \frac{4}{n} \right) \geq p$ and $n \geq 3$.

    In particular, if $\| x_0 \|_1 = n - m - 1$, then
        \[\esp\left[ T_0 \right] = \O\left( m (1 + pn) \left(\frac{1}{p} + \frac{1}{q} \right) \right) .\]
\end{lemma}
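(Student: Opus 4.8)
The plan is to reduce the problem to \onemax and then bound the first hitting time of the local‑optimum layer $\layer_m$ (i.e.\ $\|x\|_1 = n-m$) by running an additive‑drift argument on a coarsened process that only records the configuration at the start of each pair of phases. First I would set up the \onemax reduction: while $\|x_t\|_1 < n-m$, the value of $\jump_m$ is $m + \|x_t\|_1$, \AM ignores the objective, and \OI accepts exactly the $0\!\to\!1$ one‑bit flips — which is precisely the behaviour of the \MMAHH on \onemax — so the trajectories on $\jump_m$ and on \onemax started from the same $(x_0,s)$ coincide until $\layer_m$ is first hit, and hence $T_0$ has the same law as the hitting time of $\layer_m$ on \onemax, where \Cref{lem:driftAM,lem:driftOI,lem:driftAM-OI} are available at every layer. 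Assuming $s_0 = \AM$ (the case $s_0 = \OI$ costs only one initial \OI phase of expected length $1/p$, during which $y$ can only decrease), I would group the phases into \emph{super‑steps}, the $k$‑th running from $T_s^{(2k)}$ to $T_s^{(2k+2)}$, and follow $V_k := y_{T_s^{(2k)}}$ with $V_0 = n-\|x_0\|_1$. Let $N := \inf\{k\ge 1 : V_k\le m\}$. Since $y$ moves by at most one per iteration, $V_N\le m$ forces $\layer_m$ to have been hit by time $T_s^{(2N)}$, so $T_0\le T_s^{(2N)}$; and since the phase lengths are independent geometrics (means $1/q$ and $1/p$) independent of the bit‑flips, Wald's identity gives $\esp[T_0]\le\esp[N]\,(\tfrac1p+\tfrac1q)$. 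Thus everything reduces to bounding $\esp[N]$.

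Next I would compute the per‑super‑step drift. With $A = 2+q(n-2)$ and $B = 1+p(n-1)$, \Cref{lem:driftAM,lem:driftOI} give $\Delta^{\AM}_i=(2i-n)/A$ and $\Delta^{\OI}_j=j/B$, so by \Cref{lem:driftAM-OI}
\[
    D_i \;:=\; \Delta^{\AM}_i + \frac{i-\Delta^{\AM}_i}{B} \;=\; \frac{i}{B} + \frac{(2i-n)\,p(n-1)}{AB},
\]
which is affine and increasing in $i$, hence minimized over $i\ge m+1$ at $i=m+1$. The hard part will be the clean algebraic verification that $D_{m+1}=\big((m+1)A-(n-2m-2)p(n-1)\big)/(AB)\ge 1/B$ under the hypothesis $\tfrac{m}{2(n-2m)}(q+\tfrac4n)\ge p$: this hypothesis yields $(n-2m-2)p(n-1)\le\tfrac12 m(qn+4)$, and then $(m+1)(2+q(n-2))-\tfrac12 m(qn+4) = 2 + q\big(\tfrac{mn}{2}+n-2m-2\big)\ge 2+q(n-2)=A$ (using $mn/2\ge 2m$, valid since $1<m<n/2$ forces $n\ge 2m+1\ge 5$), which closes it. This is the only place the coupling condition between $p$ and $q$ is used, and intuitively it is exactly what makes the \OI‑drift dominate the adverse \AM‑drift in the slope $[n/2,\,n-m]$ just below the local optimum, where a phase of \AM on its own pushes $y$ away from $\layer_m$.

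Finally, having shown $D_i\ge D_{m+1}\ge 1/B\ge 1/(1+pn)=:\delta$ for all $m+1\le i\le n$, I would apply the additive drift theorem with overshooting \cite{KotzingK19} to $X_k := V_k - m$. On \onemax we always have $V_k\ge 0$, so the overshoot at time $N$ satisfies $(m-V_N)^+\le m$, whence $\esp[N]\le\big((V_0-m)+m\big)/\delta = V_0(1+pn) = O\big((n-\|x_0\|_1)(1+pn)\big)$; multiplying by $\tfrac1p+\tfrac1q$ and adding the $O(1/p)$ for a possible initial \OI phase gives the stated bound, and the special case is just $n-\|x_0\|_1=m+1$ substituted in. (It is precisely this $\le m$ overshoot term that promotes $V_0-m$ to $V_0$ in the estimate, which is why the leading factor in the particular case is $m$ rather than a constant.)
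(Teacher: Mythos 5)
Your proposal is correct and follows essentially the same route as the paper's proof: reduce to \onemax on the slope, pass to pairs of \AM\!+\OI phases, lower-bound the pair drift via \Cref{lem:driftAM,lem:driftOI,lem:driftAM-OI}, apply the additive drift theorem with overshooting (overshoot at most $m$), and convert the expected number of phase pairs into time via Wald's theorem, adding $O(1/p)$ for a possible initial \OI phase. The only cosmetic difference is that you verify the uniform drift bound $\geq 1/B$ at the minimal state $i=m+1$ and invoke monotonicity of the affine drift, whereas the paper shows the constant term of the same affine expression is non-negative under the hypothesis --- the two computations are equivalent.
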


\begin{lemma}[Average time to climb towards a maximum]\label{lem:avg-time-to-climb-jump}
    Let $T_1 = \inf\{ t \geq 0 \mid x_t \in X^* \}$ be the time taken by the {\normalfont \MMAHH} starting with $(x_0, \OI)$ to reach $x^*$ or a local maximum of {\normalfont $\jump_m$} then
        \[ \esp[T_1] = \O\left( n (1 + p n) \left( \frac{1}{p} + \frac{1}{q} \right) \right), \]
    provided that $\frac{m}{2 (n - 2 m)}\left( q + \frac{4}{n} \right) \geq p$ and $n \geq 3$.
\end{lemma}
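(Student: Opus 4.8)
The plan is to case-split on where $x_0$ lies relative to the plateau of local maxima $\{x:\|x\|_1=n-m\}$ and, in each case, quote one of the preceding drift lemmas. If $x_0\in X^*$, i.e.\ $\|x_0\|_1\in\{n-m,n\}$, then $T_1=0$ and there is nothing to prove. If $\|x_0\|_1<n-m$, then since a one-bit flip changes $\|x\|_1$ by exactly one, the walk cannot hit $\|x\|_1\in\{n-m,n\}$ without first hitting $\|x\|_1=n-m$; hence $T_1=T_0$, with $T_0$ the hitting time of the plateau studied in \Cref{lem:drift-left-slope}. Invoking that lemma with $s=\OI$ and bounding $n-\|x_0\|_1\le n$ already gives $\esp[T_1]=\O\!\left(n(1+pn)\left(\tfrac1p+\tfrac1q\right)\right)$.

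The remaining case is $x_0$ in the gap region, $n-m<\|x_0\|_1<n$. Here I would first note that the walk stays in the gap region until time $T_1$: for every $t<T_1$ we have $\|x_t\|_1\notin\{n-m,n\}$, and since one-bit flips move $\|x\|_1$ by one, an easy induction shows $\|x_t\|_1\in\{n-m+1,\dots,n-1\}$ for all $t<T_1$. Therefore \Cref{lem:drift-gap-region} applies at each such time and gives the multiplicative drift $\esp[d(x_t)-d(x_{t+1})\mid x_t]\ge\tfrac2n\,d(x_t)$ for the potential $d$. Since $d$ is integer-valued with smallest positive value $1$ and $d(x_0)=\|x_0\|_1-(n-m)\le m-1<n$, the multiplicative drift theorem yields $\esp[T_1]\le\tfrac n2\bigl(1+\ln(m-1)\bigr)=\O(n\log n)$.

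To finish I would check that the claimed bound dominates this logarithmic term: since $0<p,q<1$ we have $(1+pn)\left(\tfrac1p+\tfrac1q\right)\ge\tfrac{1+pn}{p}=\tfrac1p+n\ge n$, hence $n(1+pn)\left(\tfrac1p+\tfrac1q\right)\ge n^2$, which swallows $\O(n\log n)$. Taking the worst of the three cases then gives $\esp[T_1]=\O\!\left(n(1+pn)\left(\tfrac1p+\tfrac1q\right)\right)$ for every $x_0$ (with $s_0=\OI$), as claimed; the hypotheses $\frac{m}{2(n-2m)}\!\left(q+\tfrac4n\right)\ge p$ and $n\ge3$ are needed only to invoke \Cref{lem:drift-left-slope} and carry over verbatim.

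The proof is essentially a bookkeeping combination of \Cref{lem:drift-left-slope} and \Cref{lem:drift-gap-region}, so I do not expect a genuine obstacle. The only step requiring a little care is the gap-region case: confirming that the walk cannot leave the gap region before hitting $X^*$, so that \Cref{lem:drift-gap-region} is in force throughout $[0,T_1)$, and observing that the resulting $\O(n\log n)$ is harmless against the target bound, which is in fact $\Omega(n^2)$.
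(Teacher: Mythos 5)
Your proposal is correct and follows essentially the same route as the paper's proof: the same three-way case split on $\|x_0\|_1$, the same invocation of the multiplicative drift theorem via \Cref{lem:drift-gap-region} in the gap region and of \Cref{lem:drift-left-slope} on the left slope, and the same final observation that the $O(n\log m)$ term is dominated by the claimed bound. The only cosmetic difference is that the paper absorbs the initial \OI phase by an explicit $+\tfrac{1}{p}$ before applying \Cref{lem:drift-left-slope}, whereas you invoke that lemma directly with $s=\OI$, which its statement permits.
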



We can now prove the main theorem of this section.
\begin{proof}[Proof of \Cref{thm:mmahh-time-jump-oi-am}]
    The overall runtime of the \MMAHH can be split in two times $T_1$ and $T_2$ such that $T = T_1 + T_2$,
        \[ T_1 = \inf\{ t \geq 0 \mid x_t \in X^* \}, \]
    and $T_2$ is the time, starting in a local maximum of $\jump_m$ (if it happens), to reach $x^*$.
    First, by \Cref{lem:avg-time-to-climb-jump} we have
        \[  \esp[T_1] = \O\left( n (1 + pn) \left(\frac{1}{p} + \frac{1}{q} \right) \right), \]
    and it remains to upper bound $\esp[T_2]$. Of course, $T_2 = 0$ if we already reached $x^*$ during the first phase. Now, assume we do not, and hence, we are in some local maximum of $\jump_m$, i.e., some $x \in \{0, 1\}^n$ such that $\| x \|_1 = n - m$. We then define excursions that start upon leaving layer $\layer_m$ (the set of local maxima of $\jump_m$) and end either when we come back to this set of local maxima in $\layer_m$ (in case of a failure) or when we reach $x^*$. As every excursion starts in state \AM, the number $N$ of such excursions can be upper bounded by the number $N^*$ of phases of \AM needed to reach $x^*$ from layer $\layer_m$. By \Cref{lem:avg-phases-AM}, this can be bounded by
        \[ \esp[N] \leq \esp[N^*] \leq \frac{n^m}{m! (1 - q)^{m - 2}}. \]

    Then, if we denote by $e_i$ the $i$-th excursion and by $\lambda(e_i)$ its length, , i.e., the time of
the excursion $e_i,$ we can write
        \[ T_2 = T^w_0 + \sum_{i = 1}^N \left( \lambda(e_i) + T^w_i \right), \]
    where $T^w_i$ for $0 \leq i \leq N$ is waiting time between each excursion ($T^w_N = 0$ and $T^w_0$ is the time before the first excursion starts). These waiting times can all be bounded from above in expectation by $\frac{1}{p}$ since only the \OI operator can be stuck in this set of local maxima of $\jump_m$. Now, we can split the excursions into two families: the left-excursions (starting with the move from $n - m$ to $n - m - 1$) and the right-excursions (beginning with the move from $n - m$ to $n - m + 1$). Given an excursion $e_i$, where $1 \leq i \leq N$, then as all the left-excursions (resp. right-excursions) follow the same distribution, we know that if $e_i$ is a left-excursion, by \Cref{lem:drift-left-slope} we have
        \begin{align*}
             \esp[\lambda(e_i)] & \leq 1 + \O\left( m (1 + pn) \left(\frac{1}{p} + \frac{1}{q} \right) \right) \\ 
             & = \O\left( m (1 + pn) \left(\frac{1}{p} + \frac{1}{q} \right) \right),
        \end{align*}
    and if $e_i$ is a right-excursion, from the proof of \Cref{lem:avg-time-to-climb-jump} and \Cref{lem:drift-gap-region} we obtain
        \[ \esp[\lambda(e_i)] \leq 1 + \frac{n}{2} = O(n). \]

    Thus, as $O(n) = O(m n) = O\left( m (1 + pn) \left(\frac{1}{p} + \frac{1}{q} \right) \right)$ we conclude that
        \[ \esp[\lambda(e_i)] = \O\left( m (1 + pn) \left(\frac{1}{p} + \frac{1}{q} \right) \right). \]
    Hence, by Wald's theorem 
    we obtain
        \begin{align*}
            \esp[T_2] & = \esp[T_0^w] + \esp\left[ \sum_{i = 1}^N [ \lambda(e_i) + T^w_i] \right] \\ 
            & \leq \frac{1}{p} + \esp[N] \left( \O\left( m (1 + pn) \left(\frac{1}{p} + \frac{1}{q} \right) \right) + \frac{1}{p} \right) \\ 
            & = \O\left( \esp[N] m (1 + pn) \left(\frac{1}{p} + \frac{1}{q} \right) \right) \\ 
            & = \O\left(\frac{n^m}{(m - 1)! (1 - q)^{m - 1}} (1 + pn) \left(\frac{1}{p} + \frac{1}{q} \right) \right).
        \end{align*}
    Finally, building on what we computed, we have
    \begin{align*}
        \esp[T] & = \esp[T_1] + \esp[T_2] \\
        & = \O\left( (1 + pn) \left( \frac{1}{p} + \frac{1}{q} \right) N_{n, m, q} \right),
    \end{align*}
    where $N_{n, m, q} = n + \frac{n^m}{(m - 1)! (1 - q)^{m - 1}}$.
\end{proof}

\section{Runtime Analysis of the \MMAHH with OnlyWorsening Acceptance on \seqopt}\label{sec:runtime-analysis}

We now state and prove the runtime of our \MMAHH, which uses a Markov chain to select either the \OI or \OW acceptance operator, on the $\seqopt$ benchmark. 

\begin{theorem}[Runtime analysis of the \MMAHH]\label{thm:RuntimeMMAHH}
    Assume $p = \Theta(\frac{1}{n \log(n)} )$, $q = \Theta(\frac{1}{n \log(n)})$. Let $k \in [ 0 .. (n - 2) ]$ and $f \in \seqopt_k(d_1, \ldots, d_k)$ where $n > d_1 > \dots > d_k > 0 $. If $k = \O(1)$ then, the \MMAHH with \OI and \OW reaches the global maximum at $x^* = (1,\dots, 1)$ of $f$ in runtime $T$ with the expectation
        \[ \esp[T] = O \left( \frac{n^{k + 1}}{d_1 \cdots d_k} \log(n) \right). \]
\end{theorem}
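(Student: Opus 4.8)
The plan is to reduce a run of the \MMAHH on $f$ to a biased random walk on the $k+2$ distinguished layers $\layer_{d_0}=\layer_n,\layer_{d_1},\dots,\layer_{d_k},\layer_{d_{k+1}}=\{x^*\}$, to solve the linear recurrence this induces for the expected time to hit $\layer_{d_{k+1}}$, and finally to convert the resulting count of ``moves between neighbouring distinguished layers'' into iterations via \Cref{lem:one-phase-approx}.

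First I would record the shape of $f$ given by \Cref{def:seqopt}: on each of the $k+1$ layer ranges $[\layer_{d_{\ell+1}}..\layer_{d_\ell}]$ the function is monotone across layers, so $\layer_{d_1},\dots,\layer_{d_k}$ are alternating local maxima and minima of $f$ ($\layer_{d_j}$ being a local maximum exactly when $k-j$ is odd), while $\layer_{d_0}=\layer_n$ is an extreme point from which only a move towards $x^*$ is possible. Since an \OI phase cannot leave a local maximum and an \OW phase cannot leave a local minimum --- in both cases every one-bit flip is rejected and the search point stays put until the phase ends --- whenever the algorithm sits on a distinguished layer $\layer_{d_j}$ it will leave it with the operator dictated by the type of $\layer_{d_j}$. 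The first accepted step of that phase flips one of the $d_j$ zero-bits of the current point, hence moves towards $x^*$ (to $\layer_{d_j-1}$) with probability exactly $d_j/n$ and towards the boundary (to $\layer_{d_j+1}$) with probability $1-d_j/n$; afterwards the phase descends, respectively ascends, the monotone slope on the committed side, and whenever a phase terminates before reaching the next distinguished layer the following, opposite, operator merely pushes the point back along that same slope towards $\layer_{d_j}$, never across to the other slope. By \Cref{lem:one-phase-approx} such an excursion away from $\layer_{d_j}$ reaches one of $\layer_{d_{j-1}},\layer_{d_{j+1}}$ in $O(1)$ phases on average, and since each phase has expected length at most $\tfrac1p+\tfrac1q=O(n\log n)$ and the phase lengths are independent of the search dynamics, the expected number of iterations to move from $\layer_{d_j}$ to a neighbouring distinguished layer is $O(n\log n)$.

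Next I would determine the direction probabilities. The side of an excursion is decided by its first accepted step, so the point commits towards $x^*$ with probability $d_j/n$; given a commitment, the probability that the neighbouring distinguished layer on that side is actually reached before the point is pushed back to $\layer_{d_j}$ is $\Omega(1)$, because this is a \onemax-like interval traversal and \Cref{lem:minimality-p^0_n} bounds the probability of traversing even the whole \onemax-slope in a single phase from below by $p^0_n=\Omega(1)$ (\Cref{lem:limit}), while traversing a shorter interval is only easier. The boundary-side commitment likewise succeeds with probability $\Omega(1)$, which yields the matching upper bound. Hence, letting $a_j$ denote the probability that the distinguished layer visited immediately after $\layer_{d_j}$ is $\layer_{d_{j+1}}$ (the one closer to $x^*$), we obtain $a_j=\Theta(d_j/n)$ for $j\in[1..k]$ and $a_0=1$. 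Writing $T_j$ for the expected number of iterations to reach $x^*$ from $\layer_{d_j}$, the previous paragraph gives $T_{k+1}=0$, $T_0=O(n\log n)+T_1$, and, for $j\in[1..k]$, $T_j=O(n\log n)+a_j\,T_{j+1}+(1-a_j)\,T_{j-1}$.

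It then remains to solve this recurrence. Putting $E_j=T_j-T_{j+1}\ge0$ turns it into $E_0=\Theta(n\log n)$ and $E_j=a_j^{-1}(\Theta(n\log n)+(1-a_j)E_{j-1})$, so $E_j=\sum_{i=0}^{j}\Theta(n\log n)\,a_i^{-1}\prod_{\ell=i+1}^{j}\tfrac{1-a_\ell}{a_\ell}$; inserting $a_\ell=\Theta(d_\ell/n)$, $1-a_\ell=\Theta(1)$, and using that $k=O(1)$ makes the number of summands constant, the terms with $i\le1$ dominate and give $E_j=\Theta(n^{j+1}\log(n)/(d_1\cdots d_j))$. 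Telescoping $T_0=\sum_{j=0}^{k}E_j$, the last summand dominates because consecutive summands differ by the factor $n/d_k>1$, so $T_0=\Theta(n^{k+1}\log(n)/(d_1\cdots d_k))$. Since $T_j\le T_0$ for all $j$ and, by the argument underlying \Cref{lem:one-phase-approx}, the algorithm reaches some distinguished layer from an arbitrary initial point in $O(n\log n)$ iterations, we conclude $\esp[T]=O(T_0)$, which is the claimed bound. The step I expect to be the main obstacle is the direction estimate $a_j=\Theta(d_j/n)$: one must argue rigorously that the alternation of \OI and \OW phases around $\layer_{d_j}$ never lets the point ``change its committed slope'', and that the bound is uniform in $j$ and $n$, including the delicate regime $d_j/n\to1$. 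The recurrence manipulations and the identification of the dominant term are routine once $k=O(1)$.
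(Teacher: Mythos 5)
Your proposal is correct and follows essentially the same route as the paper's proof: both reduce the run to excursions between the distinguished layers $\layer_{d_j}$, use the one-phase approximation (\Cref{lem:one-phase-approx}) together with Wald's theorem to charge $O(n\log n)$ per inter-layer transition, and extract the factor $n/d_j$ from the probability $d_j/n$ that the first accepted step from $\layer_{d_j}$ commits towards $x^*$. The only difference is organizational: you solve the resulting birth-death recurrence explicitly via the differences $E_j = T_j - T_{j+1}$, whereas the paper runs an induction on $\ell$ in which each failed (left-)excursion is charged the inductive hitting time $\esp[T_\ell]$; the two bookkeeping schemes give the same bound, and the ``main obstacle'' you flag (the uniform estimate $a_j = \Omega(d_j/n)$) is exactly what \Cref{lem:minimality-p^0_n} and \Cref{lem:one-phase-approx} supply.
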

\begin{proof}
    We will prove the desired result by induction on the local optima $d_\ell$ for $\ell\in[1..k]$. To do so we denote the first hitting time of a local optimum at $d_{\ell},$ by $T_{\ell}.$ The inductive hypothesis is then stated as follows,
    \begin{equation} \label{eqn:MMAHHInduction}
     \esp[T_{\ell}] = O \left(  \frac{n^{\ell}}{d_1 \cdots d_{\ell - 1}} \log(n) \right).  
    \end{equation}
    We first demonstrate \eqnref{eqn:MMAHHInduction} for the base case with $\ell=1.$ We denote by $k^*_1$ the number of phases needed to first reach a local optimum at $d_1$, i.e., $k^*_1 = \inf \left\{ k \in \N \mid y_{T^{(k)}_s} = d_1 \right\}$ and $T_1$ the time taken to reach said local optimum. Here we upper bound $\esp[T_1]$ by assuming $(y_0, s_0) = (n, \OW).$ Since for times $t\in[0..T_1]$ $x_t$ is confined to the slope, for which $y_t \in [d_1, n],$ with monotonic fitness value, our \emph{one-phase approximation} in \Cref{lem:one-phase-approx} applies and gives
        \[ \esp[k_1^*] \leq \frac{1}{p_n^0} = O(1), \]
    because $p, q = \Theta(\sfrac{1}{n \log(n)})$. Now, since 
        \[ T_1 \leq \sum\limits_{k=0}^{k^*_1 - 1} Z_k, \]
    and, by Lemma 22, 
    $\esp\left[ Z_k \right] \leq \max\left\{ \frac{1}{p}, \frac{1}{q} \right\} = O(n \log(n))$, we can use Wald's Theorem (see Theorem 17
    ) to obtain
        \[ \esp[T_1] \leq \frac{1}{p_n^0} \max\left\{ \frac{1}{p}, \frac{1}{q} \right\} = O(n \log(n)), \]
    as desired. This establishes \eqnref{eqn:MMAHHInduction} in the case $\ell = 1$.
        
     Now, for the induction step suppose that \eqnref{eqn:MMAHHInduction} is true for some $\ell \in[1..k]$. Without loss of generality, we assume that at $d_{\ell}$, the layer $\layer_{d_{\ell}}$ is a set of local maxima of $f$ (our argument equally holds for local minima by exchanging \OI and \OW). From these maxima, we define an excursion as a walk that starts by leaving $\layer_{d_{\ell}}$ and, either comes back to $\layer_{d_{\ell}}$ without having hit $\layer_{d_{\ell + 1}}$ (in case of a failure) or reach the new (unvisited) set of minima at layer $\layer_{d_{\ell + 1}}$ (in case of a success). We illustrate this situation in \Cref{fig:excursions-illustration}, where failing excursions are depicted in red and a successful excursion is shown in green.

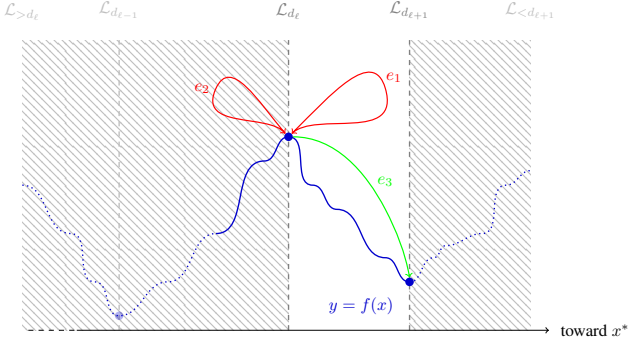
\begin{figure}[t]
    \centering
    \resizebox{\columnwidth}{!}{
    \begin{tikzpicture}   
        \node (D1) at (-6, 0) {}; \node (D2) at (-5, 0) {};
        \node (F) at (5, 0) {};

        \node (K) at (-0.5, -0.5) {}; \node (H) at (2, -0.5) {}; \node (L) at (-4, -0.5) {};

        \fill[pattern = north west lines, pattern color = lightgray] (2, 0) rectangle (4.5, 6) node[above = 0.3cm, lightgray] {$\layer_{< d_{\ell + 1}}$};
        \fill[pattern = north west lines, pattern color = lightgray] (-0.5, 0) rectangle (-6, 6) node[above = 0.3cm, lightgray] {$\layer_{> d_{\ell}}$};
        \draw[thick, dashed, gray] (K.center) ++(0, 0.5) -- ++(0, 6) node[above = 0.3cm] {$\layer_{d_{\ell}}$};
        \draw[thick, dashed, gray] (H.center) ++(0, 0.5) -- ++(0, 6) node[above = 0.3cm] {$\layer_{d_{\ell + 1}}$};
        \draw[thick, dashed, lightgray] (L.center) ++(0, 0.5) -- ++(0, 6) node[above = 0.3cm] {$\layer_{d_{\ell - 1}}$};
        

        \draw[dashed] (D1) -- (D2.east);
        \draw[->, solid] (D2) -- (F) node[right] {$\text{toward } x^*$};

        \draw[thick, blue!80!black] (-0.5, 4) to[out=0, in=180] (0, 3) to[out=0, in=180] 
             (0.5, 2.5) to[out=0, in=180] (1.5, 1.5) to[out=0, in=180] (2, 1);
        \draw[thick, blue!80!black, dotted] (2, 1) to[out=0, in=180] (2.5, 1.5) to[out=0, in=180] (2.75, 1.75) to[out=0, in=180] (3.5, 2) to[out=0, in=180] (4, 3) to[out=0, in=180] (4.5, 3.3);
        \draw[thick, blue!80!black, solid] (-2, 2) to[out=0, in=180]  (-1, 3.5) to[out=0, in=180] (-0.5, 4);
        \draw[thick, blue!80!black, dotted] (-6, 3) to[out=0, in=180] (-5, 2) to[out=0, in=180] (-4.5, 1) to[out=0, in=180] (-4, 0.3) to[out=0, in=180] (-3, 1) to[out=0, in=180] (-2, 2);

        \node[circle, fill = blue!80!black, inner sep = 0pt, minimum size = 5pt] (A) at (-0.5, 4) {};
        \node[circle, fill = blue!80!black, inner sep = 0pt, minimum size = 5pt] (B) at (2, 1) {};
        \node[circle, fill = blue!80!black, inner sep = 0pt, minimum size = 5pt, opacity = 0.3] (C) at (-4, 0.3) {};

				\node at (-1,4.5) (here) {};
				\node[blue!80!black] (f) at (1, 0.5) {$y = f(x)$};
				\node[red, left] (e2) at (-2, 5) {$e_2$}; \node[red, above right, xshift=-3pt] (e1) at (1.5, 5) {$e_1$}; \node[green, above, yshift=10pt] (e3) at (1.5, 2.5) {$e_3$};
				\draw[red, line width = 0.7pt,->] (A.north west) to[out angle=130, in angle=130, curve through = {(-2,5)}] (A.north west);
				\draw[red, line width = 0.7pt,->] (A.north east) to[out angle=45, in angle=45, curve through = {(1.5,5)}] (A.north east);
				\draw[green, line width = 0.7pt,->] (A.east) to[out angle = 0, in angle = 100, curve through = {(1.5, 2.5)}] (B.north);

    \end{tikzpicture}
    }
    \caption{\emph{Illustration of a function $f \colon \{0, 1\}^n \to \R$ with a set of local maxima in layer $\layer_{d_{\ell}}$ where three kinds of excursions can occur: a failing left excursion in red, the right excursion that fails to reach $\layer_{d_{\ell + 1}}$ also in red and the successful one in green.}}
    \label{fig:excursions-illustration}
\end{figure}
     
     Since we assume layer $\layer_{d_{\ell}}$ to be a set of local maxima of $f$, the average waiting time $T^w_i$ between the $(i - 1)$-th and $i$-th excursion is $\esp[T^w_i] = O(\frac{1}{p}) = O(n \log(n))$, where the constant does not depend on $i$. Further, let $k^*$ be the total number of excursions before layer $\layer_{d_{\ell + 1}}$ is reached for the first time and let $e_i$ be the $i$-th excursion of length $\lambda(e_i)$. Then, we have
        \[ T_{\ell+1} = T_{\ell} + \sum_{i = 1}^{k^*} \left(T^w_{i - 1} + \lambda(e_i) \right), \]
    where $T_0^w$ is the waiting time before the first excursion starts.
     
     Now, among the excursions, there are those starting by accepting the flip of a one-bit (which we call the left-excursions) and the others, starting by accepting a zero-bit flip (the right-excursions). First, the length of any left-excursion can be upper-bounded by $\esp\left[ T_{\ell} \right]$. Second, as any right-excursion is confined in the slope $[d_{\ell + 1}, d_{\ell}]$ where $f$ is decreasing across the layers $\layer_{d_{\ell}}$, $\ldots$, $\layer_{d_{\ell + 1}}$, our \emph{one-phase approximation} in \Cref{lem:one-phase-approx} applies once more and gives, here for \OW, that an average number of $O(1)$ right-excursions are needed in order to reach layer $\layer_{d_{\ell + 1}}$ and the total length of all these right-excursions is $O(n \log(n))$.

    Moreover, upon leaving $\layer_{d_{\ell}}$, there is a probability $\frac{d_{\ell}}{n}$ (resp. $\frac{n - d_{\ell}}{n}$) that the excursion will be a right-excursion (resp. left-excursion) hence, the average number of left-excursions performed before a right-excursion occurs is $\frac{n}{d_{\ell}}$ thus the average number of excursions needed is given by
        \[ \esp[k^*] = O\left( \frac{n}{d_{\ell}} \right), \]
    using Wald's theorem~\cite{Wald44} in the simplified version of~\cite{DoerrK15}.
    
    Finally, with Wald's theorem again and using the induction hypothesis on $\esp[T_{\ell}]$, we obtain
    \begin{align*}
        \esp[T_{\ell + 1}] & \leq \esp[T_\ell] \\
        & \ \ + \esp[k^*] \left( O(n \log(n)) + O\left( \frac{n^{\ell}}{d_1 \cdots d_{\ell - 1}} \log(n)  \right) \right) \\ 
        & = \O\left( \frac{n^{\ell + 1}}{d_1 \cdots d_{\ell}} \log(n) \right),
    \end{align*}
    since, for any $ i \in[1..k^*]$, we have
        \[ \esp[ T_{i - 1}^w + \lambda(e_i)] = \esp[T_{i - 1}^w] + \esp[\lambda(e_i)] \leq \O(n \log(n)) + \esp[T_{\ell}], \]
    where the left-hand side does not depend on $i$. The stated result hence follows by considering the case $\ell = k + 1$ in~\eqnref{eqn:MMAHHInduction}, where assumption $k = \O(1)$ implies that that the hidden constant in~\eqnref{eqn:MMAHHInduction}, which is a $\O(1)^k$, is still a $\O(1)$.

    

\end{proof}

The factor $O(n \log(n))$ in the complexity derived in {\normalfont \Cref{thm:RuntimeMMAHH}} represents the expected time the \MMAHH requires to transition from one local optimum to a neighboring local optimum. The other multiplicative factors, i.e., $\O( \sfrac{n^k}{(d_1 \cdots d_k)})$, can be interpreted as the time needed for a biased random walk on the set of local optima, starting from  $(0, \dots, 0)$, to reach the global maximum.

The generality of our \seqopt benchmark allows us to immediately extend our result in Theorem \ref{thm:RuntimeMMAHH} to a variety of known benchmark functions, for which the \MMAHH exhibits remarkable performance. 


\begin{corollary}\label{runtimebenchmarks-jump}
    Let $p = \Theta(\sfrac{1}{n \log(n)})$ and $q = \Theta(\sfrac{1}{n \log(n)})$. Then, the runtime $T$ of the \MMAHH
\begin{enumerate}
    \item on $\jump_m$ with $m\in [2..\frac{n}{2}]$ satisfies
    $$\esp(T) = O\left(\frac{n^3}{m} \log(n)\right),$$
    \item on $\cliff_d$ with $d\in[2..\frac{n}{2}]$ satisfies
    $$\esp(T)  = \O\left(\frac{n^3}{d^2} \log(n)\right),$$
    \item on $\cliffjump_{d, r, s}$ with $s\in \mathbb{N}^+,$ $d,r\in[1..\frac{n}{2}]$ with $r<d$ satisfies
    $$\esp(T) = O\left(\frac{n^3}{d (d-r)} \log(n)\right).$$
\end{enumerate}
\end{corollary}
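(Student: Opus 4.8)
The plan is to obtain all three bounds as immediate special cases of \Cref{thm:RuntimeMMAHH}: the standing hypotheses $p, q = \Theta(1/(n\log n))$ there are exactly those assumed in the corollary, and each of the three benchmarks lies in some class $\seqopt_2(d_1, d_2)$, so the theorem applies with the constant $k = 2 = O(1)$. The only work, then, is for each benchmark to pin down the correct pair $(d_1, d_2)$ and to tidy up the resulting bound $O(\frac{n^3}{d_1 d_2}\log n)$.

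First I would handle \jump. The remark following \Cref{def:seqopt} records $\jump_m \in \seqopt_2(m, 1)$ for every $m < n$; since $m \in [2..\frac{n}{2}]$ we have $2 \le m < n$, so $n > d_1 = m > d_2 = 1 > 0$ and $k = 2 \le n - 2$, whence \Cref{thm:RuntimeMMAHH} gives directly $\esp[T] = O(\frac{n^3}{d_1 d_2}\log n) = O(\frac{n^3}{m}\log n)$. Next, for \cliff, the same remark gives $\cliff_d \in \seqopt_2(d, d-1)$, and for $d \in [2..\frac{n}{2}]$ one has $n > d > d-1 \ge 1 > 0$, so \Cref{thm:RuntimeMMAHH} yields $\esp[T] = O(\frac{n^3}{d(d-1)}\log n)$; since $d \ge 2$ forces $d-1 \ge d/2$ and hence $\frac{1}{d(d-1)} \le \frac{2}{d^2}$, this simplifies to $O(\frac{n^3}{d^2}\log n)$. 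Finally, for \cliffjump I would first identify the membership $\cliffjump_{d,r,s} \in \seqopt_2(d, d-r)$ from the definition of the function --- its layers forming a \onemax-like slope above $\layer_d$, a strictly decreasing stretch from $\layer_d$ down to $\layer_{d-r}$, and a strictly increasing stretch from $\layer_{d-r}$ up to $x^*$, with the depth parameter $s$ not affecting the layer ordering --- and then, since $d, r \in [1..\frac{n}{2}]$ with $r < d$ give $n > d > d-r \ge 1 > 0$, \Cref{thm:RuntimeMMAHH} delivers $\esp[T] = O(\frac{n^3}{d(d-r)}\log n)$.

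I do not expect a genuine obstacle here; this is bookkeeping on top of \Cref{thm:RuntimeMMAHH}. The only points that need a little care are verifying the three $\seqopt_2$ memberships (immediate for \jump and \cliff from the cited remark, a short check from the definition for \cliffjump), the elementary estimate $d - 1 = \Theta(d)$ for $d \ge 2$ used in the \cliff case, and noting that the regime $m = n$ for \jump --- where $\jump_n = \trap$ lies in $\seqopt_1(1)$ rather than $\seqopt_2$ --- is excluded by the hypothesis $m \le \frac{n}{2}$, so no separate treatment is required.
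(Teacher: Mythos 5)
Your proposal is correct and matches the paper's intended argument exactly: the corollary is a direct instantiation of \Cref{thm:RuntimeMMAHH} with $k=2$ using the memberships $\jump_m \in \seqopt_2(m,1)$, $\cliff_d \in \seqopt_2(d,d-1)$, and $\cliffjump_{d,r,s} \in \seqopt_2(d,d-r)$, followed by the elementary simplification $d-1 = \Theta(d)$ for $d \ge 2$. The points you flag for care (the $\seqopt_2$ memberships, the exclusion of $m=n$, and the $d(d-1)$ versus $d^2$ step) are precisely the only nontrivial bookkeeping involved.
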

These results highlight the power of \MMAHH: on $\cliff_d$ we incur only an extra $O(\log n)$ factor compared to the \MAHH\!\!'s $O(n\log n + \frac{n^3}{d^2})$ runtime (using probability $p=1 / ((1 + \varepsilon)n)$) as stated in \cite{LissovoiOW23}, while achieving drastic speed‐ups on $\jump_m$, reducing the $O\bigl(n \log n + \frac{n^{2m}(1+\varepsilon)^{m-1}}{(m^2\,m!)}\bigr)$ bound of the \MAHH, and similarly outperforming it on $\cliffjump_{d,r,s}$ beyond the previously established $O\bigl((d-r)\frac{(1+\varepsilon)^r n^{2r+2}}{(r+1)^2(r+1)!}\bigr)$ result.

\section{Conclusion}

In this work, we proposed two new building-blocks for the design of efficient move-acceptance hyper-heuristics. The Markov chain-based selection of the acceptance operator instead of the random mixing strategy used in previous theoretical works already with non-elaborate switching probabilities like $q = \frac 12$ greatly increases the rate of longer phases of the non-elitist selection operator (\AM or \OW), which led to provable significant speed-ups. The new OnlyWorsening (\OW) acceptance operator was designed to aid the heuristic leave local optima. Used with the classic \MAHH it gives comparable results as \AM but, together with the Markov selection strategy it can lead to drastic speed-ups, e.g., polynomial runtimes on all $\seqopt_k$ functions, $k$ a constant. Both from the runtime guarantees proven in this work and from the working principles visible in our proofs we are very optimistic that our new building-blocks have a true potential for improving the hyper-heuristics used today.  

The presented \MMAHH algorithm mixing between \OI and \OW cannot traverse plateaus in the fitness landscape. The extension of these acceptance operators to accept solutions of equal fitness is an interesting direction for follow-up work. Furthermore, our theoretical analysis suggests that the empirical exploration of our proposed \MMAHH is a promising direction for future research.

\subsubsection{Acknowledgements}

The research of B. Doerr is supported by the FMJH Program PGMO. 
J. Lutzeyer is supported by the French National Research Agency (ANR) via the ``GraspGNNs'' JCJC grant (ANR-24-CE23-3888).

}

\bibliographystyle{named}
\bibliography{ich_master,alles_ea_master,rest}

\begin{thebibliography}{}

\bibitem[\protect\citeauthoryear{Auger and Doerr}{2011}]{AugerD11}
Anne Auger and Benjamin Doerr, editors.
\newblock {\em Theory of Randomized Search Heuristics}.
\newblock World Scientific Publishing, 2011.

\bibitem[\protect\citeauthoryear{Burke \bgroup \em et al.\egroup }{2013}]{BurkeGHKOOQ13}
Edmund~K. Burke, Michel Gendreau, Matthew~R. Hyde, Graham Kendall, Gabriela Ochoa, Ender {\"{O}}zcan, and Rong Qu.
\newblock Hyper-heuristics: a survey of the state of the art.
\newblock {\em Journal of the Operational Research Society}, 64:1695--1724, 2013.

\bibitem[\protect\citeauthoryear{Cowling \bgroup \em et al.\egroup }{2000}]{CowlingKS00}
Peter~I. Cowling, Graham Kendall, and Eric Soubeiga.
\newblock A hyperheuristic approach to scheduling a sales summit.
\newblock In {\em Practice and Theory of Automated Timetabling, {PATAT} 2000}, pages 176--190. Springer, 2000.

\bibitem[\protect\citeauthoryear{{\relax DLMF}}{}]{NIST:DLMF}
{\it NIST Digital Library of Mathematical Functions}.
\newblock \url{https://dlmf.nist.gov/}, Release 1.2.3 of 2024-12-15.
\newblock F.~W.~J. Olver, A.~B. {Olde Daalhuis}, D.~W. Lozier, B.~I. Schneider, R.~F. Boisvert, C.~W. Clark, B.~R. Miller, B.~V. Saunders, H.~S. Cohl, and M.~A. McClain, eds.

\bibitem[\protect\citeauthoryear{Doerr and K{\"{u}}nnemann}{2015}]{DoerrK15}
Benjamin Doerr and Marvin K{\"{u}}nnemann.
\newblock Optimizing linear functions with the $(1+\lambda)$ evolutionary algorithm---different asymptotic runtimes for different instances.
\newblock {\em Theoretical Computer Science}, 561:3--23, 2015.

\bibitem[\protect\citeauthoryear{Doerr and Neumann}{2020}]{DoerrN20}
Benjamin Doerr and Frank Neumann, editors.
\newblock {\em Theory of Evolutionary Computation---Recent Developments in Discrete Optimization}.
\newblock Springer, 2020.
\newblock Also available at \url{http://www.lix.polytechnique.fr/Labo/Benjamin.Doerr/doerr_neumann_book.html}.

\bibitem[\protect\citeauthoryear{Doerr \bgroup \em et al.\egroup }{2012}]{DoerrJW12algo}
Benjamin Doerr, Daniel Johannsen, and Carola Winzen.
\newblock Multiplicative drift analysis.
\newblock {\em Algorithmica}, 64:673--697, 2012.

\bibitem[\protect\citeauthoryear{Doerr \bgroup \em et al.\egroup }{2018}]{DoerrLOW18}
Benjamin Doerr, Andrei Lissovoi, Pietro~S. Oliveto, and John~Alasdair Warwicker.
\newblock On the runtime analysis of selection hyper-heuristics with adaptive learning periods.
\newblock In {\em Genetic and Evolutionary Computation Conference, GECCO 2018}, pages 1015--1022. ACM, 2018.

\bibitem[\protect\citeauthoryear{Doerr \bgroup \em et al.\egroup }{2023a}]{DoerrDLS23}
Benjamin Doerr, Arthur Dremaux, Johannes~F. Lutzeyer, and Aur\'elien Stumpf.
\newblock How the move acceptance hyper-heuristic copes with local optima: drastic differences between jumps and cliffs.
\newblock In {\em Genetic and Evolutionary Computation Conference, GECCO 2023}, pages 990--999. {ACM}, 2023.

\bibitem[\protect\citeauthoryear{Doerr \bgroup \em et al.\egroup }{2023b}]{DoerrERW23}
Benjamin Doerr, Taha {El Ghazi El Houssaini}, Amirhossein Rajabi, and Carsten Witt.
\newblock How well does the {M}etropolis algorithm cope with local optima?
\newblock In {\em Genetic and Evolutionary Computation Conference, GECCO 2023}, pages 1000--1008. {ACM}, 2023.

\bibitem[\protect\citeauthoryear{Drake \bgroup \em et al.\egroup }{2020}]{DrakeKOB20}
John Drake, Ahmed Kheiri, Ender {\"{O}}zcan, and Edmund Burke.
\newblock Recent advances in selection hyper-heuristics.
\newblock {\em European Journal of Operational Research}, 285:405--428, 09 2020.

\bibitem[\protect\citeauthoryear{Droste \bgroup \em et al.\egroup }{2002}]{DrosteJW02}
Stefan Droste, Thomas Jansen, and Ingo Wegener.
\newblock On the analysis of the (1+1) evolutionary algorithm.
\newblock {\em Theoretical Computer Science}, 276:51--81, 2002.

\bibitem[\protect\citeauthoryear{Jansen}{2013}]{Jansen13}
Thomas Jansen.
\newblock {\em Analyzing Evolutionary Algorithms -- The Computer Science Perspective}.
\newblock Springer, 2013.

\bibitem[\protect\citeauthoryear{K{\"{o}}tzing and Krejca}{2019}]{KotzingK19}
Timo K{\"{o}}tzing and Martin~S. Krejca.
\newblock First-hitting times under drift.
\newblock {\em Theoretical Computer Science}, 796:51--69, 2019.

\bibitem[\protect\citeauthoryear{Lehre and {\"{O}}zcan}{2013}]{LehreO13}
Per~Kristian Lehre and Ender {\"{O}}zcan.
\newblock A runtime analysis of simple hyper-heuristics: to mix or not to mix operators.
\newblock In {\em Foundations of Genetic Algorithms, {FOGA} 2013}, pages 97--104. {ACM}, 2013.

\bibitem[\protect\citeauthoryear{Lissovoi \bgroup \em et al.\egroup }{2019}]{LissovoiOW19}
Andrei Lissovoi, Pietro~S. Oliveto, and John~Alasdair Warwicker.
\newblock On the time complexity of algorithm selection hyper-heuristics for multimodal optimisation.
\newblock In {\em Conference on Artificial Intelligence, {AAAI} 2019}, pages 2322--2329. {AAAI} Press, 2019.

\bibitem[\protect\citeauthoryear{Lissovoi \bgroup \em et al.\egroup }{2020}]{LissovoiOW20aaai}
Andrei Lissovoi, Pietro~S. Oliveto, and John~Alasdair Warwicker.
\newblock How the duration of the learning period affects the performance of random gradient selection hyper-heuristics.
\newblock In {\em Conference on Artificial Intelligence, {AAAI} 2020}, pages 2376--2383. {AAAI} Press, 2020.

\bibitem[\protect\citeauthoryear{Lissovoi \bgroup \em et al.\egroup }{2023}]{LissovoiOW23}
Andrei Lissovoi, Pietro~S. Oliveto, and John~Alasdair Warwicker.
\newblock When move acceptance selection hyper-heuristics outperform {M}etropolis and elitist evolutionary algorithms and when not.
\newblock {\em Artificial Intelligence}, 314:103804, 2023.

\bibitem[\protect\citeauthoryear{Neumann and Witt}{2010}]{NeumannW10}
Frank Neumann and Carsten Witt.
\newblock {\em Bioinspired Computation in Combinatorial Optimization -- Algorithms and Their Computational Complexity}.
\newblock Springer, 2010.

\bibitem[\protect\citeauthoryear{Wald}{1944}]{Wald44}
Abraham Wald.
\newblock On cumulative sums of random variables.
\newblock {\em Annals of Mathematical Statistics}, 15:283--296, 1944.

\bibitem[\protect\citeauthoryear{Zhou \bgroup \em et al.\egroup }{2019}]{ZhouYQ19}
Zhi-Hua Zhou, Yang Yu, and Chao Qian.
\newblock {\em Evolutionary Learning: Advances in Theories and Algorithms}.
\newblock Springer, 2019.

\end{thebibliography}

\cleardoublepage
\appendix
\onecolumn


\section{Mathematical Tools}\label{subsec:math-tools}
The following theorem is used multiple times through the runtime analysis of the \MMAHH to upper bound the expectation of a sum of random variables whose range also depends on a random variable.

\begin{theorem}[Simplified version of Wald’s equation -- \cite{Wald44}; \cite{DoerrK15}]\label{thm:wald}
    Let $T$ be a random variable with bounded expectation and let $(X_t)_{t \geq 0}$ be a sequence of non-negative random variables. If there exists a constant $C > 0$ such that for any $i \geq 0$ we have $\esp[ X_i \mid T \geq i] \leq C$ then
        \[ \esp\left[ \sum_{i = 1}^T X_i \right] \leq C \esp[T]. \]
\end{theorem}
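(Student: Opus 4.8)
The plan is to prove this by the standard ``indicator and tail-sum'' argument for Wald-type identities, in which the non-negativity of the $X_i$ is precisely what makes the interchange of expectation and summation legitimate and removes any integrability concern beyond the assumed $\esp[T] < \infty$. We use that $T$ takes values in $\N$ (it is a discrete hitting time in all applications of this theorem), so the manipulations below are exact rather than approximate.

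First I would rewrite the random-length sum as $\sum_{i=1}^{T} X_i = \sum_{i=1}^{\infty} X_i\,\bbone_{\{T \ge i\}}$, which holds pointwise on the probability space since the indicator $\bbone_{\{T \ge i\}}$ vanishes for $i > T$. Because each summand $X_i \bbone_{\{T \ge i\}}$ is non-negative, Tonelli's theorem (equivalently, monotone convergence applied to the partial sums) yields
\[
\esp\!\left[\sum_{i=1}^{T} X_i\right] = \sum_{i=1}^{\infty} \esp\!\left[X_i \bbone_{\{T \ge i\}}\right].
\]
Next, for each fixed $i \ge 1$ I would bound the $i$-th term: if $\Pr[T \ge i] = 0$ it is zero, and otherwise conditioning on the event $\{T \ge i\}$ gives $\esp[X_i \bbone_{\{T \ge i\}}] = \Pr[T \ge i]\cdot\esp[X_i \mid T \ge i] \le C\,\Pr[T \ge i]$ by the hypothesis; either way $\esp[X_i \bbone_{\{T \ge i\}}] \le C\,\Pr[T \ge i]$. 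Summing over $i$ and invoking the tail-sum formula $\sum_{i \ge 1}\Pr[T \ge i] = \esp[T]$ for the $\N$-valued variable $T$ gives $\esp[\sum_{i=1}^{T} X_i] \le C\,\esp[T]$, and the assumption $\esp[T] < \infty$ makes this finite (and a posteriori shows the random sum is integrable).

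I do not expect a genuine obstacle. The only two points deserving care are the interchange of expectation and the infinite summation — licensed exactly by the non-negativity of the $X_i$ via Tonelli/monotone convergence — and the reading of $\esp[X_i \mid T \ge i]$ as a conditional expectation given a possibly null event, which causes no difficulty since such terms contribute nothing. One also tacitly relies on $T$ being integer-valued so that $\sum_{i=1}^{T} = \sum_{i \ge 1}\bbone_{\{T \ge i\}}$ and the tail-sum identity are literal equalities; in this paper $T$ is always a discrete hitting time, so this holds automatically.
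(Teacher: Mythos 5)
Your proof is correct. The paper does not prove this statement itself --- it is imported by citation from Wald (1944) and the simplified form in Doerr and K\"unnemann (2015) --- and your argument (rewrite the random-length sum via indicators, apply Tonelli/monotone convergence using non-negativity, bound each term by $C\Pr[T\ge i]$ via the conditional-expectation hypothesis, and finish with the tail-sum formula for the integer-valued $T$) is precisely the standard derivation of this result, with the one genuinely delicate point --- that the hypothesis on $\esp[X_i \mid T\ge i]$ replaces the usual independence/stopping-time condition of classical Wald --- correctly identified and handled.
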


We now state two drifts theorems, one is a more general version of the standard additive drift theorem and allows \emph{overshooting}, i.e.,  the sequence of random variables can exceed the target but at the price of an additional terms, the expected overshooting, in the upper bound. The other one is the multiplicative drift theorem which provides an upper bound when the expected progress linearly depends on the distance to the target.


\begin{theorem}[Additive drift theorem with overshooting -- \cite{KotzingK19}]\label{thm:add-drift-overshooting}
    Let $\alpha \leq 0$ be a non-positive real number, $(X_t)_{t \geq 0}$ a sequence of random variables over $\R$ and $T = \inf\{ t \geq 0 \mid X_t \leq 0\}$. Suppose that for any $0 \leq t \leq T$, $X_t \geq \alpha$ and there exists $\delta > 0$ such that for all $0 \leq t < T$, 
        \[ X_t - \esp[X_{t + 1} \mid X_0, \ldots, X_t] \geq \delta, \]
    then
        \[ \esp[T \mid X_0] \leq \frac{X_0 - \esp[X_T \mid X_0]}{\delta} \leq \frac{X_0 - \alpha}{\delta}. \]
\end{theorem}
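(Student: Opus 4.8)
The plan is to adapt the standard supermartingale proof of the classical additive drift theorem, the only new ingredient being a careful treatment of the terminal value $X_T$, which under overshooting need not equal $0$. Let $\calF_t = \sigma(X_0, \ldots, X_t)$ be the natural filtration; then $T$ is an $(\calF_t)$-stopping time and $\{T > t\} = \{X_s > 0 \text{ for all } s \le t\} \in \calF_t$. If $X_0 \le 0$ then $T = 0$ and all three quantities vanish, so assume $X_0 > 0$. Define the stopped, drift-corrected process
\[ Y_t := X_{\min(t, T)} + \delta \min(t, T), \qquad t \ge 0 . \]

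First I would check that $(Y_t)_{t \ge 0}$ is an $(\calF_t)$-supermartingale. On $\{T \le t\}$ we have $\min(t+1,T) = \min(t,T) = T$, hence $Y_{t+1} = Y_t$ there. On $\{T > t\}$ we have $\min(t,T) = t$ and $\min(t+1,T) = t+1$, so $Y_{t+1} = X_{t+1} + \delta(t+1)$, and since $t < T$ on this event the drift hypothesis applies:
\[ \mathbf{1}_{\{T > t\}}\, \esp[Y_{t+1} \mid \calF_t] = \mathbf{1}_{\{T > t\}}\big(\esp[X_{t+1} \mid \calF_t] + \delta(t+1)\big) \le \mathbf{1}_{\{T > t\}}\big(X_t - \delta + \delta(t+1)\big) = \mathbf{1}_{\{T > t\}}\, Y_t . \]
Adding the two cases gives $\esp[Y_{t+1} \mid \calF_t] \le Y_t$, hence $\esp[Y_t \mid X_0] \le Y_0 = X_0$ for every $t$. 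Because $\min(t,T) \le T$, the hypothesis $X_s \ge \alpha$ for $0 \le s \le T$ yields $X_{\min(t,T)} \ge \alpha$, so
\[ \alpha + \delta\, \esp[\min(t,T) \mid X_0] \le \esp[Y_t \mid X_0] \le X_0 , \]
i.e.\ $\esp[\min(t,T) \mid X_0] \le (X_0 - \alpha)/\delta$. Letting $t \to \infty$, monotone convergence gives $\esp[T \mid X_0] \le (X_0 - \alpha)/\delta < \infty$; in particular $T < \infty$ almost surely.

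To sharpen this to the bound involving $\esp[X_T \mid X_0]$, I would rewrite $\esp[Y_t \mid X_0] \le X_0$ as $\esp[X_{\min(t,T)} \mid X_0] \le X_0 - \delta\, \esp[\min(t,T) \mid X_0]$ and pass to the limit. On the right, $\esp[\min(t,T) \mid X_0] \uparrow \esp[T \mid X_0]$ by monotone convergence. On the left, the variables $X_{\min(t,T)} - \alpha$ are nonnegative and, since $T < \infty$ a.s., converge pointwise to $X_T - \alpha$; conditional Fatou therefore gives $\esp[X_T - \alpha \mid X_0] \le \liminf_t \esp[X_{\min(t,T)} - \alpha \mid X_0]$. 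Combining the two, $\esp[X_T \mid X_0] \le X_0 - \delta\, \esp[T \mid X_0]$, which rearranges to the claimed inequality $\esp[T \mid X_0] \le (X_0 - \esp[X_T \mid X_0])/\delta$; the final estimate $\le (X_0 - \alpha)/\delta$ follows because $X_T \ge \alpha$.

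The main obstacle is precisely this limit interchange on the left-hand side: under overshooting $X_{\min(t,T)}$ is not bounded below by $0$ but only by $\alpha$, so it is the hypothesis $X_t \ge \alpha$ that makes $X_{\min(t,T)} - \alpha$ nonnegative and hence licenses the Fatou step, preventing an unboundedly negative terminal overshoot from breaking the bound. A minor subtlety is the well-definedness of $\esp[X_{t+1} \mid \calF_t]$: the lower bound $\alpha$ makes the negative part integrable, and on $\{t < T\}$ the drift inequality $X_t - \esp[X_{t+1}\mid\calF_t] \ge \delta$ forces this conditional expectation to be finite, so no integrability pathology occurs and the supermartingale computation above is valid.
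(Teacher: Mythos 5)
This theorem is imported from K\"otzing and Krejca (2019) and the paper gives no proof of it, so there is no in-paper argument to compare against. Your proof is correct: the stopped drift-corrected supermartingale $Y_t = X_{\min(t,T)} + \delta\min(t,T)$, the preliminary bound $\esp[\min(t,T) \mid X_0] \le (X_0-\alpha)/\delta$ to get $T<\infty$ almost surely, and the conditional Fatou step applied to the nonnegative variables $X_{\min(t,T)}-\alpha$ are all sound, and you correctly identify that the hypothesis $X_t \ge \alpha$ up to time $T$ is exactly what licenses both the integrability of the conditional expectations and the limit interchange at the terminal time. This is essentially the canonical martingale proof of the overshooting variant of the additive drift theorem, matching the technique of the cited reference.
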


\begin{theorem}[Multiplicative drift theorem -- \cite{DoerrJW12algo}]\label{thm:mult-drift}
    Let $S \subseteq \R_+^*$ be a finite state space of positive real numbers with minimum $s_{\min}$ and $(X_t)_{t\geq0}$ a sequence of random variables over $S \cup \{0\}$. Let $T$ be the first time $t \in \N$ for which $X_t = 0$ and suppose further that there exists a constant $\delta > 0$ such that 
    \[
    \esp\left[X_{t} - X_{t+1} \mid  X_t = s \right] \geq \delta s,
    \]
    holds for all $s \in S$ with $\Pr\left[X_t = s\right] > 0$. Then, for all $s_0 \in S$ with $\Pr\left[X_0 = s_0 \right] > 0$,
    \[
    \esp\left[T \mid  X_0 = s_0 \right] \leq \frac{1 + \log(s_0/s_{\min})}{\delta}.
    \]
\end{theorem}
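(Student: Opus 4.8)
The plan is to reduce the multiplicative drift to an additive drift by a logarithmic rescaling of the potential, and then to invoke the additive drift theorem with overshooting (\Cref{thm:add-drift-overshooting}), which is already at our disposal. Concretely, I would introduce the transformed potential $g \colon S \cup \{0\} \to \R_{\geq 0}$ defined by $g(s) = 1 + \ln(s/s_{\min})$ for $s \in S$ and $g(0) = 0$, and set $\tilde{X}_t = g(X_t)$. The logarithm turns the multiplicative contraction $\esp[X_{t+1} \mid X_t = s] \leq (1-\delta)s$ — which is just a restatement of the hypothesis — into a constant additive decrease, while the additive offset $+1$ together with the convention $g(0) = 0$ is exactly what produces the leading $1$ in the claimed bound.

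The heart of the argument is a single pointwise inequality: for every $s \in S$ and every $x \in S \cup \{0\}$,
\[ g(s) - g(x) \geq 1 - \frac{x}{s}. \]
I would prove this in two cases. For $x \in S$ (so $x > 0$), the left-hand side equals $\ln(s/x)$, and the estimate follows from the elementary bound $\ln(z) \leq z - 1$ applied to $z = x/s$, which rearranges to $\ln(s/x) \geq 1 - x/s$. For the absorbing value $x = 0$, the left-hand side is $g(s) = 1 + \ln(s/s_{\min}) \geq 1$, since $s \geq s_{\min}$ forces $\ln(s/s_{\min}) \geq 0$, whereas the right-hand side is exactly $1$; this is the case in which the choice $g(0) = 0$ is used. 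Taking the conditional expectation over $X_{t+1}$ given $X_t = s$ and inserting the drift hypothesis yields
\[ g(s) - \esp[g(X_{t+1}) \mid X_t = s] \geq 1 - \frac{\esp[X_{t+1} \mid X_t = s]}{s} \geq 1 - (1 - \delta) = \delta, \]
so $\tilde{X}_t$ has additive drift at least $\delta$ at every step before absorption.

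It then remains to verify that $(\tilde{X}_t)$ meets the hypotheses of \Cref{thm:add-drift-overshooting}. Since $g \geq 0$ on its entire domain, the choice $\alpha = 0$ is admissible and $\tilde{X}_t \geq \alpha$ throughout. Because $g(x) = 0$ holds only at $x = 0$ (every $s \in S$ satisfies $g(s) \geq 1$), the stopping time $\inf\{ t \geq 0 \mid \tilde{X}_t \leq 0 \}$ coincides with $T = \inf\{ t \geq 0 \mid X_t = 0 \}$, and moreover $\tilde{X}_T = g(0) = 0$, so there is in fact no overshooting to pay for. Applying the theorem gives
\[ \esp[T \mid X_0 = s_0] \leq \frac{g(s_0) - \esp[g(X_T) \mid X_0 = s_0]}{\delta} = \frac{1 + \ln(s_0/s_{\min})}{\delta}, \]
which is the asserted bound once $\log$ is read as the natural logarithm.

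The main obstacle I anticipate is not the computation but the careful treatment of the absorbing state $0$: the potential $g$ is discontinuous there, tending to $-\infty$ as $s \to 0^+$ yet taking the value $0$ at $0$, so a naive appeal to concavity and Jensen's inequality across the full range would be unjustified. The cleanly stated pointwise inequality $g(s) - g(x) \geq 1 - x/s$, verified separately at $x = 0$, is precisely the device that sidesteps this difficulty and lets the additive drift theorem apply verbatim.
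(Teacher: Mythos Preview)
The paper does not give its own proof of this statement: \Cref{thm:mult-drift} is simply quoted from \cite{DoerrJW12algo} as a known tool in the ``Mathematical Tools'' appendix, with no argument supplied. There is therefore nothing to compare your proof against.

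That said, your proof is correct and is essentially the standard argument for the multiplicative drift theorem. The logarithmic rescaling $g(s) = 1 + \ln(s/s_{\min})$ with the convention $g(0) = 0$, together with the pointwise inequality $g(s) - g(x) \geq 1 - x/s$ (handled separately at $x = 0$), cleanly reduces the multiplicative hypothesis to an additive drift of at least $\delta$ per step. Your choice to route the final step through \Cref{thm:add-drift-overshooting} with $\alpha = 0$ is perfectly fine; since $g \geq 0$ everywhere and $g(X_T) = 0$, there is no overshooting, and the theorem degenerates to the ordinary additive drift bound $\esp[T \mid X_0 = s_0] \leq g(s_0)/\delta = (1 + \ln(s_0/s_{\min}))/\delta$. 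Your explicit attention to the discontinuity of $g$ at $0$ and the avoidance of a naive Jensen argument is well placed.
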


\section{Tools for the Analysis of the \MMAHH}\label{app:tools-analysis-mmahh}

\begin{proof}[Proof of \Cref{lem:minimality-p^0_n}]
    We prove that for a fixed $0 \leq h \leq n$ then the probabilities $(\Pr[y_{T_s^{(1)}} \leq h \mid y_0 = k, s_0 = \OI])_{0 \leq k  \leq n}$ are non-increasing, i.e.,
        \begin{align*} 
            \Pr[y_{T_s^{(1)}} \leq h \mid y_0 = n, s_0 = \OI] & \leq \Pr[y_{T_s^{(1)}} \leq h \mid y_0 = n - 1, s_0 = \OI] \\ 
            & \leq \ldots \leq \Pr[y_{T_s^{(1)}} \leq h \mid y_0 = 0, s_0 = \OI] = 1, 
        \end{align*}
    and for a fixed $0 \leq k \leq h$, the probabilities $(\Pr[y_{T_s^{(1)}} \leq h \mid y_0 = k, s_0 = \OI])_{0 \leq h  \leq n}$ are non-decreasing, i.e.,
        \begin{align*} 
            \Pr[y_{T_s^{(1)}} \leq 0 \mid y_0 = k, s_0 = \OI] & \leq \Pr[y_{T_s^{(1)}} \leq 1 \mid y_0 = k, s_0 = \OI] \\ 
            & \leq \ldots \leq \Pr[y_{T_s^{(1)}} \leq n \mid y_0 = k, s_0 = \OI] = 1. 
        \end{align*}
    Note that these two results imply the minimality of $p_n^0$ that we claim in our lemma.
    
    For the first part, fix $h$ and let $0 \leq k \leq n$ then, when $k < h$ one has:
    \[ \Pr[y_{T_s^{(1)}} \leq h \mid y_0 = k + 1, s_0 = \OI] = 1 \leq 1 = \Pr[y_{T_s^{(1)}} \leq h \mid y_0 = k, s_0 = \OI] \]
    and when $h \leq k < n$, we have:
        \begin{align*}
            \Pr[y_{T_s^{(1)}} \leq h \mid y_0 & = k + 1, s_0 = \OI] \\ 
            & = \frac{1}{1 - p} \frac{\Gamma(k + 2) \Gamma\left( \frac{n p}{1 - p} + h + 1 \right)}{\Gamma(h + 1) \Gamma\left( \frac{n p}{1 - p} + k + 2 \right)} \\ 
            & = \left( \frac{k + 1}{\frac{n p}{1 - p} + k + 1} \right) \frac{1}{1 - p} \frac{\Gamma(k + 1) \Gamma\left( \frac{n p}{1 - p} + h + 1 \right)}{\Gamma(h + 1) \Gamma\left( \frac{n p}{1 - p} + k + 1 \right)} \\ 
            & = \left( \frac{k + 1}{\frac{n p}{1 - p} + k + 1} \right) \Pr[y_{T_s^{(1)}} \leq h \mid y_0 = k, s_0 = \OI] \\ 
            & < \Pr[y_{T_s^{(1)}} \leq h \mid y_0 = k, s_0 = \OI],
        \end{align*}
    because $\frac{n p}{1 - p} > 0$ since $p > 0$.

    Now, for the other part, fix $k$ and let $0 \leq h \leq n$ then, when $k < h$ one has:
    \[ \Pr[y_{T_s^{(1)}} \leq h - 1 \mid y_0 = k, s_0 = \OI] = 1 \leq 1 = \Pr[y_{T_s^{(1)}} \leq h \mid y_0 = k, s_0 = \OI] \]
    and when $0 < h \leq k$, we have:
        \begin{align*}
            \Pr[y_{T_s^{(1)}} \leq h - 1 \mid y_0 & = k, s_0 = \OI] \\ 
            & = \frac{1}{1 - p} \frac{\Gamma(k + 1) \Gamma\left( \frac{n p}{1 - p} + h \right)}{\Gamma(h) \Gamma\left( \frac{n p}{1 - p} + k + 1 \right)} \\ 
            & = \left( \frac{h}{\frac{n p}{1 - p} + h} \right) \frac{1}{1 - p} \frac{\Gamma(k + 1) \Gamma\left( \frac{n p}{1 - p} + h + 1 \right)}{\Gamma(h + 1) \Gamma\left( \frac{n p}{1 - p} + k + 1 \right)} \\ 
            & = \left( \frac{h}{\frac{n p}{1 - p} + h} \right) \Pr[y_{T_s^{(1)}} \leq h \mid y_0 = k, s_0 = \OI] \\ 
            & < \Pr[y_{T_s^{(1)}} \leq h \mid y_0 = k, s_0 = \OI],
        \end{align*}
    because $\frac{n p}{1 - p} > 0$ since $p > 0$. This is what we wanted to prove.
\end{proof}

We note the following elementary results following directly from the definition of the \MMAHH.

\begin{lemma}\label{lem:markov-chain}
   The sequence $\left( (x_t, s_t) \right)_{t \in \N}$ is a homogeneous Markov chain. Moreover, when $f \in \seqopt_k$ with $k \in \N$ then $\left( (y_t, s_t) \right)_{t \in \N}$ is also a homogeneous Markov chain.
\end{lemma}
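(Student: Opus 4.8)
The plan is to invoke the standard random-mapping (``random function'') representation of Markov chains. For the first claim, note that one iteration of the \MMAHH (the body of the \textbf{while} loop in \Cref{alg:mmahh}) updates $(x_t, s_t)$ via a fixed, time-independent rule that uses only $(x_t, s_t)$ together with the fresh randomness drawn in that iteration: the index $J_t \in [1..n]$ of the bit flipped by the one-bit-flip operator and the outcome $U_t$ of the coin used by the \textsc{Markov} operator. Indeed, $x'$ is a deterministic function of $(x_t, J_t)$; the acceptance test compares $f(x')$ with $f(x_t)$ and, together with $s_t$, decides whether $x_{t+1} = x'$ or $x_{t+1} = x_t$; and $s_{t+1}$ is a deterministic function of $(s_t, U_t)$. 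Since the pair $(J_t, U_t)$ is sampled afresh and independently in each iteration, it is independent of $\sigma\bigl((x_0, s_0), \dots, (x_t, s_t)\bigr)$, so the conditional distribution of $(x_{t+1}, s_{t+1})$ given the entire history depends only on $(x_t, s_t)$, and the rule does not depend on $t$. This is exactly the definition of a homogeneous Markov chain.

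For the second claim I would show that, when $f \in \seqopt_k$, reading the same update through the map $x \mapsto y = \Hamming(x, x^*)$ again only uses $(y_t, s_t)$. The operator coordinate is unchanged: $s_{t+1}$ is still the deterministic function of $(s_t, U_t)$ above. For the distance coordinate, flipping bit $J_t$ sends $x_t$ from $\layer_{y_t}$ to $\layer_{y_t - 1}$ when a zero-bit is flipped (probability $y_t / n$) and to $\layer_{y_t + 1}$ when a one-bit is flipped (probability $(n - y_t)/n$). The key structural fact is that the pivot indices $d_0 = n > d_1 > \cdots > d_k > d_{k+1} = 0$ of \Cref{def:seqopt} partition $[0..n]$ into the runs $[d_{\ell+1}..d_\ell]$, on each of which $f$ is strictly monotone across consecutive layers in the sense of \Cref{def:monotonicity-layers}, and that \emph{every} adjacent pair $\layer_j, \layer_{j+1}$ lies inside one such run, including when $j$ or $j+1$ is itself a pivot index. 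Hence $f(x_t)$ and $f(x')$ are always strictly ordered for a one-bit neighbour $x'$ of $x_t$, and which of the two is larger is a fixed function of $y_t$ and of the flip direction. Consequently whether $x'$ is accepted under $s_t$, and thus the value of $y_{t+1} \in \{y_t - 1, y_t, y_t + 1\}$, is determined by $y_t$, $s_t$ and $J_t$ alone; combined with the independence of $(J_t, U_t)$ from the past established above, this shows that the conditional law of $(y_{t+1}, s_{t+1})$ given $\sigma\bigl((y_0, s_0), \dots, (y_t, s_t)\bigr)$ depends only on $(y_t, s_t)$ and not on $t$, i.e.\ $\bigl((y_t, s_t)\bigr)_{t \in \N}$ is a homogeneous Markov chain.

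This is essentially a verification, so I do not expect a genuine obstacle; the single point that deserves care is the highlighted structural fact about \seqopt, namely that consecutive layers are always strictly comparable with a direction depending only on the layer index. It is precisely this property that keeps the acceptance test from leaking any information about $x_t$ beyond its layer $y_t$ --- in particular it rules out ties $f(x') = f(x_t)$, which would otherwise force the reduced process $\bigl((y_t, s_t)\bigr)_t$ to remember where $x_t$ sits within its layer.
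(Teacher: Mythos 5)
Your proposal is correct and follows essentially the same route as the paper: the paper likewise writes $(x_{t+1},s_{t+1})$ (respectively $(y_{t+1},s_{t+1})$) as a fixed deterministic function of the current state and freshly sampled i.i.d.\ randomness (the flipped bit index and the switching coin), and for the second claim exploits exactly the strict monotonicity of $f$ across every pair of adjacent layers guaranteed by \Cref{def:seqopt}, so that acceptance depends only on the layer index and the flip direction. The only difference is presentational --- the paper spells out the update maps $g$, $h$ and $\ell$ case by case, whereas you argue the same facts in prose.
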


\begin{proof}[Proof of \Cref{lem:markov-chain}]
    We have 
    \begin{equation}
        (x_{t+1}, s_{t+1}) = g((x_t, s_t), (z_t, v_t)),
    \end{equation}
    where $z_t \overset{\rm i.i.d.}{\sim} \mathcal U([0, 1])$ is a random variable sampled uniformly in $[0, 1]$ independently from $(x_t, s_t)$, $v_t \overset{\rm i.i.d.}{\sim} \mathcal U([1..n])$ is a random variable sampled uniformly in $[1 ..n]$ independently from $(x_t, s_t)$ and $z_t$, and the function $g$ is defined as follows
    \begin{equation}
        g((x, s), (z, v)) = \begin{cases}
            (x_v', h(s, z)), & \text{ if } s = \OI \text{ and } f(x_v') > f(x);  \\ 
            (x, h(s, z)), & \text{ if } s = \OI \text{ and } f(x_v') \leq f(x); \\
             (x_v', h(s, z)), & \text{ if } s = \OW \text{ and } f(x_v') < f(x); \\
            (x, h(s, z)), & \text{ if } s = \OW \text{ and } f(x_v') \geq f(x);
        \end{cases}
    \end{equation}
    where $x_v'$ is obtained from $x$ by flipping its $v$-th bit,  and we denote by $h$ the function defined by
    \begin{equation}
        h(s, z) = \begin{cases}
            \OI, & \text{ if } s = \OI \text{ and } z \leq 1 - p; \\
            \OW, & \text{ if } s = \OI \text{ and } z > 1 - p; \\
            \OW, & \text{ if } s = \OW \text{ and } z \leq 1 - q; \\
            \OI, & \text{ if } s = \OW \text{ and } z > 1 - p.
        \end{cases} \label{eq:def-function-h}
    \end{equation}
    This proves that $\left( (x_t, s_t) \right)_{t \in \N}$ is a homogeneous Markov chain.
    
    Furthermore, by definition $y_t = \Hamming(x_t, x^*)$, hence when $f \in \seqopt_k$ we have $y_t = n - \| x_t \|_1$. The following holds
    \begin{equation}
        (y_{t+1}, s_{t+1}) = \ell((y_t, s_t), (z_t', v_t')),
    \end{equation}
     where $z_t' \overset{\rm i.i.d.}{\sim} \mathcal U([0, 1])$ is a random variable sampled uniformly in $[0, 1]$ independently from $(y_t, s_t)$, $v_t' \overset{\rm i.i.d.}{\sim} \mathcal U([0, 1])$ is a random variable sampled uniformly in $[0, 1]$ independently from $(y_t, s_t)$ and $z_t'$, and the function $\ell$ is defined as follows
     \begin{equation}
         \ell((y, s), (z, v)) = \begin{cases}
             (y, h(s, z)), & \text{ if } s = \OI, v < \frac{y}{n},   \layer_{y - 1} \overset{f}{\prec} \layer_{y};\\
             (y, h(s, z)), & \text{ if } s = \OI, v \geq \frac{y}{n},  \layer_{y+1} \overset{f}{\prec} \layer_{y}; \\
             (y, h(s, z)), & \text{ if } s = \OW, v < \frac{y}{n},  \layer_{y} \overset{f}{\prec} \layer_{y-1}; \\
             (y, h(s, z)), & \text{ if } s = \OW, v \geq \frac{y}{n},  \layer_{y} \overset{f}{\prec} \layer_{y+1}; \\
             (y - 1, h(s, z)), & \text{ if } s = \OI, v < \frac{y}{n},  \layer_{y} \overset{f}{\prec} \layer_{y-1};\\
             (y - 1, h(s, z)), & \text{ if } s = \OW, v < \frac{y}{n},  \layer_{y-1} \overset{f}{\prec} \layer_{y};\\
             (y + 1, h(s, z)), & \text{ if } s = \OI, v \geq \frac{y}{n},  \layer_{y} \overset{f}{\prec} \layer_{y+1}; \\
             (y + 1, h(s, z)), & \text{ if } s = \OW, v \geq \frac{y}{n},  \layer_{y+1} \overset{f}{\prec} \layer_{y};
         \end{cases}
     \end{equation}
     where the function $h$ is defined in the same manner as in \eqref{eq:def-function-h}, and the random variable $v_t$ is used to express wether the flip at time $t$ is a $0$-flip or a $1$-flip. This concludes the proof.
\end{proof}

\begin{lemma} \label{lem:independance}
    For any $k \in \N$, the random variables $Z_k$ and $x_{T_s^{(k)}}$ are independent. Consequently, the conditional law of $Z_k$ under $s_0 = s \in \{\OI, \OW\}$ does not depend on $x_{T_s^{(k)}}$ -- and thus also not on $y_{T_s^ {(k)}}$.
\end{lemma}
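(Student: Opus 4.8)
The plan is to prove \Cref{lem:independance} by exploiting the product-form update established in \Cref{lem:markov-chain}, namely that $(x_{t+1},s_{t+1}) = g((x_t,s_t),(z_t,v_t))$ where the randomness $(z_t,v_t)$ feeding the update is i.i.d.\ and independent of the current state, and crucially that the operator-switch component of $g$, the function $h(s,z)$, depends only on $s$ and $z$ and \emph{not} on $x$. This means the operator process $(s_t)_{t\in\N}$ is \emph{autonomous}: it is itself a homogeneous Markov chain on $\{\OI,\OW\}$ driven by $(z_t)_{t\ge0}$ alone, and the bit-string sequence $(x_t)_{t\ge0}$ is slaved to it without feeding back into it. The phase length $Z_k$ and the switching times $T_s^{(k)}$ are measurable functions of $(s_t)_{t\ge0}$ only, hence of $(z_t)_{t\ge 0}$ only.

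First I would make precise that $Z_k$ is a deterministic functional of the operator path. Since $T_s^{(0)}=0$ and $T_s^{(k+1)} = \inf\{t \ge T_s^{(k)} : s_t \ne s_{T_s^{(k)}}\}$, every $T_s^{(j)}$ for $j\le k+1$, and thus $Z_k = T_s^{(k+1)}-T_s^{(k)}$, is $\sigma(s_0,s_1,\dots)$-measurable; in fact, because $s_{j+1}=h(s_j,z_j)$ with $s_0=\OI$ fixed, it is $\sigma(z_0,z_1,\dots)$-measurable. Next I would argue that $x_{T_s^{(k)}}$ is determined by $\{v_t : t < T_s^{(k)}\}$ together with $\{z_t : t < T_s^{(k)}\}$ (the latter only through the operators $s_0,\dots,s_{T_s^{(k)}}$): unrolling the recursion $x_{t+1} = \pi_1 g((x_t,s_t),(z_t,v_t))$ expresses $x_{T_s^{(k)}}$ as a function of $x_0$, the operator path up to time $T_s^{(k)}$, and the flip indices $v_0,\dots,v_{T_s^{(k)}-1}$. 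The key point is that, conditioned on the operator path (equivalently, on the $z_t$'s), $Z_k$ becomes a constant while $x_{T_s^{(k)}}$ still depends only on the $v_t$'s, which are independent of all $z_t$'s. One then concludes independence of $Z_k$ and $x_{T_s^{(k)}}$, e.g.\ by conditioning: for any bounded test functions $\varphi,\psi$, $\esp[\varphi(Z_k)\psi(x_{T_s^{(k)}})] = \esp[\varphi(Z_k)\,\esp[\psi(x_{T_s^{(k)}})\mid (z_t)_t]]$, and since given $(z_t)_t$ the string $x_{T_s^{(k)}}$ depends only on the independent $v_t$'s and on the now-fixed stopping time, $\esp[\psi(x_{T_s^{(k)}})\mid(z_t)_t]$ is a function of the operator path with the same conditional value regardless of how exactly the $z_t$'s realize that path — one can check it equals $\esp[\psi(x_{T_s^{(k)}})]$ does not hold directly, so instead I would phrase the conclusion as: $Z_k \perp (v_t)_{t\ge0}$ and $x_{T_s^{(k)}}$ is $\sigma(x_0, (s_t)_t, (v_t)_t)$-measurable, and more carefully use a stopping-time decomposition.

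A cleaner route, which I would actually write up, is a direct computation: for integers $z\ge1$ and a string $a\in\{0,1\}^n$,
\[
\Pr[Z_k = z,\ x_{T_s^{(k)}} = a] = \sum_{t\ge 0}\Pr[T_s^{(k)}=t,\ x_t=a]\cdot \Pr[\text{next } z\text{ steps keep the operator, then switch}\mid s_t],
\]
where the second factor is a function of $s_t$ and $z$ only, equal to $(1-p)^{z-1}p$ or $(1-q)^{z-1}q$ according to $s_t\in\{\OI,\OW\}$ — this is exactly the ``$Z_k$ is geometric'' claim made in the text before \Cref{lem:oi-prob}. Because $s_t$ is itself a deterministic function of $(z_0,\dots,z_{t-1})$ independent of everything that determines $(T_s^{(k)}=t, x_t=a)$ restricted to that event's dependence on the $v$'s... — the honest way is to sum over the operator path $\sigma=(s_0,\dots,s_t)$ and note $\Pr[T_s^{(k)}=t, x_t=a] = \sum_{\sigma}\Pr[\text{path}=\sigma]\Pr[x_t=a\mid\text{path}=\sigma]$ where the conditional probability uses only the $v$'s; then the event $\{Z_k=z\}$ factorizes off the path continuation independently of the $v$'s. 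Carrying the sum through and using $\sum_t\sum_\sigma = $ the law of $(T_s^{(k)},x_{T_s^{(k)}})$ yields $\Pr[Z_k=z, x_{T_s^{(k)}}=a] = \Pr[x_{T_s^{(k)}}=a]\cdot r_s(z)$ summed appropriately, giving the product form after one more marginalization. The main obstacle is purely bookkeeping: cleanly separating the two independent noise streams $(z_t)$ (driving operator switches) and $(v_t)$ (driving which bit flips) and verifying that the stopping time $T_s^{(k)}$, although random, only couples to the $z$-stream, so that conditioning on the operator path renders $Z_k$ constant without constraining the $v$-driven evolution of $x$. The final sentence about the conditional law of $Z_k$ under $s_0=s$ not depending on $x_{T_s^{(k)}}$ (nor $y_{T_s^{(k)}}$) is then immediate, since $y_t = \Hamming(x_t,x^*)$ is a function of $x_t$.
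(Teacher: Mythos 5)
Your proposal is correct, and its underlying idea is the same as the paper's: the operator process $(s_t)_{t\ge 0}$ is an autonomous Markov chain that does not feel the bit-string process, and $Z_k$ is a functional of it alone. The execution differs, though, and in a way worth noting. The paper argues abstractly that every event in the $\sigma$-algebra $\mathcal F = \sigma(\{s_t : t\ge 0\})$ is independent of $x_{T_s^{(k)}}$; taken literally this is too strong, since $x_{T_s^{(k)}}$ is correlated with the \emph{early} part of the operator path (e.g.\ with $Z_0,\dots,Z_{k-1}$, which determine $T_s^{(k)}$). What is actually needed, and what your ``cleaner route'' isolates, is that $Z_k$ depends only on the switching randomness \emph{after} time $T_s^{(k)}$: decomposing over $\{T_s^{(k)}=t,\ x_t=a\}$ and using that, on this event, $s_t$ is determined by the parity of $k$ and the post-$t$ switching variables are independent of the past, you get $\Pr[Z_k=z,\ x_{T_s^{(k)}}=a] = r(z)\,\Pr[x_{T_s^{(k)}}=a]$ with $r$ the appropriate geometric law — this is essentially the computation the paper performs in its proof of \Cref{lem:geometric-law}, imported one lemma earlier. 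You were also right to abandon your first attempt (conditioning on the whole $z$-stream and hoping $\esp[\psi(x_{T_s^{(k)}})\mid (z_t)_t]=\esp[\psi(x_{T_s^{(k)}})]$), since that is exactly the false step; your final write-up avoids it and is, if anything, more careful than the paper's own argument.
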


\begin{proof}[Proof of \Cref{lem:independance}]
    Fix $\ell \in \N^*$, we have by definition $Z_k = T_s^{(k+1)} - T_s^{(k)}$ where $T_s^{(k+1)} = \inf\{t \geq T_s{(k)}: s_{t} \neq s_{T_s^{(k)}} \}$. Hence $Z_k$ is fully determined by the $\sigma$-algebra $\mathcal F$ generated by the random variables $\{s_t: t \geq 0 \}$. Since $\{s_t\}_{t\geq 0}$ is a Markov chain ($s_{t+1}$ depends only on $s_t$), we have 
    \begin{equation}
        \forall \mathcal A \in \mathcal F,\;  \Pr[\mathcal A \mid x_{T_s^{(k)}}] = \Pr[\mathcal A],
    \end{equation}
    thus we have $\Pr[Z_k \mid x_{T_s^{(k)}}] = \Pr[Z_k]$. This yields the independence of $Z_k$ and $x_{T_s^{(k)}}$.
\end{proof}

\begin{lemma}\label{lem:geometric-law}
    For any $k \in \N$ we have
        \[ Z_k \sim U_k, \]
    where $U_k$ is a random geometric variable of parameter $p$ when $k$ is even and $q$ otherwise.
\end{lemma}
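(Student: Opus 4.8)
The plan is to prove \Cref{lem:geometric-law} directly from the structure of the $2$-state Markov chain governing the operator sequence $(s_t)_{t \geq 0}$, as described in \Cref{fig:markovchain} and formalized via the function $h$ in \eqref{eq:def-function-h}. Recall that a phase is a maximal run of iterations using the same operator, $Z_k = T_s^{(k+1)} - T_s^{(k)}$ is its length, and by the initialization $s_0 = \OI$ together with the alternating nature of switching times, the operator used during the $k$-th phase is \OI when $k$ is even and \OW when $k$ is odd. So I only need to show that a phase using \OI has length distributed as $\Geom(p)$ and a phase using \OW has length distributed as $\Geom(q)$.

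First I would fix $k$ and condition on $s_{T_s^{(k)}} = \OI$ (the even case; the odd case with \OW and $q$ is symmetric). By \Cref{lem:markov-chain}, the sequence $(s_t)_{t}$ is itself a homogeneous Markov chain on $\{\OI, \OW\}$, and from \eqref{eq:def-function-h} we read off that from state \OI it stays in \OI with probability $1 - p$ and moves to \OW with probability $p$, independently of everything else, since the update of $s_{t+1}$ depends only on $s_t$ and the fresh i.i.d.\ uniform $z_t$. Hence, starting at time $T_s^{(k)}$ in state \OI, the event $\{Z_k > j\}$ is exactly the event that the $j$ consecutive fresh uniforms $z_{T_s^{(k)}}, \ldots, z_{T_s^{(k)} + j - 1}$ all fall in $[0, 1-p]$, which by independence has probability $(1-p)^j$. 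Therefore $\Pr[Z_k = j \mid s_{T_s^{(k)}} = \OI] = (1-p)^{j-1} p$ for $j \geq 1$, i.e.\ $Z_k$ restricted to this phase follows $\Geom(p)$. The same argument with $p$ replaced by $q$ gives $\Geom(q)$ on the \OW phases. Since the parity of $k$ deterministically fixes which operator is used (an easy induction on $k$ using $s_0 = \OI$ and the definition of $T_s^{(k)}$), we conclude $Z_k \sim U_k$ as stated.

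There is no real obstacle here; the lemma is essentially a restatement of the memorylessness of the two-state chain, and the only mild subtlety is the bookkeeping ensuring that (i) the operator in phase $k$ is determined by the parity of $k$, and (ii) the uniforms consumed after $T_s^{(k)}$ are genuinely fresh and independent of the history, which is exactly what the functional representation in \Cref{lem:markov-chain} provides. I would therefore keep the proof to a short paragraph invoking \Cref{lem:markov-chain} and \eqref{eq:def-function-h}, computing $\Pr[Z_k > j \mid s_{T_s^{(k)}} = \OI] = (1-p)^j$, and noting the symmetric computation for \OW.
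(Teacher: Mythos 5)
Your proof is correct and follows essentially the same route as the paper: both arguments reduce to the memorylessness of the two-state operator chain, with the parity of $k$ fixing the operator via $s_0 = \OI$. The paper computes the pmf by conditioning on $T_s^{(k)} = j$ and summing, whereas you compute the survival probability $\Pr[Z_k > j] = (1-p)^j$ via the i.i.d.\ uniforms in the functional representation of \Cref{lem:markov-chain}; this is only a cosmetic difference.
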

\begin{proof}[Proof of \Cref{lem:geometric-law}]
    If $k$ is even, since $s_0 = \OI$ we have $s_{T_s^{(k)}} = \OI$ and $s_{T_s^{(k+1)}} = \OW$. Fix $\ell \in \N^*$, we have 
    \begin{align}
        \Pr[Z_k = \ell] &= \Pr[T_s^{(k+1)} - T_s^{(k)} = \ell]\\
        &= \sum_{j \geq 0} \Pr[T_s^{(k+1)} - T_s^{(k)} = \ell \mid T_s^{(k)} = j] \Pr[T_s^{(k)} = j] \\
        &= \sum_{j \geq 0} \Pr[T_s^{(k+1)} = j + \ell \mid T_s^{(k)} = j] \Pr[T_s^{(k)} = j] \\
        & = \sum_{j \geq 0} \Pr[s_{j+ \ell} = \OW, s_{j+ \ell - 1} = \OI, \dots, s_{j+1} = \OI \mid T_s^{(k)} = j] \Pr[T_s^{(k)} = j]  \\
        &= \sum_{j \geq 0} (1- p)^{\ell - 1} p \Pr[T_s^{(k)} = j]  \\
        &= (1- p)^{\ell - 1} p \sum_{j \geq 0} \Pr[T_s^{(k)} = j] \\
        &= (1 - p)^{\ell - 1} p,
    \end{align}
    hence $Z_k$ follows the geometric law of parameter $p$ as claimed in the lemma.
    
    Similarly, if $k$ is odd we have $s_{T_s^{(k)}} = \OW$ and $s_{T_s^{(k+1)}} = \OI$. Fix $\ell \in \N^*$, we have 
    \begin{align}
        \Pr[Z_k = \ell] &= \Pr[T_s^{(k+1)} - T_s^{(k)} = \ell]\\
        &= \sum_{j \geq 0} \Pr[T_s^{(k+1)} - T_s^{(k)} = \ell \mid T_s^{(k)} = j] \Pr[T_s^{(k)} = j] \\
        &= \sum_{j \geq 0} \Pr[T_s^{(k+1)} = j + \ell \mid T_s^{(k)} = j] \Pr[T_s^{(k)} = j] \\
        & = \sum_{j \geq 0} \Pr[s_{j+ \ell} = \OI, s_{j+ \ell - 1} = \OW, \dots, s_{j+1} = \OW \mid T_s^{(k)} = j] \Pr[T_s^{(k)} = j]  \\
        &= \sum_{j \geq 0} (1- q)^{\ell - 1} q \Pr[T_s^{(k)} = j]  \\
        &= (1- q)^{\ell - 1} q \sum_{j \geq 0} \Pr[T_s^{(k)} = j] \\
        &= (1 - q)^{\ell - 1} q,
    \end{align}
    hence $Z_k$ follows the geometric law of parameter $q$ as claimed.
\end{proof}

The heart of our analysis is the following proof of \Cref{lem:oi-prob}.

\begin{proof}[Proof of \Cref{lem:oi-prob}]
    Fix $\ell \in \N^*$, we denote by $p_{k, \ell}^h$ the following probability
        \[ p_{k, \ell}^h = \Pr[y_{\ell} \leq h \mid y_0 = k, T_s^{(1)} = \ell, s_0 = \OI], \]
    which is the probability that at some time $0 \leq t \leq \ell$ we reach a bit-string $x'$ such that $\Hamming(x', x^*) \leq h$ after performing consecutively $\ell$ times the \OI acceptance operator. 
    Now, given $0 \leq k \leq n$ and $\ell \geq 1$, we have the boundary conditions $p_{k, \ell}^h = 1$ if $k \leq h$ and $p_{k, \ell}^h = 0$ if $h < k$ and $\ell < k - h$. Otherwise, when $h < k$, $\ell \geq k - h$ and $\ell \geq 2$, the following recurrence
        \begin{equation}\label{oi-prob:eq1}
            p_{k, \ell}^h = \left( \frac{n - k}{n} \right) p_{k, \ell - 1}^h + \left( \frac{k}{n} \right) p_{k - 1, \ell - 1}^h, 
        \end{equation} 
    holds and is derived using a first step analysis of the Markov chain $((x_t, s_t))_{t \in \N}$ and as $\ell \geq 2$ then at least $s_0 = \OI = s_1$. Notice that relation \eqnref{oi-prob:eq1} still holds in the cases $k \leq h$ (since also $k - 1 \leq h)$ but also when both $h < k$ and $\ell < k - h$ since then $\ell - 1 < k - h$ and $\ell - 1 < (k - 1) - h$.

    That being said, we can now split the probability $p_k^h$ where $0 \leq k \leq n$ is such that\footnote{Otherwise, if $0 \leq k \leq h$ then $p_k^h = 1$ as we will see.} $h < k$ as
        \begin{align*}
            p_k^h & = \Pr[y_{T_s^{(1)}} \leq h \mid y_0 = k, s_0 = \OI] \\ 
            & = \sum_{\ell = 1}^{\infty} \Pr[y_{\ell} \leq h \mid y_0 = k, T_s^{(1)} = \ell, s_0 = \OI] \Pr[T_s^{(1)} = \ell \mid y_0 = k, s_0 = \OI] \\
            & = \sum_{\ell = 1}^{\infty} p_{k, \ell}^h \Pr[Z_0 = \ell \mid s_0 = \OI] \numberthis\label{eq:prob-1} \\ 
            & = \sum_{\ell = 1}^{\infty} p_{k, \ell}^h (1 - p)^{\ell - 1} p \\
            & = p_{k, 1}^h p + \sum_{\ell = 2}^{\infty} \left( \left( \frac{n - k}{n} \right) p_{k, \ell - 1}^h + \left( \frac{k}{n} \right) p_{k - 1, \ell - 1}^h \right) (1 - p)^{\ell - 1} p \\ 
            & = p_{k, 1}^h p + \left( \frac{n - k}{n} \right) (1 - p) \sum_{\ell = 2}^{\infty} p_{k, \ell - 1}^h (1 - p)^{\ell - 2} p \\
            &\hphantom{= p_{k, 1}^h p\ }+ \left( \frac{k}{n} \right) (1 - p) \sum_{\ell = 2}^{\infty} p_{k - 1, \ell - 1}^h (1 - p)^{\ell - 2} p \\ 
            & = p_{k, 1}^h p + \left( \frac{n - k}{n} \right) (1 - p) p_k^h + \left( \frac{k}{n} \right) (1 - p) p_{k - 1}^h \numberthis\label{eq:prob-2},
        \end{align*}
    where, in \eqnref{eq:prob-1} we use \Cref{lem:independance} since $T_s^{(1)} = Z_0$ denotes the length of the first phase of \OI (as $s_0 = \OI$).

    Then to solve this recurrence, first, one need to find an expression for $p_{k, 1}^h$, which is
        \[ p_{k, 1}^h = \begin{cases} 0, & \text{if $h + 1 < k$;} \\ \frac{h + 1}{n}, & \text{if $k = h + 1$;} \\ 1, & \text{if $k \leq h$;} \end{cases} \]
    and
        \[ p_h^h = \sum_{\ell = 1}^{\infty} p_{h, \ell}^h (1 - p)^{\ell - 1} p = \sum_{\ell = 1}^{\infty} (1 - p)^{\ell - 1} p = 1. \]

    Next, we distinguish the cases $k = h + 1$ and $k > h + 1$. First of all, if $k = h + 1$ then \eqnref{eq:prob-2} becomes
        \begin{align*} p_{h + 1}^h & = p_{h + 1, 1}^h p + \left( \frac{n - (h + 1)}{n} \right) (1 - p) p_{h + 1}^h + \left( \frac{h + 1}{n} \right) (1 - p) p_h^h \\ 
        & = \frac{h + 1}{n} +  \left( \frac{n - (h + 1)}{n} \right) (1 - p) p_{h + 1}^h,
        \end{align*}
    hence,
        \begin{align*} 
            p_{h + 1}^h & = \frac{h + 1}{n} \frac{1}{1 - \left( \frac{n - h - 1}{n} \right) (1 - p)} \\ 
            & = \frac{h + 1}{n - (n - h - 1)(1 - p)} \\
            & = \frac{1}{1 - p} \frac{h + 1}{\frac{n}{1 - p} - n + h + 1} \\
            & = \frac{1}{1 - p} \frac{h + 1}{\frac {n p}{1 - p} + h + 1}.
        \end{align*}

    On the other hand, when $k > h + 1$ then $p_{k, 1}^h = 0$ and grouping together the two terms in $p_k^h$, \eqnref{eq:prob-2} now becomes
        \begin{align*}
            p_k^h & = \frac{1}{1 - \left( \frac{n - k}{n} \right) (1 - p)} \left( \frac{k}{n} \right) (1 - p) p_{k - 1}^h \\ 
            & = \frac{k}{n - (n - k)(1 - p)} (1-p) p_{k - 1}^h \\ 
            & = \frac{k}{\frac{n}{1 - p} - n + k} p_{k - 1}^h \\ 
            & = \frac{k}{\frac{n p}{1 - p} + k} p_{k - 1}^h.
        \end{align*}

    Combining both cases, this leads to the following formula for $p_k^h$ when $h < k \leq n$
        \begin{align*} 
            p_k^h & = \frac{1}{1 - p} \prod_{j = h + 1}^k \left( \frac{j}{\frac{n p}{1 - p} + j} \right) \\
            & = \frac{1}{1 - p} \frac{k!}{h!} \frac{1}{\left( \frac{n p}{1 - p} + k \right) \cdots \left( \frac{n p}{1 - p} + h + 1 \right)} \\ 
            & = \frac{1}{1 - p} \frac{\Gamma(k + 1) \Gamma\left( \frac{n p}{1 - p} + h + 1 \right)}{\Gamma(h + 1) \Gamma\left( \frac{n p}{1 - p} + k + 1 \right)},
        \end{align*}
    where $\Gamma$ is the Gamma function. Moreover, when $k \leq h$ then $p_k^h = 1$ which is intuitive since the \OI operator cannot climb down \onemax. This proves the \Cref{lem:oi-prob} as desired.
\end{proof}

\begin{proof}[Proof of \Cref{lem:limit}]
    By \Cref{lem:oi-prob}, for any integer $n > 0$,
        \[ p_n^0 = \frac{1}{1 - p} \frac{\Gamma(n + 1) \Gamma\left( \frac{n p}{1 - p} + 1 \right)}{\Gamma\left( \frac{n p}{1 - p} + n + 1 \right)}, \]
    and, as $p = \frac{1}{c n \log(n)} = \o_{n \to +\infty}\left( \frac{1}{n} \right)$ then $\frac{1}{1 - p} = 1 + o(1)$ and $\frac{n p}{1 - p} = o(1)$. Moreover, by the continuity of the Gamma function over $\R_+^*$ 
        \[ \Gamma\left( \frac{n p}{1 - p} + 1 \right) = \Gamma(1) + o(1) = 1 + o(1). \]
    
    To derive the claimed asymptotic, we will use the Stirling's approximation for the gamma function which can be found in \cite[5.11.7]{NIST:DLMF}. We have
    \begin{align*}
        p_n^0 & = \frac{1}{1 - p} \frac{\Gamma(n + 1) \Gamma\left( \frac{n p}{1 - p} + 1 \right)}{\Gamma\left( \frac{n p}{1 - p} + n + 1 \right)} \\ 
        & = \frac{n!}{\Gamma\left( \frac{n}{1 - p} + 1 \right)} (1 + o(1)) \\
        & = \frac{\left( \frac{n}{e} \right)^n \sqrt{2 \pi n}}{\left( \frac{n}{e (1 - p)} \right)^{\frac{n}{1 - p}} \sqrt{\frac{2 \pi n}{1 - p}}} (1 + o(1)),
    \end{align*}
    and the factor in front of the $(1 + o(1))$ becomes
        \begin{align*}
            \frac{\left( \frac{n}{e} \right)^n \sqrt{2 \pi n}}{\left( \frac{n}{e (1 - p)} \right)^{\frac{n}{1 - p}} \sqrt{\frac{2 \pi n}{1 - p}}} & = \left( \frac{n}{e} \right)^n \left( \frac{e (1 - p)}{n} \right)^{\frac{n}{1 - p}} \sqrt{1 - p} \\ 
            & = \exp\left( n \log(n) - n + \frac{n}{1 - p} \left( \log(1 - p) + 1 - \log(n) \right) \right) \sqrt{1 - p},
        \end{align*}
    \hypertarget{exp-factor}{}
    where $\sqrt{1 - p} = 1 + o(1)$ while, for the exponential factor
    \begin{align*}
        &\exp\left( n \log(n) - n + \frac{n}{1 - p} \left( \log(1 - p) + 1 - \log(n) \right) \right) \\ 
        & = \exp\left( n \log(n) - n + \frac{n}{1 - p} - n \log(n) \left( 1 + p + O(p^2) \right) + \frac{n \log(1 - p)}{1 - p} \right) \\ 
        & = \exp\left( \frac{n p}{1 - p} - n \log(n) p + O\left( \frac{1}{n \log(n)} \right) + \frac{n}{1 - p} \left( -p + o(p) \right) \right) \\ 
        & = \exp\left( -\frac{1}{c} + o(1) \right). 
    \end{align*}

    Thus, we conclude that
        \[ \Pr[y_{T_s^{(1)}} = 0 \mid y_0 = n, s_0 = \OI] = \left( 1 + \o_{n \to +\infty}(1) \right)e^{-1 / c}, \]
    as desired.
\end{proof}

\section{Drift Lemmas}

\begin{proof}[Proof of \Cref{lem:driftAM}]
    First, as the length of a phase is also random, we write for $\ell \in \N_{>0}$
    \[ \Delta^{\AM}_{i, \ell} = \esp\left[ y_0 - y_{T_s^{(1)}} \mid \, y_0 = i, Z_0 = \ell, s_0 = \AM \right], \]
    the drift after performing exactly $\ell$ times the \AM operator in a row when starting in $y_0 = i$. Then, by conditioning on the length of the phase \AM, we obtain
        \begin{align*}
            \Delta^{\AM}_i & = \sum_{\ell = 1}^{\infty}  \Delta^{\AM}_{i, \ell} \Pr\left[ Z_{2 k} = \ell \mid y_{T_s^{(2k)}} = i, s_{T_s^{(2k)}} = \AM\right] \\
            & = \sum_{\ell = 1}^{\infty}  \Delta^{\AM}_{i, \ell} \Pr\left[ Z_{2 k} = \ell \mid s_{T_s^{(2k)}} = \AM \right] \numberthis\label{lem:driftAM-eq1} \\
            & = \sum_{\ell = 1}^{\infty}  \Delta^{\AM}_{i, \ell} (1 - q)^{\ell - 1} q, \numberthis\label{conditioningAM}
        \end{align*}
    where in \eqnref{lem:driftAM-eq1} we use \Cref{lem:independance} stating that variables $Z_{2 k}$ and $y_{T_s^{(2k)}}$ are independent. Then, for $\ell > 0$ and $j \in [0.. \ell - 1]$ let
    \[\Delta^{\AM}_{i, j, \ell} = \esp\left[ y_{T_s^{(2k)} + j} - y_{T_s^{(2k)} + j + 1} \mid \ y_{T_s^{(2 k)}} = i, Z_{2 k} = \ell, s_{T_s^{(2 k)}} = \AM \right], \]
    hence
    \[\Delta^{\AM}_{i, \ell} = \sum_{j = 0}^{\ell - 1} \Delta^{\AM}_{i, j, \ell}, \]
    and we now show by finite induction on $j \in [0.. \ell - 1]$ that
    \begin{equation}\label{lem:driftAM-HR}
        \forall i \in [0.. n],\  \Delta^{\AM}_{i, j, \ell} = \left(\frac{n-2}{n}\right)^j \frac{2i-n}{n}.    
    \end{equation}

    First, for $j = 0$, we have
    \begin{align*}
        \Delta^{\AM}_{i, 0, \ell} &= \esp\left[ y_{T_s^{(2k)}} - y_{T_s^{(2k)} + 1} \mid \ y_{T_s^{(2 k)}} = i, Z_{2 k} = \ell, s_{T_s^{(2 k)}} = \AM \right]\\ 
        &= \frac{i}{n} (1) + \frac{n-i}{n} (-1)\\
        &= \frac{2i - n}{n}.
    \end{align*}

    Then, if we suppose the result \eqnref{lem:driftAM-HR} correct for some $j \in [0..\ell - 2]$, we obtain
    \begin{align*}
        \Delta^{\AM}_{i, j+1, \ell} &= \esp\left[ y_{T_s^{(2k)} + j + 1} - y_{T_s^{(2k)} + j + 2} \mid \ y_{T_s^{(2 k)}} = i, Z_{2 k} = \ell, s_{T_s^{(2 k)}} = \AM, y_{T_s^{(2 k)} + 1} = i + 1 \right] \\
& \quad \Pr\left[ y_{T_s^{(2 k)} + 1} = i + 1 \mid \ y_{T_s^{(2 k)}} = i, Z_{2 k} = \ell, s_{T_s^{(2 k)}} = \AM \right]\\
        &\quad + \esp\left[ y_{T_s^{(2k)} + j + 1} - y_{T_s^{(2k)} + j + 2} \mid \ y_{T_s^{(2 k)}} = i, Z_{2 k} = \ell, s_{T_s^{(2 k)}} = \AM, y_{T_s^{(2 k)} + 1} = i - 1 \right] \\
& \quad \Pr\left[ y_{T_s^{(2 k)} + 1} = i - 1 \mid \ y_{T_s^{(2 k)}} = i, Z_{2 k} = \ell, s_{T_s^{(2 k)}} = \AM \right] \\
        & = \left(\frac{n-i}{n} \right) \esp\left[ y_{T_s^{(2k)} + j} - y_{T_s^{(2k)} + j + 1} \mid \ Z_{2 k} = \ell, s_{T_s^{(2 k)}} = \AM, y_{T_s^{(2 k)} + 1} = i + 1 \right] \\
        &\quad + \left(\frac{i}{n} \right) \esp\left[ y_{T_s^{(2k)} + j} - y_{T_s^{(2k)} + j + 1} \mid \ Z_{2 k} = \ell, s_{T_s^{(2 k)}} = \AM, y_{T_s^{(2 k)} + 1} = i - 1 \right]  \numberthis\label{equality 3} \\
        & =\left(\frac{n-i}{n} \right) 
        \Delta^{\AM}_{i+1, j, \ell} + \left(\frac{i}{n} \right) \Delta^{\AM}_{i-1, j, \ell} \\
        &= \left( \frac{n-i}{n} \right) \left(\frac{n-2}{n}\right)^j \frac{2(i+1)-n}{n} + \left(\frac{i}{n} \right) \left(\frac{n-2}{n}\right)^j \frac{2(i-1)-n}{n} \numberthis\label{equality 4}\\
        &= \left(\frac{n-2}{n}\right)^{j+1} \frac{2i-n}{n},
    \end{align*}
    where in \eqnref{equality 3}, we the first step analysis on a phase of \AM. In \eqnref{equality 4}, we use the hypothesis of induction from \eqnref{lem:driftAM-HR}.

    Recall that $\Delta^{\AM}_{i, \ell} = \sum\limits_{j = 0}^{\ell - 1} \Delta^{\AM}_{i, j, \ell}$ and thus we obtain using \eqnref{lem:driftAM-HR}
    \begin{align*}
        \Delta^{\AM}_{i, \ell} &= \sum_{j = 0}^{\ell - 1} \left(\frac{n-2}{n}\right)^{j} \frac{2i-n}{n}\\
        &= \frac{1 - \left(\frac{n-2}{n}\right)^{\ell}}{1 - \frac{n-2}{n}} \frac{2i - n}{n} \\
        &= \left( 1 - \left(\frac{n-2}{n}\right)^{\ell} \right) \frac{2i - n}{2}.
    \end{align*}

    Now, we will use \eqnref{conditioningAM} to compute the drift on a phase of \AM as follows
    \begin{align*}
        \Delta^{\AM}_{i} &=  \sum_{\ell = 1}^{\infty}  \Delta^{\AM}_{i, \ell} (1 - q)^{\ell - 1} q \\
        &= \sum_{\ell = 1}^{\infty} \left( 1 - \left(\frac{n-2}{n}\right)^{\ell} \right) \frac{2i - n}{2} (1 - q)^{\ell - 1} q \\
        &= \frac{2i - n}{2} q \sum_{\ell = 1}^{\infty} \left( (1 - q)^{\ell - 1} - \frac{n-2}{n} \left( \frac{n-2}{n} (1 - q) \right)^{\ell - 1} \right) \\
        &= \frac{2i - n}{2} q  \left( \frac{1}{q} - \frac{n-2}{n} \frac{1}{1 - \frac{n-2}{n} (1-q)} \right)\\
        &= \frac{2 i - n}{2 + q (n-2)},
    \end{align*}
    as expected.
\end{proof}

\begin{proof}[Proof of \Cref{lem:driftOI}]
    Let $i \in [0..n]$, notice that the drift $\Delta^{\OI}_i$ can be written as
        \[ \Delta^{\OI}_i = i - \esp\left[ y_{T_s^{(1)}} \mid \, y_0 = i, s_0 = \OI \right], \]
    hence, we only need to compute the average number of zero bits $y_{T_s^{(1)}}$ after one phase of \OI. To ease the notation, let's define
        \[ \mathcal{Y}^{\OI}_i = \esp\left[ y_{T_s^{(1)}} \mid \, y_0 = i, s_0 = \OI \right], \]
    and 
        \[ \mathcal{Y}^{\OI}_{i, \ell} = \esp\left[ y_{\ell} \mid \, y_0 = i, Z_0 = \ell, s_0 = \OI \right] \]
    where $\ell \in \N^*$ denote the length of the phase of \OI. As for the drift $\Delta^{\AM}_i$ in \Cref{lem:driftAM}, the following decomposition according to the length $Z_0$ of the phase of \OI holds
        \begin{align*}
             \mathcal{Y}^{\OI}_i & = \sum_{\ell = 1}^{\infty}  \mathcal{Y}^{\OI}_{i, \ell} \Pr\left[ Z_0 = \ell \mid y_0 = i, s_0 = \OI \right] \\
            & = \sum_{\ell = 1}^{\infty} \mathcal{Y}^{\OI}_{i, \ell} \Pr\left[ Z_0 = \ell \mid s_0 = \OI \right] \numberthis\label{lem:driftOI-eq1} \\
            & = \sum_{\ell = 1}^{\infty} \mathcal{Y}^{\OI}_{i, \ell} (1 - p)^{\ell - 1} p,
        \end{align*}
    where in \eqnref{lem:driftOI-eq1} we use \Cref{lem:independance} stating the independence of $y_0$ and $Z_0$. Now, it remains to find a closed form for all the $\left( \mathcal{Y}^{\OI}_{i, \ell} \right)_{\ell \geq 1}$.

    First, when $\ell = 1$, we have, for any $0 \leq i \leq n$
        \begin{align*}
            \mathcal{Y}^{\OI}_{i, 1} & = \esp\left[ y_1 \mid \, y_0 = i, Z_0 = 1, s_0 = \OI \right] \\ 
            & = i \left(\frac{n - i}{n} \right) + (i - 1) \frac{i}{n} \\ 
            & = \frac{i}{n} \left( (n - i) + (i - 1) \right) \\ 
            & = i \left( 1 - \frac{1}{n} \right).
        \end{align*}

    We will now show by induction on $\ell \geq 1$ that for all $i \in [0.. n]$ we have 
        \[ \mathcal{Y}^{\OI}_{i, \ell} = i \left( 1 - \frac{1}{n} \right)^{\ell}. \numberthis\label{lem:driftOI-HR} \]

    We already initialized the above property so let's assume \eqnref{lem:driftOI-HR} holds for some $\ell \geq 1$. Then, either $i = 0$ and as the \OI operator cannot climb down \onemax
        \[ \mathcal{Y}^{\OI}_{0, \ell + 1} = \esp\left[ y_{\ell} \mid \, y_0 = 0, Z_0 = \ell, s_0 = \OI \right] = 0, \]
    hence \eqnref{lem:driftOI-HR} holds. Otherwise when $i \in [1.. n]$, by conditioning on the value of $y_1$, i.e., with a first step analysis on the phase of \OI, we obtain
        \begin{align*}
            \mathcal{Y}^{\OI}_{i, \ell + 1} & = \esp\left[ y_{\ell + 1} \mid \, y_0 = i, Z_0 = \ell + 1 + 1, s_0 = \OI \right] \\ 
            & = \Pr[y_1 = i \, \mid \, y_0 = i, Z_0 = \ell + 1, s_0 = \OI] \\
            &\qquad\qquad \times \esp\left[ y_{\ell + 1} \mid \, y_0 = i, y_1 = i, Z_0 = \ell + 1, s_0 = \OI \right] \\ 
            &\quad + \Pr[y_1 = i - 1 \, \mid \, y_0 = i, Z_0 = \ell + 1, s_0 = \OI] \\ 
            &\qquad\qquad \times \esp\left[ y_{\ell + 1} \mid \, y_0 = i, y_1 = i- 1, Z_0 = \ell + 1, s_0 = \OI \right] \\ 
            & = \frac{n - i}{n} \esp\left[ y_{\ell + 1} \mid \, y_1 = i, Z_0 = \ell + 1, s_0 = \OI \right] \numberthis\label{lem:driftOI-HR-eq1} \\
            & \quad + \frac{i}{n} \esp\left[ y_{\ell + 1} \mid \, y_1 = i - 1, Z_0 = \ell + 1, s_0 = \OI \right] \\ 
            & = \mathcal{Y}^{\OI}_{i, \ell} \left(  \frac{n - i}{n} \right) + \mathcal{Y}^{\OI}_{i - 1, \ell} \left( \frac{i}{n} \right) \\ 
            & = i \left( \frac{n - 1}{n} \right) \left( 1 - \frac{1}{n} \right)^{\ell} + (i - 1) \left( \frac{i}{n} \right) \left( 1 - \frac{1}{n} \right)^{\ell} \numberthis\label{lem:driftOI-HR-eq2} \\ 
            & = \frac{i}{n} \left[ (n - i) + (i - 1) \right] \left( 1 - \frac{1}{n} \right)^{\ell} \\ 
            & = i \left( 1 - \frac{1}{n} \right)^{\ell + 1},
        \end{align*}
    as desired. Thus, the formula \eqnref{lem:driftOI-HR} holds true for all $\ell \geq 1$ and $0 \leq i \leq n$. Above, in \eqnref{lem:driftOI-HR-eq1} we use the first step analysis on a phase of \OI and based on the event $\{ y_1 = i, Z_0 = \ell + 1, s_0 = \OI \}$ the expectation in \eqnref{lem:driftOI-HR-eq1} simplifies to
        \[ \esp\left[ y_{\ell + 1} \mid \, y_1 = i, Z_0 = \ell + 1, s_0 = \OI \right] = \esp\left[ y_{\ell} \mid \, y_0 = i, Z_0 = \ell, s_0 = \OI \right], \]
   since, as $Z_0 = \ell + 1 > 1$, we have $s_1 = \OI$ and we can forget the past, i.e., the event $\{ (y_0, s_0) = (i, \OI) \}$. In \eqnref{lem:driftOI-HR-eq2}, we use the induction hypothesis from \eqnref{lem:driftOI-HR}.

   Finally, it remains to plug the formula of $\mathcal{Y}^{\OI}_{i, \ell}$ in the sum \eqnref{lem:driftOI-eq1}, hence
    \begin{align*}
        \mathcal{Y}^{\OI}_i & = \sum_{\ell = 1}^{\infty} \mathcal{Y}^{\OI}_{i, \ell} (1 - p)^{\ell - 1} p \\ 
        & = \sum_{\ell = 1}^{\infty} \mathcal{Y}^{\OI}_{i, \ell} \left( i \left( 1 - \frac{1}{n} \right)^{\ell} (1 - p)^{\ell - 1} p \right) \\ 
        & = i p \left(1 - \frac{1}{n} \right) \sum_{\ell = 1}^{\infty} \left( \left(1 - \frac{1}{n} \right)^{\ell - 1} (1 - p)^{\ell - 1} \right) \\ 
        & = i p \left(1 - \frac{1}{n} \right) \sum_{\ell = 1}^{\infty} \left( \left(1 - \frac{1}{n} \right) (1 - p) \right)^{\ell - 1} \numberthis\label{lem:driftOI-final-eq1} \\ 
        & = i p \left(1 - \frac{1}{n} \right) \frac{1}{1 - (1 - p) \frac{n - 1}{n}} \\ 
        & = \frac{i p (n - 1)}{n - (1 - p) (n - 1)} \\ 
        & = \frac{i p (n - 1)}{1 + p (n - 1)},
    \end{align*}
where in \eqnref{lem:driftOI-final-eq1} we recognize a geometric series.

    Thus, the drift $\Delta^{\OI}_i$ is
        \[ \Delta^{\OI}_i = i - \mathcal{Y}^{\OI}_i = \frac{i}{1 + p (n - 1)} \]
    as expected.
\end{proof}

\begin{proof}[Proof of \Cref{lem:driftAM-OI}]
    Let $\Delta_i^{\AM-\OI} = \esp[y_0 - y_{T_s^{(2)}} \mid y_0 = i, s_0 = \AM]$, then
        \begin{align*}
            \Delta_i^{\AM-\OI} & = \esp[y_0 - y_{T_s^{(2)}} \mid y_0 = i, s_0 = \AM] \\ 
            & = \esp[y_0 - y_{T_s^{(1)}} \mid y_0 = i, s_0 = \AM] \\ & \qquad +  \esp[y_{T_s^{(1)}} - y_{T_s^{(2)}} \mid y_0 = i, s_0 = \AM] \\ 
            & = \Delta^{\AM}_i + \esp[y_{T_s^{(1)}} - y_{T_s^{(2)}} \mid y_0 = i, s_0 = \AM],
        \end{align*}
    where we split the initial drift between $y_0$ and $y_{T_s^{(2)}}$ in two drifts, one over the phase of \AM and the other over the phase of \OI (but the conditioning is different from the one in $\Delta^{\OI}_i$). Now, let $\esp_i[\cdot] = \esp[\cdot \mid y_0 = i, s_0 = \AM]$ and by the tower property of the expectation
        \begin{align*}
            \esp[y_{T_s^{(1)}} - y_{T_s^{(2)}} \mid & \, y_0 = i, s_0 = \AM] \\
            & = \esp_i[\esp_i[y_{T_s^{(1)}} - y_{T_s^{(2)}} \mid y_{T_s^{(1)}}]],
        \end{align*}
    where the inner expectation is 
        \begin{align*}
            \esp_i[& \, y_{T_s^{(1)}} - y_{T_s^{(2)}} \mid  y_{T_s^{(1)}}] \\ &=\esp[y_{T_s^{(1)}} - y_{T_s^{(2)}} \mid y_{T_s^{(1)}}, y_0 = i, s_0 = \AM, s_{T_s^{(1)}} = \OI] \\
            & = \esp[y_{T_s^{(1)}} - y_{T_s^{(2)}} \mid y_{T_s^{(1)}}, s_{T_s^{(1)}} = \OI] \\
            & = \Delta^{\OI}_{y_{T_s^{(1)}}}, 
        \end{align*}
    where we use \Cref{lem:independance} since $\onemax \in \seqopt_0$ to forget about the past event $\{y_0 = i, s_0 = \AM\}$ of the Markov chain $((y_t, s_t))_{t \in \N}$. Finally, using the fact that $\Delta^{\OI}_{i}$ is affine in $i$, the function $i \mapsto \Delta^{\OI}_i$ then commutes with the expectation thus
        \[ \esp_i[\Delta^{\OI}_{y_{T_s^{(1)}}}] = \Delta^{\OI}_{\esp_i[y_{T_s^{(1)}}]} = \Delta^{\OI}_{\left( i - \Delta^{\AM}_i \right)}, \]
    since 
        \[ \esp_i[y_{T_s^{(1)}}] = i - \esp[y_0 - y_{T_s^{(1)}} \mid y_0 = i, s_0 = \AM] = i - \Delta^{\AM}_i. \]
\end{proof}

\begin{proof}[Proof of \Cref{lem:avg-phases-AM}]
    Given the initial condition $(x_0, s_0) = (x, \AM)$ where $x \in \layer_m$ (a local maximum), we are interested in the probability $q^*$ that the trajectory $(x_0, \ldots, x_{T_s^{(1)}})$ of \AM reaches $x^*$. To lower bound $q^*$, we will only consider the trajectories that go straight right to layer $\layer_1$ and then to the global maximum in the next iteration (using \AM or \OI), i.e., trajectories for which $T_s^{(1)} \geq m - 1$ and $y_0 = m$, $y_1 = m - 1$, $\ldots$, $y_m = 0$. Hence, denoting $Y_m$ the event $\{y_1 = m - 1, \ldots, y_m = 0\}$.
        \begin{align*}
            q^* & \geq \Pr[T_s^{(1)} \geq m - 2, Y_m \mid y_0 = m, s_0 = \AM] \\ 
            & = \Pr[T_s^{(1)} \geq m - 2 \mid y_0 = m, s_0 = \AM] \numberthis\label{lem:avg-phases-AM-eq1} \\ 
            &\qquad \times \Pr[Y_m \mid y_0 = m, s_0 = \AM, T_s^{(1)} \geq m - 1] \\ 
            & = (1 - q)^{m - 2} \Pr[Y_m \mid y_0 = m, s_0 = \AM, T_s^{(1)} \geq m - 1] \\ 
            & = (1 - q)^{m - 2} \frac{m}{n} \frac{m - 1}{n} \cdots \frac{1}{n} \\ 
            & = (1 - q)^{m - 2} \frac{m!}{n^m},
        \end{align*}
    where, in \eqnref{lem:avg-phases-AM-eq1} we use the independence of $Z_0 = T_s^{(1)}$ and $y_0$ as stated in \Cref{lem:independance} which leads to
        \begin{align*}
            \Pr[T_s^{(1)} \geq m - 1 \mid & \, y_0 = m, s_0 = \AM]  \\ & =  \Pr[Z_0 \geq m - 1 \mid s_0 = \AM] \\ 
            & = \sum_{\ell = m - 1}^{\infty} (1 - q)^{\ell - 1} q \\ 
            & = (1 - q)^{m - 2}.
        \end{align*} 
\end{proof}

\begin{proof}[Proof of \Cref{lem:drift-gap-region}]
   Given any bit-string $x \in \{0,1\}^n$ such that $n - m < \| x \|_1 < n$, we have
   \begin{align*}
       \esp[d(x_t) - d(x_{t + 1}) & \mid x_t = x, s_t = \OI] \\
       & = \begin{cases}
            \frac{\| x \|_1}{n}, & \text{if $\| x \|_1 \neq n - 1$;} \\ 
            \frac{m - 1}{n} + \frac{n - 1}{n}, & \text{if $\| x \|_1 = n - 1$;}
        \end{cases}
   \end{align*}
    thus $\esp[d(x_t) - d(x_{t + 1}) \mid x_t = x, s_t = \OI] \geq \frac{\| x \|_1}{n}$ while, for the \AM operator
    \begin{align*}
        \esp[d(x_t) - d(x_{t + 1}) & \mid x_t = x, s_t = \AM] \\
        &= \begin{cases}
            \frac{2 \| x \|_1 - n}{n}, & \text{if $\| x \|_1 \neq n - 1$;} \\ 
            \frac{m - 1}{n} + \frac{n - 1}{n}, & \text{if $\| x \|_1 = n - 1$;}
        \end{cases}
    \end{align*}
    hence, as $\| x \|_1 < n$, this gives $\| x \|_1 \geq 2 \| x \|_1 - n $ and combining both drift for \OI and \AM we obtain the overall lower bound
    \begin{align*}
         \esp[d(x_t) - d(x_{t + 1}) \mid x_t = x] & \geq \frac{2 \| x \|_1 - n}{n} \\
         & = \frac{2 d(x) + n - 2 m}{n} \\
         & \geq \frac{2 d(x)}{n},
    \end{align*}
    since, in the region $n - m < \| x \|_1 < n$, we have $d(x) = \| x \|_1 - (n - m)$. 
\end{proof}

\begin{proof}[Proof of \Cref{lem:drift-left-slope}]
   First, as both \onemax and $\jump_m$ have a similar increasing slope (in the $1$-norm) on all bit-strings $x$ such that $\| x \|_1 \leq n - m$ then, the time $T_0 = \inf\{ t \geq 0 \mid \| x_t \|_1 = n - m\}$ (as defined on $\jump_m$) is the same as the time taken by the \MMAHH on \onemax starting initially with $(x_0, s)$ (recall that $s \in \{\OI, \AM\}$). Hence, it is enough to work on \onemax only, with the same initial conditions. Moreover, we have $T_0 \leq T_0^*$ where $T_0^*$ is the first time where the number of one-bits becomes greater or equal to $n - m$ at the end of a pair of phases \AM\!+\OI, i.e., 
        \[ T_0^* = \sum_{k = 0}^{k^* - 1}(Z_{2k} + Z_{2k+1}), \]
    where $k^* = \inf\{k \in \N \mid y_{T_s^{(2k)}} \leq m \}$. Note that, if $s = \OI$ then, up to an additional term of $\frac{1}{p}$ (we wait until the phase of \OI ends and either we already reach a local maximum -- in which case we stop -- or we continue with \AM), we can assume we start in $s = \AM$.
    
    Now, in order to have an upper bound on $\esp\left[k^*\right]$, we will use the additive drift theorem with overshooting (see \Cref{thm:add-drift-overshooting}), with the sequence $\left( y_{T_s^{(2 k)}} \right)_{0 \leq k \leq k^*}$ and the potential $z_t = y_t - m = (n - m) - \| x_t \|_1 \geq -m$ for which $\left( y_{T_s^{(2 k)}} \right)_{0 \leq k \leq k^*}$ is thus lower bounded. Moreover, with this potential, we have $k^* = \inf\{k \in \N, z_{T_s^{(2k)}} \leq 0 \}$ and, for any $0 \leq k < k^*$ and any $0 < i \leq n - m$, as we start with $s_0 = \AM$ then $s_{T_s^{(2 k)}} = \AM$ and using \Cref{lem:driftAM-OI} we have
    \begin{align*}
        \esp[ & z_{T_s^{(2k)}} - z_{T_s^{(2k + 2)}} \mid z_{T_s^{(2k)}} = i ] \\
        & = \Delta^{\AM-\OI}_{m + i} \\ 
        & =  \esp[ y_{T_s^{(2k)}} - y_{T_s^{(2k + 2)}} \mid y_{T_s^{(2k)}} = i + m ] \\
        & = \frac{(i+m)(2 + q(n-2) + 2 p (n-1)) - np(n-1)}{(1 + p(n-1))(2 + q(n-2))} \\
        & = \frac{i(2 + q(n-2) + 2 p (n-1))}{(1 + p(n-1))(2 + q(n-2))} \\
        &\quad + \frac{m(2 + 2 p (n-1) + q(n-2)) - np(n-1)}{(1 + p(n-1))(2 + q(n-2))}, \numberthis\label{lem:drift-left-slope-eq1}
    \end{align*}
    and $m(2 + 2 p (n-1) + q(n-2)) - np(n-1)$ from \eqnref{lem:drift-left-slope-eq1} is non-negative if, and only if, 
        \[ 2 m + m q (n - 2) \geq p (n - 1) (n - 2 m), \]
    i.e., either if $n \geq m \geq \frac{n}{2}$ or when $\frac{n}{2} > m \geq 0$ and
        \[ \frac{1}{n - 2 m}\left( \frac{m (n - 2)}{n - 1} q + \frac{2 m}{n - 1} \right) \geq p. \]
    Now, given $n \geq 3$ then $\frac{n - 2}{n - 1} \geq \frac{1}{2}$ and moreover, $\frac{2}{n - 1} \geq \frac{2}{n}$. Hence, for \eqnref{lem:drift-left-slope-eq1} to be non-negative, it suffices to have $n \geq 3$ and either $n \geq m \geq \frac{n}{2}$ or
        \[ \frac{n}{2} > m \geq 0 \, \text{ and } \, \frac{m}{2 (n - 2 m)}\left( q + \frac{4}{n} \right) \geq p. \]

    Now, we obtain
        \begin{align*}
            \Delta^{\AM-\OI}_{m + i} & \geq \frac{i(2 + q(n-2) + 2 p (n-1))}{(1 + p(n-1))(2 + q(n-2))} \\ 
            & \geq \frac{i(2 + q(n-2))}{(1 + p(n-1))(2 + q(n-2))} \\ 
            & =  \frac{i}{1 + p(n-1)} \\ 
            & \geq \frac{i}{1 + p n},
        \end{align*}
    and we can now apply \Cref{thm:add-drift-overshooting} using the fact that the overshooting is at most $\esp[ z_{T_s^{(2k^*)}} ] \geq - m$ since $y_{T_s^{(2k^*)}} \geq 0$ hence
        \[\esp[k^*] \leq (n - \| x_0 \|_1) (1 + pn), \]
    because the potential at the starting point is $z_0 = y_0 - m = n - \| x_0 \|_1 - m$ with an overshooting of $m$ hence, giving the factor $n - \| x_0 \|_1$.

    Finally, using Wald's theorem (\Cref{thm:wald}) we obtain the estimate
        \begin{align*}
            \esp\left[ T_0 \right] & \leq \frac{1}{p} + \esp[T_0^*] \\ 
            & =  \frac{1}{p} + \esp\left[ \sum_{k = 0}^{k^* - 1}(Z_{2k} + Z_{2k+1}) \right] \\ 
            & \leq  \frac{1}{p} + \esp[k^*] \left( \frac{1}{p} + \frac{1}{q} \right) \\ 
            & = \O\left( (n - \| x_0 \|_1)  (1 + pn) \left(\frac{1}{p} + \frac{1}{q} \right) \right).
        \end{align*}
       
    For the particular case $\| x_0 \|_1 = n - m - 1$, we have
    \begin{align*}
        \esp\left[ T_0 \right] & \leq \frac{1}{p} + (n - \| x_0 \|_1)  (1 + pn) \left(\frac{1}{p} + \frac{1}{q} \right) \\ 
        & = \frac{1}{p} + (m + 1) (1 + pn) \left(\frac{1}{p} + \frac{1}{q} \right) \\ 
        & = \O\left( m (1 + pn) \left(\frac{1}{p} + \frac{1}{q} \right) \right),
    \end{align*}
    as desired.
\end{proof}

\begin{proof}[Proof of \Cref{lem:avg-time-to-climb-jump}]
    As the \MMAHH relies on the \textsc{RandomOneBitFlip} mutation operator, it is enough for upper bounding $\esp[T_1]$ to distinguish between $\| x_0 \|_1 < n - m$ and $n - m < \| x_0 \|_1 < n$ then, use the drift computed in \Cref{lem:drift-gap-region} and \Cref{lem:drift-left-slope} and apply separately one of the drift theorems from section \ref{subsec:math-tools}.

    First, if $n - m < \| x_0 \|_1 < n$ then, for any time $0 \leq t < T_1$, we still have $n - m < \| x_t \|_1 < n$ and by \Cref{lem:drift-gap-region} we can apply the multiplicative drift with $\delta = \frac{2}{n}$ in the gap region (at the scale of the bit-string) hence for any $x\in \{0, 1\}^n$ such that $n - m < \| x \|_1 < n$ we have
    \begin{align*}
        \esp[T_1 \mid x_0 = x] & \leq \frac{1 + \log(d(x))}{\delta} \\ 
        & \leq \frac{1 + \log(m)}{\delta} \\
        &= O(n \log(m)).
    \end{align*}

    Now, consider the case where $0 \leq \| x_0 \|_1 < n - m$, as of before, whatever the time $0 \leq t < T_1$ we have $0 \leq \| x_t \|_1 < n - m$. Moreover, as $s_0 = \OI$, by simply waiting until the phase of \OI ends (and either we already reached a local maximum -- in which case we stop -- or we continue with now \AM), we can assume we start in \AM. Now by \Cref{lem:drift-left-slope} we obtain, for any $x\in \{0, 1\}^n$ such that $n - m < \| x \|_1 < n$,
        \begin{align*}
            \esp[T_1 \mid x_0 = x] & \leq \frac{1}{p} + (n - \| x \|_1)  (1 + pn) \left(\frac{1}{p} + \frac{1}{q} \right) \\ 
            & = \O\left( (n - \| x \|_1)  (1 + pn) \left(\frac{1}{p} + \frac{1}{q} \right) \right) \\ 
            & = \O\left( n (1 + pn) \left(\frac{1}{p} + \frac{1}{q} \right) \right).
        \end{align*}

    Of course, if $x_0 \in X^*$ then $T_1 = 0$. Hence, combining all these three cases leads to 
        \[ \esp[T_1] = \O\left( n (1 + p n) \left( \frac{1}{p} + \frac{1}{q} \right) \right), \]
    since $\log(m) = O(n) = \O\left( (1 + p n) \left( \frac{1}{p} + \frac{1}{q} \right) \right)$ and we are done.
\end{proof}

\end{document}